\bfseries\color{NavyBlue}, 
\itshape\color{PineGreen!90!black}, 
\bfseries\color{PineGreen!90!black}, 
\newcommand{\ie}{\emph{i.e., }}
\newcommand{\eg}{\emph{e.g., }}
\newcommand{\st}{\emph{s.t. }}
\newcommand{\wrt}{\emph{w.r.t. }}
\newcommand{\ms}[2]{#1\footnotesize{$\pm$#2}}
\newcommand{\bms}[2]{\textbf{#1}\footnotesize{$\pm$#2}}
\newcommand{\ums}[2]{\underline{#1}\footnotesize{$\pm$#2}}
\theoremstyle{plain}
\newtheorem{theorem}{Theorem}[section]
\newtheorem{lemma}[theorem]{Lemma}
\newtheorem{corollary}[theorem]{Corollary}
\theoremstyle{definition}
\theoremstyle{remark}
\begin{document}

\twocolumn[
\icmltitle{On the Maximal Local Disparity of Fairness-Aware Classifiers}

\begin{icmlauthorlist}
\icmlauthor{Jinqiu Jin}{ustc}
\icmlauthor{Haoxuan Li}{pku}
\icmlauthor{Fuli Feng}{ustc}
\end{icmlauthorlist}
\icmlaffiliation{ustc}{University of Science and Technology of China}
\icmlaffiliation{pku}{Peking University}
\icmlcorrespondingauthor{Haoxuan Li}{hxli@stu.pku.edu.cn}

\icmlkeywords{Machine Learning, ICML}

\vskip 0.3in
]
\printAffiliationsAndNotice{}  

\begin{abstract}

Fairness has become a crucial aspect in the development of trustworthy machine learning algorithms. Current fairness metrics to measure the violation of demographic parity have the following drawbacks: (i) the \emph{average difference} of model predictions on two groups cannot reflect their \emph{distribution disparity}, and (ii) the \emph{overall} calculation along all possible predictions conceals the \emph{extreme local disparity} at or around certain predictions. In this work, we propose a novel fairness metric called \textbf{M}aximal \textbf{C}umulative ratio \textbf{D}isparity along varying \textbf{P}redictions' neighborhood (MCDP), for measuring the maximal local disparity of the fairness-aware classifiers. To accurately and efficiently calculate the MCDP, we develop a provably exact and an approximate calculation algorithm that greatly reduces the computational complexity with low estimation error. We further propose a bi-level optimization algorithm using a differentiable approximation of the MCDP for improving the algorithmic fairness. Extensive experiments on both tabular and image datasets validate that our fair training algorithm can achieve superior fairness-accuracy trade-offs.

\end{abstract}

\section{Introduction}

\begin{figure*}[t]
    \centering
    \subcaptionbox{Small $\Delta\mathrm{DP}$, but large $\mathrm{ABCC}$\label{fig:1a}}{
        \vspace{-5pt}
        \centering
        \includegraphics[width=0.253\linewidth]{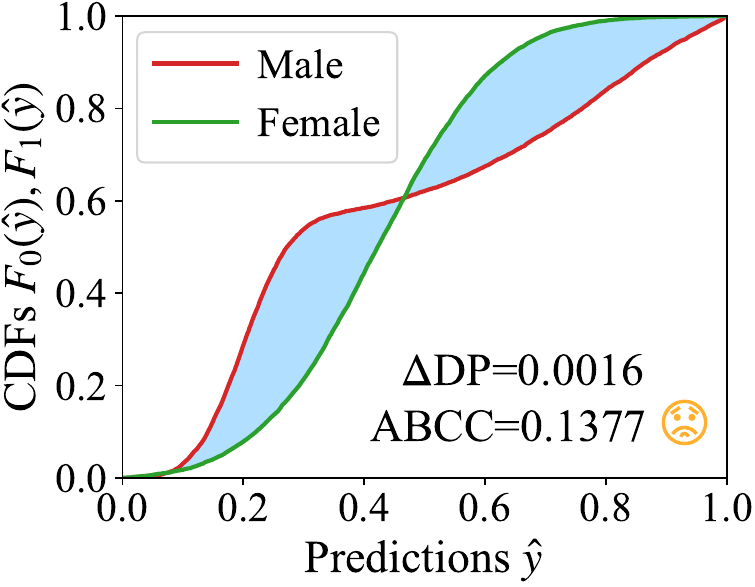}}\qquad 
    \subcaptionbox{Large maximal local disparity\label{fig:1b}}{
        \vspace{-5pt}
        \includegraphics[width=0.253\linewidth]{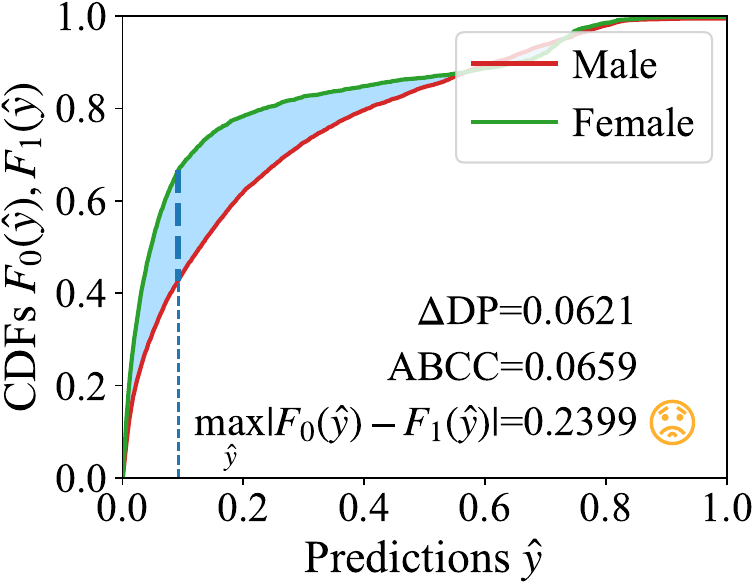}}\qquad 
    \subcaptionbox{Small maximal local disparity\label{fig:1c}}{
        \vspace{-5pt}
        \includegraphics[width=0.253\linewidth]{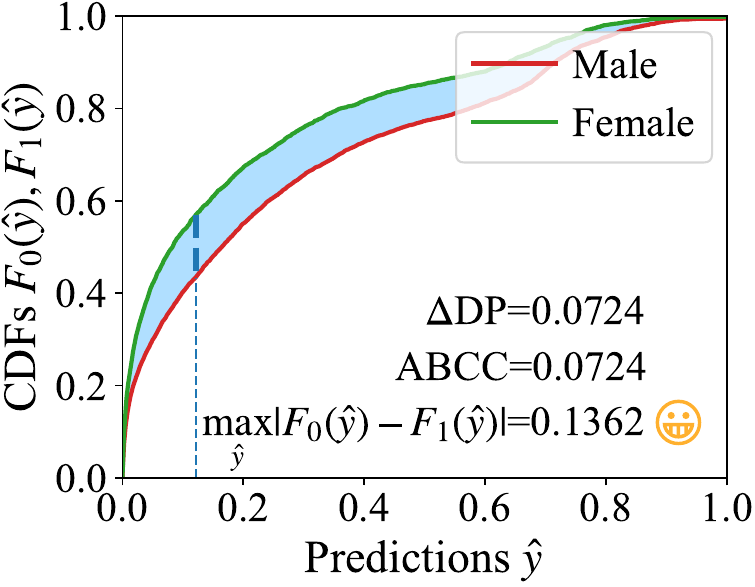}}\ 
    \vspace{-8pt}
    \caption{The empirical distribution functions of model predictions over male and female groups. (a) shows a toy example, while (b) and (c) are the testing results of FairMixup \cite{chuang2021fair} and our proposed algorithm on the Adult dataset, respectively.} %
    \label{fig:1}
    \vspace{-10pt}
\end{figure*}

Nowadays, machine learning algorithms have been widely-used in high-stake applications such as loan management \cite{mukerjee2002multi}, job-hiring \cite{faliagka2012application}, and recidivism prediction \cite{berk2021fairness}. Nonetheless, these algorithms are prone to exhibit discrimination against particular groups, leading to unfair decision-making results \cite{tolan2019machine,raghavan2020mitigating,mehrabi2021survey}.  To address this issue, growing attentions have been paid to developing comprehensive fairness criterions \cite{jacobs2021measurement,han2023retiring} and effective fair learning algorithms \cite{wan2023processing,han2023ffb}.

Existing group fairness notions require algorithms to treat different groups equally, and the degree of fairness violation is usually measured via the dissimilarity of model predictions. For example, \textbf{D}emographic \textbf{P}arity (DP) requires model predictions to be independent of sensitive attributes \cite{dwork2012fairness,kamishima2012fairness,jiang2020wasserstein}. To measure the violation of DP, most of existing works adopt $\Delta\mathrm{DP}$ metric, which calculates the \textbf{difference in average} predictions between the two demographic groups \cite{zemel2013learning,chuang2021fair,li2023fairer}. However, since having the same values in average predictions between the two groups cannot ensure that the distributions are also the same, we argue that the widely used $\Delta\mathrm{DP}$ may fail to detect the violation of demographic parity. Figure \ref{fig:1a} illustrates a toy example where the red and green curves represent the \textbf{C}umulative \textbf{D}istribution \textbf{F}unction (CDF) for the male and female groups, respectively. Despite the small $\Delta\mathrm{DP}$, it is clear that the prediction distribution is not independent of gender as a sensitive attribute, with males more likely to be assigned extremely small and large predictive probabilities, while females are more concentrated in the middle-sized probabilities. To show the limitation of $\Delta\mathrm{DP}$, we further calculate the \textbf{A}rea \textbf{B}etween \textbf{C}DF \textbf{C}urves ($\mathrm{ABCC}$) to measure the total variation between CDFs~\cite{han2023retiring}, which demonstrates significantly greater differences between groups.

Moreover, as most of the existing metrics measure the \textbf{overall} disparity along all possible predictions, they fail to capture the \textbf{local} disparity at or around certain predictions. As a matter of fact, different metric values do not imply the relative magnitude of extreme local disparities. 
For example, although both $\Delta\mathrm{DP}$ and $\mathrm{ABCC}$ values of predictions in Figure \ref{fig:1b} is better than those in Figure \ref{fig:1c}, its CDF disparity along predictions is less uniform, resulting in much more serious maximal disparity ($0.2399>0.1362$). Generally, a pre-defined threshold is used to make a binary decision based on model predictions, where varying thresholds lead to changing proportions of positive decisions of different groups \cite{chen2020towards}. If the decision threshold takes the value where the maximal CDF disparity is achieved, the group unfairness would be seriously exacerbated (\eg the difference of group positive proportion in Figure \ref{fig:1b} will be up to nearly 24\%). Therefore, it's important to capture and measure the extreme local disparity, which is yet prone to be ``averaged" by overall variation of previous fairness metrics.

To address the limitations of previous fairness metrics, we propose a novel fairness metric called \textbf{M}aximal \textbf{C}umulative ratio \textbf{D}isparity along varying \textbf{P}redictions' $\epsilon$-neighborhood, denoted as $\mathrm{MCDP}(\epsilon)$, whose core idea is calculating the maximal local disparity of the CDFs of two demographic groups. Considering that vast of predictions falling in a small prediction interval may result in sharp distribution changes, we firstly replace the exact CDF disparity at each prediction as the minimal disparity within its $\epsilon$-neighborhood. In this way, the group disparity along varying predictions becomes smoother, thus the metric value becomes more robust to sharp distributions. We also theoretically prove several properties of the proposed $\mathrm{MCDP}(\epsilon)$ metric, including but not limited to its monotonicity \wrt $\epsilon$, and its relationship with previous fairness metrics.

Furthermore, given the model predictions for real instances in two demographic groups, we adopt empirical distribution function as the estimated CDF, and further propose two algorithms which exactly and approximately calculate the empirical metric value, respectively. Specifically, the approximation algorithm can greatly reduce the computational complexity with low value error. To train a fair classifier in view of maximal local disparity, we firstly adopt a differentiable approximation of CDF disparity based on temperature sigmoid function \cite{han1995influence}, and then minimize the maximal estimated CDF disparity using the bi-level optimization approach \cite{ji2021bilevel}. In this way, the distribution disparity becomes more uniform (Figure \ref{fig:1c}), further approaching demographic parity.

The contributions of this paper are summarized as follows:

$\bullet$ We propose a novel fairness metric called $\mathrm{MCDP}(\epsilon)$ to measure the maximal local disparity of a classifier, and derive several theoretical properties about the metric.

\vspace{-3pt}
$\bullet$ To empirically estimate $\mathrm{MCDP}(\epsilon)$ with finite instances, we propose an exact and an approximate calculation algorithm, where the approximate algorithm greatly reduces computational complexity with low estimation error.

\vspace{-3pt}
$\bullet$ We further develop a bi-level optimization algorithm using differentiable approximation of $\mathrm{MCDP}(0)$ to train a fair classifier which minimizes the maximal local disparity.

\vspace{-3pt}
$\bullet$ Experiments on tabular and image datasets (Adult, Bank and CelebA) demonstrate that our learning algorithm can effectively achieve better fairness-accuracy trade-offs.

\section{Preliminaries}

\subsection{Demographic Parity and Measurements}
Without loss of generality, we consider the binary classification task where each instance consists of an input $\boldsymbol{x}\in \mathcal{X}\subset\mathbb{R}^d$, a class label $y\in\mathcal{Y}=\{0,1\}$ and a group label $s\in\{0,1\}$, which is defined by sensitive attributes such as gender, age or race. We focus on demographic parity which requires the model's predictive probabilities $\hat{y}=f(\boldsymbol{x})$ to be independent of the group label $s$, where $f:\mathcal{X}\to[0,1]$ is a classifier with model parameter $\boldsymbol{\theta}$. To measure the model's fairness violation, the following metric calculates the average prediction difference of two groups 
\begin{equation*}
    \Delta{\mathrm{DP}}(f)= \left|\mathbb{E}_{\boldsymbol{x}\sim \mathcal{P}_0} f(\boldsymbol{x}) -\mathbb{E}_{\boldsymbol{x}\sim \mathcal{P}_1} f(\boldsymbol{x})\right|,
\end{equation*}
where $\mathcal{P}_a=\mathbb{P}(\boldsymbol{x}|s=a),a\in\{0,1\}$ denotes the distributions of instances in two groups. In addition, given a finite dataset $\mathcal{D}=\{\boldsymbol{x}_i,y_i,s_i\}_{i=1}^N$ with $N$ samples and model predictions $\{\hat{y}_i\}_{i=1}^N$ with $\hat{y}_i=f(\boldsymbol{x}_i)$, the empirical estimation of $\Delta{\mathrm{DP}}(f)$ can be obtained as 
\begin{equation*}
    \Delta\widehat{\mathrm{DP}}(f,\mathcal{D}) =\left|\frac{1}{|\mathcal{S}_0|} \sum\limits_{i\in\mathcal{S}_0}\hat{y}_i -\frac{1}{|\mathcal{S}_1|} \sum\limits_{i\in\mathcal{S}_1}\hat{y}_i\right|,
\end{equation*}
where $\mathcal{S}_a=\{i:s_i=a\},a\in\{0,1\}$ is the index set of instances in two groups. Apart from the widely-used $\Delta{\mathrm{DP}}$ ($\Delta\widehat{\mathrm{DP}}$) metric\footnote{To improve the notation briefness, we omit the arguments $f$ and $\mathcal{D}$, \eg $\Delta{\mathrm{DP}}, \Delta\widehat{\mathrm{DP}}$ is short for $\Delta{\mathrm{DP}}(f),\Delta{\widehat{\mathrm{DP}}}(f,\mathcal{D})$.} which measures unfairness in expectation-level, recent work \cite{han2023retiring} proposes a distribution-level metric called $\mathrm{ABCC}$ as follows
\begin{equation*}
    \mathrm{ABCC}(f)= \int_{0}^{1}\left|F_0(\hat{y})-F_1(\hat{y})\right| \mathrm{d}\hat{y},
\end{equation*}
where $F_0(\hat{y})$ and $F_1(\hat{y})$ ($\hat{y}\in[0,1]$) are the CDFs of model predictions of instances from $\mathcal{P}_0$ and $\mathcal{P}_1$
\begin{equation*}
    F_a(\hat{y})= \mathbb{P}(f(\boldsymbol{x})\leq\hat{y}),\ \boldsymbol{x}\sim\mathcal{P}_a,\ a\in\{0,1\}.
\end{equation*}
Similar to $\Delta\widehat{\mathrm{DP}}$, the estimated $\mathrm{ABCC}$ value using $\{\hat{y}_i\}_{i=1}^N$ can be computed as follows
\begin{equation*}
    \widehat{\mathrm{ABCC}}(f,\mathcal{D})= \int_{0}^{1}\left|\hat{F}_0(\hat{y}) -\hat{F}_1(\hat{y})\right|\mathrm{d}\hat{y},
\end{equation*}
where $\hat{F}_0(\hat{y})$ and $\hat{F}_1(\hat{y})$ are the empirical distribution functions of model predictions of instances in two groups
\begin{equation}\label{eq:ecdf}
    \hat{F}_a(\mathcal{D};\hat{y})=\frac{1}{|\mathcal{S}_a|} \sum\limits_{i\in\mathcal{S}_a}\mathbb{I}(\hat{y}_i\leq\hat{y}),\ a\in\{0,1\},
\end{equation}
and $\mathbb{I}(\cdot)$ denotes the indicator function.

\subsection{Discussions about Previous Metrics}
\noindent\textbf{Drawbacks of $\Delta\mathrm{DP}$.} 
Although $\Delta\mathrm{DP}$ has become the \emph{de facto} fairness criterion in previous literatures, it is insufficient to measure the violation of demographic parity. The reason is that $\Delta\mathrm{DP}=0$ does not indicate identical distributions of group prediction, thus the independency of predictions and group labels cannot be guaranteed. As shown in Figure \ref{fig:1a}, the distribution gap between the two groups is evident despite $\Delta\mathrm{DP}$ is very close to $0$.

\noindent\textbf{Drawbacks of $\mathrm{ABCC}$.} 
Unlike $\Delta\mathrm{DP}$, $\mathrm{ABCC}=0$ is a necessary and sufficient condition for establishing demogarphic parity. However, as $\mathrm{ABCC}$ value cannot reflect the \emph{local} distribution disparity, it fails to accurately measure the degree of unfairness in cases where extreme local disparity is emphasized. For example, although the maximal disparity in Figure \ref{fig:1c} ($0.1362$) is much smaller than that of Figure \ref{fig:1b} ($0.2399$), its $\mathrm{ABCC}$ value is misleadingly larger (\ie $0.0724>0.0659$).

\noindent\textbf{Summary.}
Previous expectation-level and distribution-level metrics tend to average the extreme but important local disparity through overall calculation, thus their values cannot accurately measure the fairness violation in certain cases. This enlighten us to develop new fairness metrics to capture the maximal local disparity of classifiers.
\section{The Proposed Metric}

\begin{figure}
    \centering
    \includegraphics[width=0.99\linewidth]{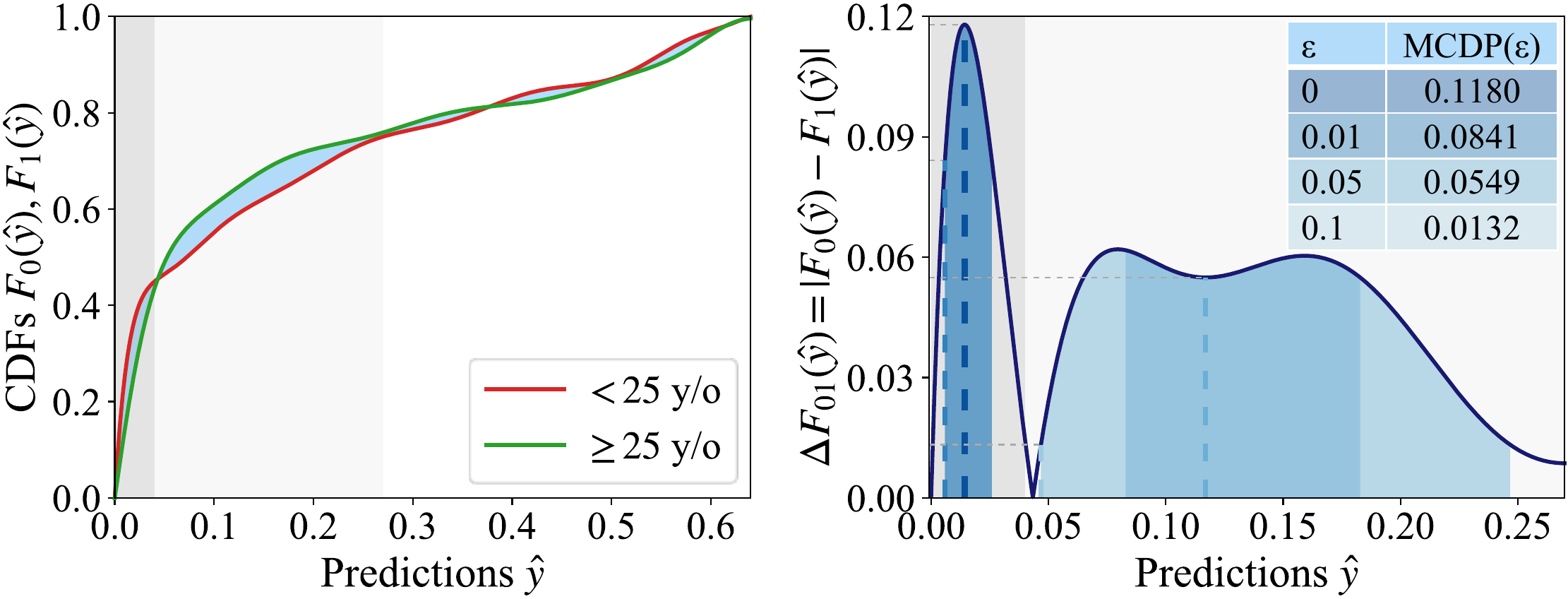}
    \vspace{-12pt}
    \caption{Predictions of FairMixup on the Bank dataset. The neighborhood hyper-parameter $\epsilon$ decides the manner of calculating local disparity, leading to different $\mathrm{MCDP}(\epsilon)$ values.}
    \label{fig:2}
    \vspace{-10pt}
\end{figure}

In this section, we propose a novel fairness metric called $\mathrm{MCDP}(\epsilon)$ to measure the maximal local disparity of classifiers, and provide theoretical properties, estimation algorithms and an optimization framework about the metric. All the proofs of the theorems can be referred in Appendix \ref{sec:proofs}.

\subsection{$\boldsymbol{\mathrm{MCDP}(\epsilon)}$ Metric}
Denote the absolute difference of the two groups' CDFs of model predictions as $\Delta F(\hat{y})=\left|F_0(\hat{y})-F_1(\hat{y})\right|$. To capture the worst case that a classifier violates demographic parity, an intuitive idea is to calculate the \textbf{M}aximal \textbf{C}umulative ratio \textbf{D}isparity along varying \textbf{P}redictions using
\begin{equation}\label{eq:MCDP0}
    \mathrm{MCDP}(f)= \max\limits_{\hat{y}\in[0,1]} \Delta F(\hat{y}).
\end{equation}
As $\mathrm{MCDP}$ only focuses on the maximal disparity, it serves as a more rigorous fairness measurement compared with previous metrics. However, it is susceptible to extremely sharp distributions within certain intervals. As an example shown in Figure \ref{fig:2}, the $\mathrm{MCDP}$ value is large ($0.1180$) due to a large number of instances with predictions around $0.02$, which may misleadingly reflect the unfairness. To address this issue, we introduce a local measurement $\epsilon\geq0$ to smooth $\Delta F(\hat{y})$. For a specific prediction point $y_0$, we take the minimal disparity within its $\epsilon$-neighborhood $[y_0-\epsilon,y_0+\epsilon]$ as the maximal local disparity instead of the exact disparity $\Delta F(y_0)$. In this way, we slightly modify Eq.~\eqref{eq:MCDP0} to compute the \textbf{M}aximal \textbf{C}umulative ratio \textbf{D}isparity along varying \textbf{P}redictions' $\epsilon$-neighborhood as
\begin{equation}\label{eq:MCDPeps}
    \mathrm{MCDP}(f;\epsilon)= \max\limits_{{y}_0\in[0,1]} \min\limits_{\hat{y}:\left|\hat{y}-{y}_0\right|\leq\epsilon} \Delta F(\hat{y}).
\end{equation}
$\mathrm{MCDP}(\epsilon)$ can also be interpreted as the maximum of the minimal CDF disparity of any prediction intervals with length $2\epsilon$ (or $\geq\epsilon$ if endpoints $0$ or $1$ is included). In particular, as the zero length interval degenerates to a point, $\mathrm{MCDP}(0)$ degenerates to $\mathrm{MCDP}$ in Eq.~\eqref{eq:MCDP0}.

\subsection{Theoretical Analysis of $\boldsymbol{\mathrm{MCDP}(\epsilon)}$}
We derive several properties of $\mathrm{MCDP}(\epsilon)$ metric as below:
\begin{theorem}[Properties of $\mathrm{MCDP}(\epsilon)$]\label{theo:prop_MCDP}
The proposed $\mathrm{MCDP}(\epsilon)$ metric has the following desired properties: 
\ding{172} $\mathrm{MCDP}(\epsilon)$ has a range of $[0,1]$. 
\ding{173} $\mathrm{MCDP}(0)=0$ holds if and only if demographic parity is established.
\ding{174} $\mathrm{MCDP}(0)$ is invariant to any monotone and invertible transformation $T:[0,1]\to[0,1]$.
\ding{175} $\mathrm{MCDP}(\epsilon)$ is a monotonically decreasing function \wrt $\epsilon$.
\ding{176} Assume $\Delta F(\hat{y})$ is continuous on $[0,1]$ with Lipschitz constant $L$ \cite{goldstein1977optimization}, then 
\begin{equation*}
    \mathrm{ABCC}\leq
    \begin{cases}
        \mathrm{MCDP}(\epsilon)+\frac{\epsilon L}{2},& \text{if }L\leq2, \\
        \mathrm{MCDP}(\epsilon)+2\epsilon\left(1-\frac{1}{L}\right),& \text{if }L>2.
    \end{cases}
\end{equation*}
\end{theorem}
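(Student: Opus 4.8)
The plan is to dispatch \ding{172}--\ding{175} quickly and put all the effort into \ding{176}. For \ding{172}, $\Delta F(\hat y)=|F_0(\hat y)-F_1(\hat y)|\in[0,1]$ pointwise, so any $\max$ of $\min$s of such values again lies in $[0,1]$. For \ding{173}, $\mathrm{MCDP}(0)=\max_{\hat y}\Delta F(\hat y)$ vanishes iff $\Delta F\equiv 0$ iff $F_0\equiv F_1$, i.e.\ iff $f(\boldsymbol x)$ has the same distribution in both groups, which is exactly demographic parity. For \ding{174}, an increasing invertible $T$ turns $F_a$ into $F_a\circ T^{-1}$ and hence $\Delta F$ into $\Delta F\circ T^{-1}$, whose maximum over $[0,1]$ is unchanged because $T^{-1}$ permutes $[0,1]$; the decreasing case is identical once one writes $\mathbb P(f(\boldsymbol x)\ge z)=1-F_a(z)$. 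For \ding{175}, if $\epsilon_1\le\epsilon_2$ then every $\epsilon_1$-neighborhood lies inside the corresponding $\epsilon_2$-neighborhood, so each inner minimum can only shrink, and so does the outer maximum.

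The real content is \ding{176}. Write $M:=\mathrm{MCDP}(\epsilon)$. The key step is that the sublevel set $G:=\{\hat y\in[0,1]:\Delta F(\hat y)\le M\}$ is $\epsilon$-dense in $[0,1]$: if some $y_0$ had $\Delta F>M$ on its whole (closed) $\epsilon$-neighborhood, continuity would give $\min_{|\hat y-y_0|\le\epsilon}\Delta F(\hat y)>M$, contradicting $M=\max_{y_0}\min_{|\hat y-y_0|\le\epsilon}\Delta F$. Consequently every ``excursion'' of $\Delta F$ above level $M$ (a maximal interval on which $\Delta F>M$) has length $\le 2\epsilon$, and $\mathrm{ABCC}=\int_0^1\Delta F=M+\sum_i A_i$ with $A_i$ the area of the $i$-th excursion above $M$. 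On an excursion of width $w_i\le2\epsilon$ whose maximum exceeds $M$ by $h_i$, $L$-Lipschitzness pins $\Delta F-M$ under the slope-$\pm L$ tent (truncated at plateau $h_i$), giving $A_i\le Lw_i^2/4$ and also $A_i\le w_ih_i-h_i^2/L\le 2\epsilon h_i-h_i^2/L$. Using only that the excursions are disjoint ($\sum_i w_i\le1$) together with $A_i\le Lw_i^2/4$ already yields $\sum_i A_i\le\tfrac L4\sum_i w_i^2\le\tfrac L4(2\epsilon)\sum_i w_i\le\tfrac{\epsilon L}{2}$, which is the claimed bound for all $L$ and in particular settles the $L\le2$ case.

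For $L>2$ the above is loose, and the fix is to add one constraint: since $F_0,F_1$ are monotone with total increase $\le1$ each, $\mathrm{TV}_{[0,1]}(\Delta F)\le\mathrm{TV}_{[0,1]}(F_0-F_1)\le 2$, and each excursion contributes $\ge 2h_i$ to this variation, so $\sum_i h_i\le1$. Then $\sum_i A_i\le\sum_i(2\epsilon h_i-h_i^2/L)$ must be maximized over $\{h_i\ge0,\ \sum_i h_i\le1\}$ subject to the geometric feasibility constraints ($\sum_i w_i\le1$, $2h_i/L\le w_i\le2\epsilon$, $h_i\le1-M$); the extremum is a comb of $\lfloor1/(2\epsilon)\rfloor$ equal excursions of width exactly $2\epsilon$, and setting $S=\sum_i h_i$ with equal heights turns the objective into $2\epsilon S(1-S/L)$, whose maximizer over $S\in[0,1]$ is $S=L/2$ when $L\le2$ (recovering $\epsilon L/2$) and $S=1$ when $L>2$ (yielding $2\epsilon(1-1/L)$). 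I expect the main obstacle to be precisely this extremal analysis -- arguing that widening every excursion to $2\epsilon$ is optimal and that the two budgets ($\sum w_i\le1$, $\sum h_i\le1$) together with the cap $h_i\le1-M$ make the optimum collapse to the stated bounds $\epsilon L/2$ (for $L\le2$) and $2\epsilon(1-1/L)$ (for $L>2$), with the switch exactly at $L=2$; by comparison the $\epsilon$-density claim, the tent-area estimate, and the change of variables in \ding{174} are routine.
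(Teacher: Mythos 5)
Properties \ding{172}--\ding{175} and the $L\leq2$ half of \ding{176} are fine, and your route runs structurally parallel to the paper's: your $\epsilon$-density of the sublevel set $\{\Delta F\leq M\}$ is the paper's lemma that the zeros of the shifted disparity are $2\epsilon$-spaced, your decomposition $\int_0^1\Delta F\leq M+\sum_iA_i$ with $A_i=\int(\Delta F-M)^+$ over excursions is exactly the paper's reduction via $\Delta\tilde F=\max\{0,\Delta F-M\}$ (done more directly, without building the modified CDFs), your tent bound $A_i\leq Lw_i^2/4$ with $\sum_iw_i\leq1$ gives $\epsilon L/2$ just as the paper's Lemma for $L\leq2$ does, and your total-variation derivation of $\sum_ih_i\leq1$ is a clean substitute for the paper's monotonicity lemma $h\leq F_0(b)-F_0(a)$. (Incidentally, your direction in \ding{175} is the correct one; the paper's own write-up of that step has the inclusion reversed.)

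The genuine gap is in the $L>2$ extremal step, and it is exactly where you flagged it. Once you replace $w_i$ by $2\epsilon$ and write $\sum_iA_i\leq\sum_i\bigl(2\epsilon h_i-h_i^2/L\bigr)$, the constraint set $\{h_i\geq0,\ \sum_ih_i\leq1\}$ no longer remembers that wide excursions consume the width budget: since $h\mapsto 2\epsilon h-h^2/L$ is concave with value $0$ at $0$, it is subadditive, so splitting a fixed mass $S=\sum_ih_i$ into ever more, ever smaller excursions drives $\sum_ih_i^2\to0$ and the supremum of your relaxed objective is $2\epsilon S\leq2\epsilon$, not $2\epsilon(1-1/L)$. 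The geometric constraints you list do not rescue this: excursions of height $h_i$ only need width $w_i\geq2h_i/L$, so $\sum_iw_i\leq2/L\leq1$ permits arbitrarily many of them, and hence the "comb of $\lfloor1/(2\epsilon)\rfloor$ equal excursions of width $2\epsilon$'' is \emph{not} the maximizer of the optimization problem you actually wrote down. To close the gap you must keep the coupling $A_i\leq(w_i-h_i/L)h_i\leq w_ih_i$ and exploit that only at most $1/(2\epsilon)$ excursions can have width near $2\epsilon$ (from $\sum_iw_i\leq1$), with the narrow excursions contributing proportionally less through the $w_ih_i$ factor; the Cauchy--Schwarz step $\sum_ih_i^2\geq S^2/m$ only helps once $m\leq1/(2\epsilon)$ is established for the terms that matter. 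This is precisely the optimization the paper sets up in its Eq.~\eqref{eq:opt}, with widths $\delta_i$ that \emph{partition} $[0,1]$ (so $\sum_i\delta_i=1$ exactly) and the objective kept bilinear as $(\delta_i-h_i/L)h_i$; dropping that coupling, as your relaxation does, loses the factor $(1-1/L)$.
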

\textbf{Remarks.} Properties \ding{173} and \ding{174} show that $\mathrm{MCDP}(0)$ satisfies the sufficiency and fidelity criteria for fairness measurement \cite{han2023retiring}. A visualized interpretation of property \ding{175} can be referred in Figure \ref{fig:2}, where wider intervals (\ie larger $2\epsilon$ values) lead to smaller $\mathrm{MCDP}(\epsilon)$ values. Property \ding{176} provides an upper bound of the $\mathrm{ABCC}$ metric with given $\mathrm{MCDP}(\epsilon)$ values, which also suggests that the distribution-level disparity can be controlled by minimizing the maximal local disparity.

\subsection{Estimating the $\boldsymbol{\mathrm{MCDP}(\epsilon)}$ Metric}\label{sec:estimate}

\begin{algorithm}[t]
\caption{Exact Calculation of $\widehat{\mathrm{MCDP}}(\epsilon)$}
\label{alg:exact}
    \begin{algorithmic}[1]
    \INPUT Dataset $\mathcal{D}=\{\boldsymbol{x}_i,y_i,s_i\}_{i=1}^N$, model predictions $\{\hat{y}_i\}_{i=1}^N$, local measurement $\epsilon\ge0$.
    \STATE Calculate $\Delta\hat{F}(\hat{y})$ using $\{\hat{y}_i,s_i\}_{i=1}^N$ by Eq.~\eqref{eq:ecdf};
    \STATE Set $\hat{y}_0=0,\ \hat{y}_{N+1}=1$;
    \IF{$\epsilon=0$}
    \STATE $\widetilde{\mathrm{MCDP}}(\epsilon)= \max_i\Delta\hat{F}(\hat{y}_i),\ i\in[0,N]$;
    \ELSE
    \STATE $\widetilde{\mathrm{MCDP}}(\epsilon)= \min_{i:\hat{y}_i\leq\epsilon}\Delta\hat{F}(\hat{y}_i)$;
    \STATE $\mathcal{I}= \{i\in[0,N+1]: \hat{y}_i\leq1-\epsilon\}$;
    \FOR{$i\in\mathcal{I}$}
    \STATE $\mathcal{J}= \{j\in[0,N+1]: \hat{y}_j\in[\hat{y}_i,\hat{y}_i+2\epsilon]\}$;
    \STATE $M=\min_j \Delta\hat{F}(\hat{y}_j),\ j\in\mathcal{J}$;
    \STATE $\widetilde{\mathrm{MCDP}}(\epsilon)= \max\{\widetilde{\mathrm{MCDP}}(\epsilon),M\}$;
    \ENDFOR
    \ENDIF
    \OUTPUT $\widetilde{\mathrm{MCDP}}(\epsilon)$.
    \end{algorithmic}
\end{algorithm}

In real-world settings where algorithmic fairness is evaluated on finite data samples, we adopt the empirical distribution function as the estimated CDF. Formally, the empirical $\mathrm{MCDP}(\epsilon)$ metric estimated over $\mathcal{D}$ can be written as
\begin{equation}\label{eq:mcdphat}
\begin{aligned}
    \widehat{\mathrm{MCDP}}(f,\mathcal{D};\epsilon)&= \max\limits_{{y}_0\in[0,1]} \min\limits_{\hat{y}:\left|\hat{y}-y_0\right|\leq\epsilon}\Delta\hat{F}(\hat{y}),
\end{aligned}
\end{equation}
where $\Delta\hat{F}(\hat{y})= |\hat{F}_0(\hat{y})- \hat{F}_1(\hat{y})|$. By the Glivenko-Cantelli theorem \cite{tucker1959generalization}, the estimated metric value above converges to the true metric value in Eq.~\eqref{eq:MCDPeps} almost surely with increasing sample size $N$ (see Appendix \ref{sec:tractability} for formal proofs), which demonstrates that $\widehat{\mathrm{MCDP}}(\epsilon)$ serves as a preferable estimation of $\mathrm{MCDP}(\epsilon)$.

However, it is intractable to traverse all possible $y_0$ values in $[0,1]$ and $\hat{y}$ values in $y_0$'s $\epsilon$-neighborhood, which poses a challenge to compute the metric in Eq.~\eqref{eq:mcdphat}. Nevertheless, thanks to the step-like pattern's property of the empirical distribution function, we can traverse finite $y_0$ and $\hat{y}$ values. Based on different traversing strategies, we develop two algorithms which calculates the exact and approximate value of $\widehat{\mathrm{MCDP}}(\epsilon)$, respectively. 
The exact algorithm only traverses predictions of instances in $\mathcal{D}$ (lines 7-10 in Algorithm \ref{alg:exact}). In contrast, the approximate algorithm firstly samples prediction points that are equally spaced by $\frac{\epsilon}{K}$ on $[0,1]$, where $K\in\mathbb{N}^+$ is a pre-defined hyper-parameter to control the sampling frequency. Afterwards, it traverses consecutive $2K$ sampled points to estimate the maximal local disparity (lines 4-5 in Algorithm \ref{alg:approximate}). Notably, the two algorithms have the following properties:

\begin{theorem}[Exactness]\label{theo:exactalg}
    The $\widetilde{\mathrm{MCDP}}(\epsilon)$ value returned by Algorithm \ref{alg:exact} equals to the true value in Eq.~\eqref{eq:mcdphat}, \ie it calculates $\widehat{\mathrm{MCDP}}(\epsilon)$ exactly.
\end{theorem}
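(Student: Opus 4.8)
The plan is to show that the finite set of points traversed by Algorithm~\ref{alg:exact} always contains a point achieving the optimum in Eq.~\eqref{eq:mcdphat}, so that restricting to this finite set does not change the value. The key observation is that $\Delta\hat{F}(\hat{y})$ is a right-continuous step function that is constant on each half-open interval between consecutive sorted prediction values $\hat{y}_{(k)}$, and that its level can only change at these values. Consequently, for any candidate center $y_0$ and any sub-interval $[a,b] \subseteq [0,1]$ with $b - a \le 2\epsilon$, the quantity $\min_{\hat{y}\in[a,b]} \Delta\hat{F}(\hat{y})$ is attained at some point of the form $\hat{y}_i$ (a data prediction) or at the left endpoint $a$ if $a$ lies strictly before all data points in $[a,b]$; moreover, shifting the interval so its left endpoint coincides with a data point (or with $0$) can only decrease the set of values inside it and hence only decrease the inner minimum, never increase it — so enlarging/sliding the window to start at a grid point of the form $\hat{y}_i$ is without loss for the outer maximization.

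First I would set up notation: sort the predictions as $0 = \hat{y}_0 \le \hat{y}_{(1)} \le \cdots \le \hat{y}_{(N)} \le \hat{y}_{N+1} = 1$ and note that $\Delta\hat{F}$ takes finitely many values, being piecewise constant with jumps only at the $\hat{y}_{(k)}$. Second, I would handle the $\epsilon = 0$ case: here the inner $\min$ is trivial ($\min$ over a singleton), so $\widehat{\mathrm{MCDP}}(0) = \max_{y_0 \in [0,1]} \Delta\hat{F}(y_0)$, and since $\Delta\hat{F}$ is a step function that jumps only at data points and is right-continuous, its supremum over $[0,1]$ is attained at one of the points $\hat{y}_0, \dots, \hat{y}_N$ (the value just to the right of a jump, or the initial value at $0$); this matches line~4. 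Third, for $\epsilon > 0$, I would argue that the outer maximum over $y_0$ can be rewritten as a maximum over all feasible windows $[\ell, r]$ with $\ell = \max(0, y_0 - \epsilon)$, $r = \min(1, y_0 + \epsilon)$, hence windows of length exactly $2\epsilon$ in the interior and length $\ge \epsilon$ at the boundary. For each such window I claim $\min_{\hat{y}\in[\ell,r]}\Delta\hat{F}(\hat{y}) \le \min_{\hat{y}\in[\ell',r]}\Delta\hat{F}(\hat{y})$ where $\ell'$ is the largest element of $\{\hat{y}_i\} \cup \{0\}$ that is $\le \ell$ — because $[\ell', r] \supseteq [\ell, r]$ and $\Delta\hat F$ restricted to $[\ell', \ell]$ equals its constant value at $\ell'$, which is $\ge$ the min over the larger interval only if... — more carefully, since $\Delta\hat F$ is constant on $[\ell',\hat y_{(k_0+1)})$ where $\hat y_{(k_0)}=\ell'$, extending the left endpoint from $\ell$ down to $\ell'$ adds only points with the same value $\Delta\hat F(\ell) = \Delta\hat F(\ell')$, so the minimum over the extended interval is $\min\{\Delta\hat F(\ell'),\ \min_{[\ell,r]}\Delta\hat F\} \le \min_{[\ell,r]}\Delta\hat F$. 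Thus sliding the left endpoint to a grid point $\hat y_i$ never increases the inner min, and since we want the outer max, it is WLOG to only consider windows $[\hat y_i, \hat y_i + 2\epsilon]$ for $i \in \mathcal{I}$, plus the boundary window $[0,\epsilon]$ covered by line~6. Finally, within a fixed window $[\hat y_i, \hat y_i + 2\epsilon]$, the inner minimum of the step function $\Delta\hat F$ is attained at $\hat y_i$ itself or at some $\hat y_j$ with $\hat y_j \in (\hat y_i, \hat y_i + 2\epsilon]$ (a jump point inside), which is exactly the set $\mathcal{J}$ in line~9; hence line~10 computes the inner min exactly and line~11 the running outer max. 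Taking the max over all windows gives Eq.~\eqref{eq:mcdphat}.

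The main obstacle I anticipate is the careful bookkeeping at the two boundaries $0$ and $1$: I must verify that every window $[y_0 - \epsilon, y_0 + \epsilon] \cap [0,1]$ arising from some $y_0 \in [0,1]$ is dominated (in the sense of inner-min $\le$) by one of the windows actually enumerated — namely $[0, \epsilon]$ (via line~6, which takes $\min_{i:\hat y_i \le \epsilon}\Delta\hat F(\hat y_i)$, the correct value for the left-clipped window since $\hat y_0 = 0$ is included), or $[\hat y_i, \min(1, \hat y_i + 2\epsilon)]$ for $i \in \mathcal I = \{i : \hat y_i \le 1 - \epsilon\}$. In particular, for a right-clipped window centered near $1$, I need the left endpoint to still be pushed back to a data point with index in $\mathcal I$, and I need to confirm that including $\hat y_{N+1} = 1$ in $\mathcal J$ (line~9 allows $j$ up to $N+1$) correctly accounts for the endpoint. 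A secondary subtlety is the direction of the step-function argument: because $\Delta\hat F$ is right-continuous, the value "at" a jump point is the post-jump value, and one must check that the pre-jump value (the limit from the left) is also represented — it is, since it equals $\Delta\hat F(\hat y_{(k-1)})$ at the previous grid point, which is likewise traversed, so no infimum is missed. Once these boundary and continuity-direction checks are in place, the equality $\widetilde{\mathrm{MCDP}}(\epsilon) = \widehat{\mathrm{MCDP}}(\epsilon)$ follows by combining the two inequalities ($\le$ because every enumerated window is a feasible $(y_0,\text{interval})$ pair; $\ge$ by the sliding/extension argument above).
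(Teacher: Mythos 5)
Your proposal is correct and follows essentially the same route as the paper's proof: both reduce the continuum of centers and evaluation points to the finite set of data predictions via the piecewise-constant, right-continuous structure of $\Delta\hat F$ (the paper packages this as a lemma on the range of $\Delta\hat F$ over an interval plus the subset min--max property), split off the boundary case $y_0\in[0,\epsilon)$ handled by line~6, and conclude via the same pair of inequalities (every enumerated window is a feasible $(y_0,\text{interval})$ pair, and every true window is dominated by the enumerated window whose left endpoint is the largest data prediction not exceeding $y_0-\epsilon$). There is no substantive difference in approach.
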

\vspace{-2pt}

\begin{theorem}[Over-estimation]\label{theo:approover}
    The $\widetilde{\mathrm{MCDP}}(\epsilon)$ value returned by Algorithm \ref{alg:approximate} never underestimates the true metric value, \ie $\widetilde{\mathrm{MCDP}}(\epsilon)\geq\widehat{\mathrm{MCDP}}(\epsilon)$.
\end{theorem}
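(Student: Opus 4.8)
The plan is to exhibit, for the value $\widetilde{\mathrm{MCDP}}(\epsilon)$ returned by Algorithm~\ref{alg:approximate}, one ``certificate'' block of consecutive sampled points whose block-minimum of $\Delta\hat F$ is already at least $\widehat{\mathrm{MCDP}}(\epsilon)$; since the algorithm outputs the maximum over all such block-minima, this yields the inequality. First I would fix a point $y_0^{*}$ attaining the outer maximum in~\eqref{eq:mcdphat}: it exists because $\Delta\hat F=|\hat F_0-\hat F_1|$ is a right-continuous step function taking finitely many values, so $y_0\mapsto\min_{\hat y\in[0,1],\,|\hat y-y_0|\le\epsilon}\Delta\hat F(\hat y)$ also takes finitely many values and attains its supremum. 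Writing the clipped neighborhood as $I^{*}=[L,R]:=[y_0^{*}-\epsilon,y_0^{*}+\epsilon]\cap[0,1]$, optimality of $y_0^{*}$ gives $\min_{\hat y\in I^{*}}\Delta\hat F(\hat y)=\widehat{\mathrm{MCDP}}(\epsilon)$, in particular $\Delta\hat F(\hat y)\ge\widehat{\mathrm{MCDP}}(\epsilon)$ for every $\hat y\in I^{*}$. The argument then reduces to locating, inside $I^{*}$, a block of points that Algorithm~\ref{alg:approximate} actually scans, since a minimum over a subset of points is never smaller than a minimum over a superset.

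I would split into three disjoint regimes. If $R=1$, then $1\in I^{*}$ forces $\widehat{\mathrm{MCDP}}(\epsilon)\le\Delta\hat F(1)=0$, because every prediction lies in $[0,1]$ so $\hat F_0(1)=\hat F_1(1)=1$; the claim is then immediate from $\widetilde{\mathrm{MCDP}}(\epsilon)\ge0$. If instead $R<1$ and $y_0^{*}\ge\epsilon$, then $I^{*}=[y_0^{*}-\epsilon,y_0^{*}+\epsilon]$ is a genuine length-$2\epsilon$ interval inside $[0,1]$. Here the key observation is that a block of $2K$ consecutive sampled points, spaced by $\epsilon/K$, spans only $(2K-1)\cdot\frac{\epsilon}{K}=2\epsilon-\frac{\epsilon}{K}<2\epsilon$, leaving exactly one grid cell of slack: the closed interval $[L,L+\frac{\epsilon}{K}]$ has length one grid spacing, so it contains a sampled point $z_i$, and then $z_i,\dots,z_{i+2K-1}$ all lie in $[L,L+\frac{\epsilon}{K}+(2K-1)\frac{\epsilon}{K}]=[L,L+2\epsilon]=I^{*}$. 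This block is one of those swept in lines~4--5 of Algorithm~\ref{alg:approximate}, so its block-minimum is at least $\min_{\hat y\in I^{*}}\Delta\hat F(\hat y)=\widehat{\mathrm{MCDP}}(\epsilon)$, whence $\widetilde{\mathrm{MCDP}}(\epsilon)\ge\widehat{\mathrm{MCDP}}(\epsilon)$.

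The remaining regime is $R<1$ and $y_0^{*}<\epsilon$, where $I^{*}=[0,y_0^{*}+\epsilon]$ with $y_0^{*}+\epsilon\in[\epsilon,2\epsilon)$. Since $\hat y\mapsto\min_{\hat y'\in[0,\hat y]}\Delta\hat F(\hat y')$ is nonincreasing, shrinking $I^{*}$ to its subinterval $[0,\epsilon]$ cannot decrease the inner minimum; as $y_0=0$ is feasible for the outer maximum, it is also a maximizer, so we may assume $I^{*}=[0,\epsilon]$ and $\Delta\hat F\ge\widehat{\mathrm{MCDP}}(\epsilon)$ throughout $[0,\epsilon]$. A full-width block of $2K$ points need not fit inside the half-width interval $[0,\epsilon]$, which is precisely why Algorithm~\ref{alg:approximate}, mirroring line~6 of Algorithm~\ref{alg:exact}, carries a dedicated step that minimizes $\Delta\hat F$ over all sampled points lying in $[0,\epsilon]$; each such point lies in $I^{*}$, so this dedicated minimum is at least $\widehat{\mathrm{MCDP}}(\epsilon)$, and again $\widetilde{\mathrm{MCDP}}(\epsilon)\ge\widehat{\mathrm{MCDP}}(\epsilon)$. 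Combining the three regimes completes the proof.

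The hardest part is this last regime rather than the interior counting: one has to check that the certificate is a block the algorithm genuinely enumerates, and near $0$ the uniform $2K$-point block overshoots the truncated $\epsilon$-neighborhood, so a boundary-step-free variant could actually violate over-estimation. For instance, predicting $0$ for every instance of group~$0$ and $2\epsilon-\frac{\epsilon}{K}$ for every instance of group~$1$ makes $\Delta\hat F\equiv1$ on $[0,2\epsilon-\frac{\epsilon}{K})$, hence $\widehat{\mathrm{MCDP}}(\epsilon)=1$ (attained at $y_0=0$), while every block of $2K$ consecutive grid points contains a point $\ge2\epsilon-\frac{\epsilon}{K}$ at which $\Delta\hat F=0$. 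Verifying that the dedicated $[0,\epsilon]$ step closes exactly this gap is the only place the argument uses the fine structure of Algorithm~\ref{alg:approximate} beyond ``maximum of minima over blocks of $2K$ consecutive grid points''.
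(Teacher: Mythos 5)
Your proof is correct and takes essentially the same route as the paper's: the same three-way split on the position of the optimal $y_0$ (left boundary certified by line 3, right boundary by $\Delta\hat{F}(1)=0$, interior by fitting a block of $2K$ consecutive grid points of spacing $\epsilon/K$ inside the length-$2\epsilon$ window and using that a minimum over a subset dominates the minimum over the superset), the only presentational difference being that you exhibit one certificate block at a single maximizer $y_0^{*}$ whereas the paper bounds the inner minimum pointwise for every $y_0$ before taking maxima over the three ranges. One cosmetic caveat: when $y_0^{*}=\epsilon$ (so $L=0$) the index $i=\lceil L/\delta\rceil=0$ starts a block that the loop in lines 4--5 never visits, so you should select the grid point in the half-open cell $(L,L+\epsilon/K]$ (whose index is always $\ge 1$ and whose block still ends by $L+2\epsilon$), or simply note that line 3 already covers this boundary case.
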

\vspace{-2pt}

\begin{theorem}[Monotonicity \wrt sampling frequency]\label{theo:appromono}
     Denote $\widetilde{\mathrm{MCDP}}(\epsilon;K)$ as the $\widehat{\mathrm{MCDP}}(\epsilon)$ value returned by Algorithm \ref{alg:approximate} with sampling frequency $K$. For any $p>q\ge0,\ p,q\in\mathbb{N}$, we have $\widetilde{\mathrm{MCDP}}(\epsilon;2^p)\le\widetilde{\mathrm{MCDP}}(\epsilon;2^q)$.
\end{theorem}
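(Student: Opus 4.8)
The plan is to reduce the claim to a single ``doubling'' step and then exploit that the grid used by Algorithm~\ref{alg:approximate} with sampling frequency $K$ is exactly the grid for frequency $2K$ with every other point deleted. It suffices to prove
\[
  \widetilde{\mathrm{MCDP}}(\epsilon;2K)\ \le\ \widetilde{\mathrm{MCDP}}(\epsilon;K)\qquad\text{for all }K\in\mathbb{N}^{+},
\]
because for $p>q\ge 0$ one then chains this inequality along $K=2^{p-1},2^{p-2},\dots,2^{q}$ and invokes transitivity of $\le$ to get $\widetilde{\mathrm{MCDP}}(\epsilon;2^{p})\le\widetilde{\mathrm{MCDP}}(\epsilon;2^{q})$. (The case $\epsilon=0$ is trivial: the returned value is then $\max_i\Delta\hat{F}(\hat y_i)$, which does not depend on $K$.)

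First I would fix notation. With frequency $K$, Algorithm~\ref{alg:approximate} lays down the grid $G_K$ of points equally spaced by $\epsilon/K$ on $[0,1]$ and returns
\[
  \widetilde{\mathrm{MCDP}}(\epsilon;K)\ =\ \max_{W\in\mathcal{W}_K}\ \min_{\hat y\in W}\ \Delta\hat{F}(\hat y),
\]
where $\mathcal{W}_K$ is the finite collection of ``windows'' it scans: the runs of $2K$ consecutive points of $G_K$ (lines~4--5), together with the two boundary windows $G_K\cap[0,\epsilon]$ and $G_K\cap[1-\epsilon,1]$ (the analogues of line~6 of Algorithm~\ref{alg:exact}). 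Since $\Delta\hat{F}$ is determined by the data and the predictions alone and is independent of $K$, the entire $K$-dependence lives in $\mathcal{W}_K$.

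The core of the argument is the containment \textbf{every window in $\mathcal{W}_{2K}$ contains some window in $\mathcal{W}_K$ as a subset}. Because the spacing $\epsilon/K$ of $G_K$ is twice that of $G_{2K}$, the grid $G_K$ is precisely the set of even-indexed points of $G_{2K}$; hence $G_K\subseteq G_{2K}$, with exactly one extra point of $G_{2K}$ strictly between each pair of consecutive points of $G_K$. Now take an interior window $W'\in\mathcal{W}_{2K}$, a run of $2\cdot 2K=4K$ consecutive points of $G_{2K}$: among its $4K$ consecutive indices exactly $2K$ are even, and the corresponding points form a run of $2K$ consecutive points of $G_K$ --- a member of $\mathcal{W}_K$ contained in $W'$. (Whether the first index of $W'$ is even or odd only changes which of the two candidate $2K$-runs is selected, not its length.) For the two boundary windows, the inclusions $G_K\cap[0,\epsilon]\subseteq G_{2K}\cap[0,\epsilon]$ and $G_K\cap[1-\epsilon,1]\subseteq G_{2K}\cap[1-\epsilon,1]$ follow at once from $G_K\subseteq G_{2K}$.

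The conclusion is then immediate: if $W'\in\mathcal{W}_{2K}$ contains $W\in\mathcal{W}_K$, then $\min_{\hat y\in W'}\Delta\hat{F}(\hat y)\le\min_{\hat y\in W}\Delta\hat{F}(\hat y)\le\widetilde{\mathrm{MCDP}}(\epsilon;K)$, since the minimum over a larger finite set is no larger and the last quantity is the maximum over all of $\mathcal{W}_K$. Taking the maximum over $W'\in\mathcal{W}_{2K}$ gives $\widetilde{\mathrm{MCDP}}(\epsilon;2K)\le\widetilde{\mathrm{MCDP}}(\epsilon;K)$, and the theorem follows by the reduction above. I expect the main obstacle to be purely bookkeeping: pinning down $\mathcal{W}_K$ exactly as enumerated by Algorithm~\ref{alg:approximate} --- in particular the precise treatment of the windows that touch the endpoints $0$ and $1$, and any off-by-one hidden in ``$2K$ consecutive points'' --- and checking that the even-indexed sub-run of a $4K$-run always lands inside $\mathcal{W}_K$ (the parity case split). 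Everything else is just monotonicity of $\min$ under inclusion together with transitivity.
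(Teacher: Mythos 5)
Your proposal is correct and takes essentially the same route as the paper: both arguments rest on the key containment that every window scanned at the finer frequency contains, as a subset of grid points, some window scanned at the coarser frequency (the paper's Lemma~\ref{lemma:appro_mono}), combined with the subset min--max property and the split into the left boundary window and the interior windows. The only differences are presentational --- you establish the containment for a single doubling step via a parity argument and then chain, whereas the paper proves it directly for arbitrary $2^p$ versus $2^q$ through the index map $j_q=\lceil j_p\cdot 2^{q-p}\rceil$ and ceiling/floor identities --- plus one bookkeeping slip to fix: Algorithm~\ref{alg:approximate} does not actually scan a right boundary window $G_K\cap[1-\epsilon,1]$, so that set should be removed from your $\mathcal{W}_K$.
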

\vspace{-2pt}

\begin{algorithm}[t]
\caption{Approximate Calculation of $\widehat{\mathrm{MCDP}}(\epsilon)$}
\label{alg:approximate}
    \begin{algorithmic}[1]
    \INPUT Dataset $\mathcal{D}=\{\boldsymbol{x}_i,y_i,s_i\}_{i=1}^N$, model predictions $\{\hat{y}_i\}_{i=1}^N$, local measurement $\epsilon>0$, sampling frequency $K\in\mathbb{N}_+$.
    \STATE Calculate $\Delta\hat{F}(\hat{y})$ using $\{\hat{y}_i,s_i\}_{i=1}^N$ by Eq.~\eqref{eq:ecdf}; 
    \STATE Set the step-size $\delta=\frac{\epsilon}{K}$;
    \STATE $\widetilde{\mathrm{MCDP}}(\epsilon)= \min_j\Delta\hat{F}(j\delta),\ j\in\{0,\cdots,K\}$;
    \FOR{$j=1,\cdots,\lceil\frac{1}{\delta}\rceil-2K$}
    \STATE $M=\min_k \Delta\hat{F}(k\delta),\ k\in\{j,\cdots,j+2K-1\}$; 
    \STATE $\widetilde{\mathrm{MCDP}}(\epsilon)= \max\{\widetilde{\mathrm{MCDP}}(\epsilon),M\}$;
    \ENDFOR
    \OUTPUT $\widetilde{\mathrm{MCDP}}(\epsilon;K)$.
    \end{algorithmic}
\end{algorithm}

\textbf{Computational Complexity. }
The calculation process of Algorithms \ref{alg:exact} and \ref{alg:approximate} are mainly based on specific traverse strategies on $y_0$ and $\hat{y}$ in Eq.~\eqref{eq:mcdphat}. As the exact algorithm needs to traverse instances in $\mathcal{D}$ twice (lines 7-9 in Algorithm \ref{alg:exact}), its computational complexity is $\mathcal{O}(N^2)$. As to the approximate algorithm, the complexity of traversing sampled prediction points (line 4 in Algorithm \ref{alg:approximate}) is $\mathcal{O}(\frac{K}{\epsilon})$, and the complexity of finding the minimal CDF disparity in each consecutive $2K$ points is $\mathcal{O}(K)$. Therefore, the overall computational complexity is $\mathcal{O}(\frac{K^2}{\epsilon})$. More detailed analysis can be referred in Appendix \ref{sec:complexity}. In practice, as $K$ can be set with very small values (\ie $K\ll N$), the computational complexity of the approximate algorithm is greatly reduced compared to the exact algorithm.

\textbf{More Discussions about $K$. }
As discussed before, increasing $K$ values would boost the computational complexity of Algorithm \ref{alg:approximate}. On the flip side, according to Theorems \ref{theo:approover} and \ref{theo:appromono}, the estimation error would keep decreasing as $K$ increases, indicating a trade-off between efficiency and accuracy. In practice, both estimation accuracy and efficiency can achieve promising results with varying $K$ values.

\subsection{DiffMCDP: Bi-Level Optimization Algorithm}
\begin{algorithm}[t]
\caption{DiffMCDP: Bi-Level Optimization  Algorithm}
\label{alg:learning}
    \begin{algorithmic}[1]
    \INPUT Training data $\mathcal{D}=\{\boldsymbol{x}_i,y_i,s_i\}_{i=1}^N$, classifier $f$ parameterized by $\boldsymbol{\theta}$, regularization strength $\lambda$, temperature $\tau$, epoch number $E$, batch size $B$, learning rate $\eta$.
    \STATE Randomly initialize $\boldsymbol{\theta}\leftarrow\boldsymbol{\theta}_0$;
    \FOR{$e=0,\cdots,E-1$}
    \STATE Draw a mini-batch $\mathcal{D}_e=\{\boldsymbol{x}_i^{\prime},y_i^{\prime},s_i^{\prime}\}_{i=1}^B$ from $\mathcal{D}$;
    \STATE \emph{// The following calculations are based on $\mathcal{D}_e$}
    \STATE $\hat{y}_{i}^{\prime}=f(\boldsymbol{x}_i^{\prime}),\ i=1,\cdots,B$;
    \STATE $\hat{y}^*= \mathop{\arg\max}_{\hat{y}\in[0,1]} \Delta\Tilde{F}_{\tau}(\hat{y})$;
    \STATE $\mathcal{L}= \frac{1}{B}\sum_{i=1}^{B} \ell(\boldsymbol{\theta};\hat{y}_i^{\prime},y_i^{\prime}) +\lambda\Delta\Tilde{F}_{\tau}(\hat{y}^*)$;
    \STATE $\boldsymbol{\theta}_{e+1}= \boldsymbol{\theta}_e -\eta\nabla_{\boldsymbol{\theta}}\mathcal{L}$;
    \ENDFOR
    \OUTPUT A fair classifier $f$ parameterized by $\boldsymbol{\theta}_E$.
    \end{algorithmic}
\end{algorithm}

Based on previous analysis, it is essential to train a fair classifier which minimizes the maximal local disparity to approach demographic parity. According to property \ding{176} in Theorem \ref{theo:prop_MCDP}, we can minimize $\mathrm{MCDP}(0)$ as an upper bound of $\mathrm{MCDP}(\epsilon)$ for any $\epsilon>0$. A natural idea is to impose the $\widehat{\mathrm{MCDP}}(0)$ metric as a regularization term on the classification loss. However, as the empirical distribution functions are not differentiable \wrt model parameter $\boldsymbol{\theta}$, directly regularizing $\widehat{\mathrm{MCDP}}(0)$ is implausible. To address this issue, we firstly estimate $\Delta\hat{F}(\hat{y})$ in a differentiable way 
\begin{equation*}\label{eq:tempsigmoid}
    \Delta\Tilde{F}_{\tau}(\hat{y})= \left|\frac{1}{|\mathcal{S}_0|} \sum\limits_{i\in\mathcal{S}_0} \sigma_{\tau}(\hat{y}-\hat{y}_i)- \frac{1}{|\mathcal{S}_1|} \sum\limits_{i\in\mathcal{S}_1} \sigma_{\tau}(\hat{y}-\hat{y}_i)\right|,
\end{equation*}
where $\sigma_{\tau}(x)= \frac{1}{1+\exp(-\tau x)}$ is a variant of sigmoid function with temperature $\tau>0$ as a hyper-parameter \cite{han1995influence}. Notably, when the temperature tends to infinity, we have $\Delta\Tilde{F}_{\tau}(\hat{y})$ converges to the $\Delta\hat{F}(\hat{y})$ as follows.
\begin{theorem}\label{theo:tempsigmoid}
    $\Delta\Tilde{F}_{\tau}(\hat{y}) \overset{\text{a.e.}}{\longrightarrow} \Delta\hat{F}(\hat{y})$ as $\tau\to\infty$.
\end{theorem}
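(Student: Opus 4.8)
The plan is to reduce the claim to a pointwise statement about the temperature sigmoid and then lift it to $\Delta\tilde F_\tau$ via the triangle inequality. First I would recall that $\sigma_\tau(x) = 1/(1+\exp(-\tau x))$ satisfies, for any fixed $x \neq 0$, the limit $\sigma_\tau(x) \to \mathbb{I}(x > 0)$ as $\tau \to \infty$: indeed if $x > 0$ then $\exp(-\tau x) \to 0$ so $\sigma_\tau(x) \to 1$, and if $x < 0$ then $\exp(-\tau x) \to \infty$ so $\sigma_\tau(x) \to 0$; at $x = 0$ we have $\sigma_\tau(0) = 1/2$ for all $\tau$, so convergence fails only on the single point $x = 0$.

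Next I would fix a prediction value $\hat y \in [0,1]$ and apply this pointwise limit with $x = \hat y - \hat y_i$ for each $i$. The exceptional set is $\{\hat y : \hat y = \hat y_i \text{ for some } i\}$, which is finite (at most $N$ points), hence Lebesgue-null; for every $\hat y$ outside this set, $\sigma_\tau(\hat y - \hat y_i) \to \mathbb{I}(\hat y_i < \hat y) = \mathbb{I}(\hat y_i \le \hat y)$ (the last equality because $\hat y \neq \hat y_i$). Summing the finitely many terms in each group and using continuity of the absolute value, we get
\begin{equation*}
    \left|\frac{1}{|\mathcal{S}_0|}\sum_{i\in\mathcal{S}_0}\sigma_\tau(\hat y-\hat y_i) - \frac{1}{|\mathcal{S}_1|}\sum_{i\in\mathcal{S}_1}\sigma_\tau(\hat y-\hat y_i)\right| \longrightarrow \left|\hat F_0(\hat y) - \hat F_1(\hat y)\right| = \Delta\hat F(\hat y)
\end{equation*}
as $\tau \to \infty$, for every $\hat y$ in the complement of the null set $\{\hat y_1,\dots,\hat y_N\}$. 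That is exactly $\Delta\tilde F_\tau(\hat y) \to \Delta\hat F(\hat y)$ almost everywhere, which is the assertion.

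There is no serious obstacle here — the argument is a routine finite-sum limit together with the observation that the only bad points form a finite (null) set, which is precisely why the convergence is stated ``almost everywhere'' rather than ``everywhere.'' The one place to be slightly careful is the behavior at $\hat y = \hat y_i$: there $\sigma_\tau$ contributes $1/2$ instead of $1$, so $\Delta\tilde F_\tau$ need not converge to $\Delta\hat F$ at those points (and $\Delta\hat F$ itself is discontinuous there), but since these points have Lebesgue measure zero they are harmless for the a.e.\ statement. I would also note in passing that the convergence is in fact uniform on any set bounded away from all the $\hat y_i$, though this stronger fact is not needed.
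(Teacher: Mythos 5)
Your proposal is correct and follows essentially the same route as the paper's own proof: take the pointwise limit $\sigma_\tau(\hat y-\hat y_i)\to\mathbb{I}(\hat y_i<\hat y)$ for $\hat y\neq\hat y_i$, sum over the finitely many terms in each group, and observe that the exceptional set $\{\hat y_i\}_{i=1}^N$ is finite and hence Lebesgue-null. No differences worth noting.
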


With the differentiable estimation above, the fairness-regularized objective function can be written as
\begin{equation}\label{eq:regobj}
    \min\limits_{\boldsymbol{\theta}} \left(\frac{1}{|\mathcal{D}|}\sum\limits_{i=1}^{N} \ell(\boldsymbol{\theta};\hat{y}_i,y_i) +\lambda\cdot \max\limits_{\hat{y}\in[0,1]} \Delta\Tilde{F}_{\tau}(\hat{y})\right),
\end{equation} 
where $\ell(\boldsymbol{\theta};\hat{y}_i,y_i)$ denotes the classification loss for $\boldsymbol{x}_i$, and $\lambda>0$ controls the trade-off between accuracy and fairness. To solve Eq.~\eqref{eq:regobj}, we adopt the bi-level optimization approach -- firstly find the prediction which achieves the maximal CDF disparity as $\hat{y}^*= \mathop{\arg\max}_{\hat{y}\in[0,1]} \Delta\Tilde{F}_{\tau}(\hat{y})$, and then find the optimal model parameter by $\boldsymbol{\theta}^*= \mathop{\arg\min}_{\boldsymbol{\theta}} \frac{1}{|\mathcal{D}|}\sum_{i=1}^{N} \ell(\boldsymbol{\theta};\hat{y}_i,y_i) +\lambda\Delta\Tilde{F}_{\tau}(\hat{y}^*)$. We provide the detailed learning algorithm in Algorithm \ref{alg:learning}.

\section{Experiments}

\begin{figure*}[t]
    \centering
    \vspace{-5pt}
    \centering
    \includegraphics[width=0.99\linewidth]{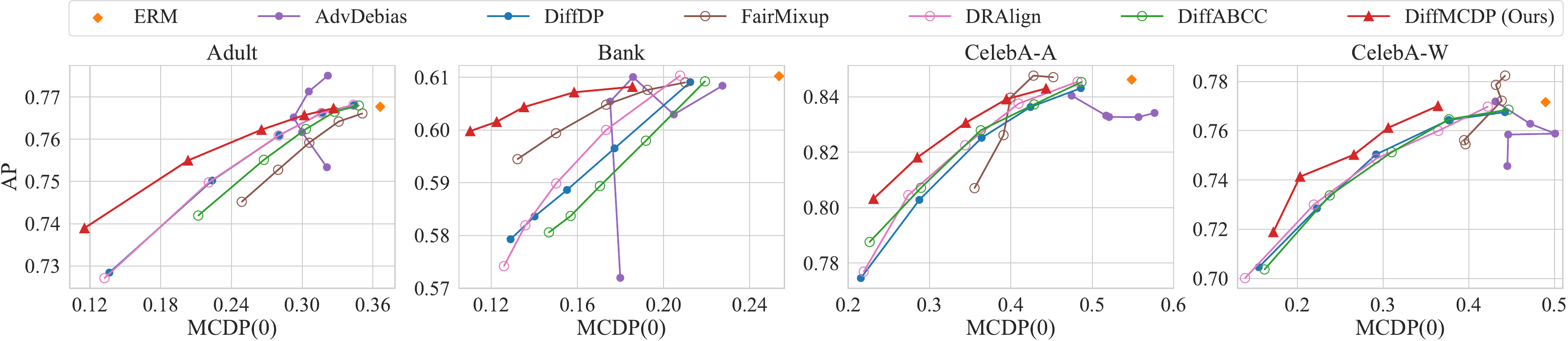}
    \vspace{-6pt}
    \caption{Trade-offs between $\mathrm{AP}$ and $\mathrm{MCDP}(0)$ of baselines and the proposed method. Each marker represents the average testing result of 5 runs with a specific fairness-accuracy trade-off coefficient. The curves closer to the upper-left corners indicate better performances.}
    \label{fig:ap_mcdp0}
    \vspace{-10pt}
\end{figure*}

\textbf{Datasets and Backbone Models. }
In our experiments, we adopt two tabular datasets and one image dataset for evaluation: \ding{172} \textbf{Adult} \cite{kohavi1996scaling} is a popular UCI dataset which contains personal information of over 40K individuals from US 1994 census data. The task is to predict whether a person's annual income is over \$50K or not, and we treat \emph{gender} as the sensitive attribute. \ding{173} \textbf{Bank} \cite{moro2014data} dataset is collected from a Portuguese banking institution's marketing campaigns, and its goal is to predict whether a client will make a deposit subscription or not. We take \emph{age} as the sensitive attribute (whether the age is over 25 or not). \ding{174} \textbf{CelebA} \cite{liu2015deep} dataset contains over 20K face images of celebrities, where each image has 40 human-labeled binary face attributes. We use \emph{gender} as the sensitive attribute, and choose \emph{attractive face} and \emph{wavy hair} as target attributes to create two meta-datasets, denoted as \textbf{CelebA-A} and \textbf{CelebA-W} respectively. For tabular datasets, we adopt a two-layer multi-layer perceptron as the backbone model. For CelebA-A and CelebA-W, we use ResNet-18 \cite{he2016deep} initialized with pretrained weights.

\textbf{Baselines and Evaluation Protocols. }
We compare our proposed method (denoted as \textbf{DiffMCDP}) with the following baselines: \ding{172} \textbf{ERM} trains the model with the vanilla classification loss without fairness objectives. \ding{173} \textbf{AdvDebias} \cite{zhang2018mitigating} minimizes the adversary's ability of inferring sensitive attributes from model representations. \ding{174} \textbf{DiffDP} imposes the $\Delta\mathrm{DP}$ metric as a regularization term on the empirical risk. \ding{175} \textbf{FairMixup} \cite{chuang2021fair} regularizes models on interpolated distributions between two groups. \ding{176} \textbf{DRAlign} \cite{li2023fairer} uses gradient-guided parity alignment to encourage gradient-weighted consistency of neurons across groups. \ding{177} \textbf{DiffABCC} regularizes the $\mathrm{ABCC}$ metric on the empirical risk. We use average precision ($\mathrm{AP}$) to evaluate the classification accuracy, and use $\Delta{\mathrm{DP}}$, ${\mathrm{ABCC}}$, and ${\mathrm{MCDP}}(\epsilon)$ to measure the algorithmic fairness. More implementation details can be referred in Appendix \ref{sec:moresetup}.

\subsection{Performance Comparison}\label{sec:exp_learning} 

Figure \ref{fig:ap_mcdp0} shows the trade-off relationships between $\mathrm{AP}$ and $\mathrm{MCDP}(0)$ of baselines and our proposed method. We can observe that DiffMCDP consistently outperforms other baselines in terms of fairness-accuracy trade-offs across all datasets, which suggests that our proposed fair training algorithm can effectively reduce maximal local disparity to improve fairness. In addition, traditional fair training algorithms can achieve desired trade-offs between accuracy and our proposed fairness metric. For example, the $\mathrm{MCDP}(0)$ values of both DiffDP and DiffABCC decrease with increasing regularization strengths, which demonstrates that solely optimizing expectation-level or distribution-level metrics do contribute to minimizing the maximal local disparity. Nevertheless, they obtain inferior performance compared to DiffMCDP which directly regularizes $\mathrm{MCDP}(0)$, indicating their limitation in approaching demographic parity.

\begin{figure}[t]
    \centering
    \includegraphics[width=0.99\linewidth]{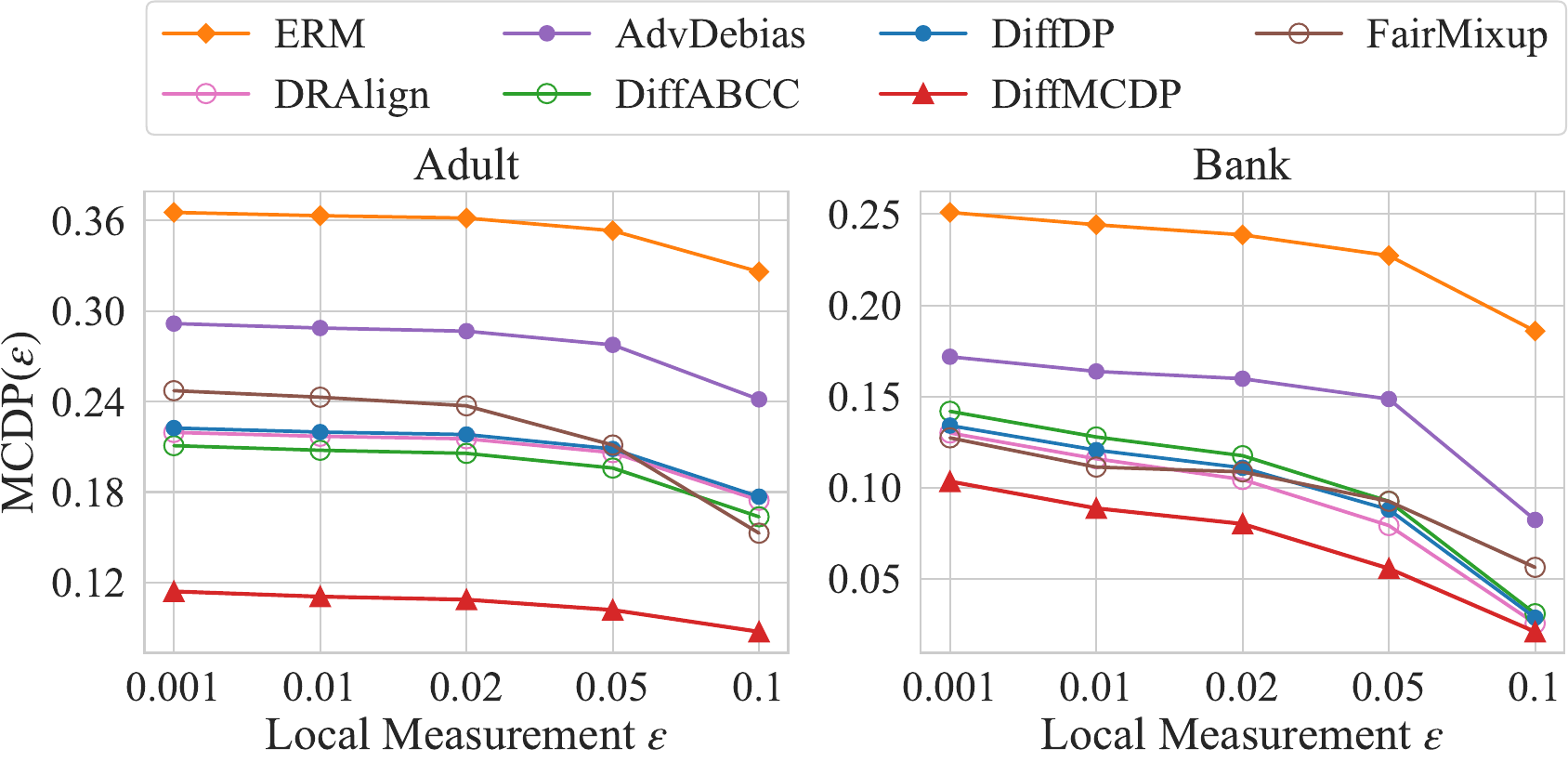}
    \vspace{-6pt}
    \centering
    \caption{Comparison of $\mathrm{MCDP}(\epsilon)$ results with varying $\epsilon$. The metric values are calculated by the exact algorithm (Algorithm \ref{alg:exact}).}
    \label{fig:MCDP_eps}
    \vspace{-15pt}
\end{figure}

To explore the performance of our method under different fairness measurements, we report the values of $\Delta\mathrm{DP}$, $\mathrm{ABCC}$ and $\mathrm{MCDP}(0)$ metrics in Tables \ref{tab:main_tabular} and \ref{tab:main_image} (the trade-off curves can be referred in Appendix \ref{sec:moretradeoff}). While DiffMCDP achieves the optimal $\mathrm{MCDP}(0)$ results across various datasets, it also obtains comparable even superior $\Delta\mathrm{DP}$ or $\mathrm{ABCC}$ values compared to other baselines. For example, DiffMCDP achieves the lowest $\mathrm{ABCC}$ values in tabular datasets, and its $\Delta\mathrm{DP}$ values are optimal in two image datasets. These results illustrate that optimizing the maximal local disparity is also beneficial to improve both expectation-level and distribution-level fairness metrics.

Lastly, we plot the changes of $\mathrm{MCDP}(\epsilon)$ as varying local measurements $\epsilon$ in Figure \ref{fig:MCDP_eps} (the results of image datasets are deferred in Appendix \ref{sec:varyeps}). From the figure, we can observe that DiffMCDP still consistently outperforms other baselines in terms of different variants of the metric. This validates the effectiveness of regularizing the upper bound of $\mathrm{MCDP}(\epsilon)$ by $\mathrm{MCDP}(0)$ in the training objective, demonstrating that our proposed framework is applicable to various scenarios with varying $\epsilon$ values. Additionally, it is noteworthy that the relative performance of other baselines changes with increasing $\epsilon$ values (\eg there exists many intersection lines between 0.05 and 0.1 in Adult). This indicates that the relative fairness performance of various algorithms may change based on different manners of calculating local disparity, thus it's essential to select varying $\epsilon$ values in $\mathrm{MCDP}(\epsilon)$ for more comprehensive evaluation.

\begin{figure}[t]
    \centering
    \subcaptionbox{Estimation Accuracy Comparison\label{fig:err_epsK}}{
        \centering
        \includegraphics[width=0.968\linewidth]{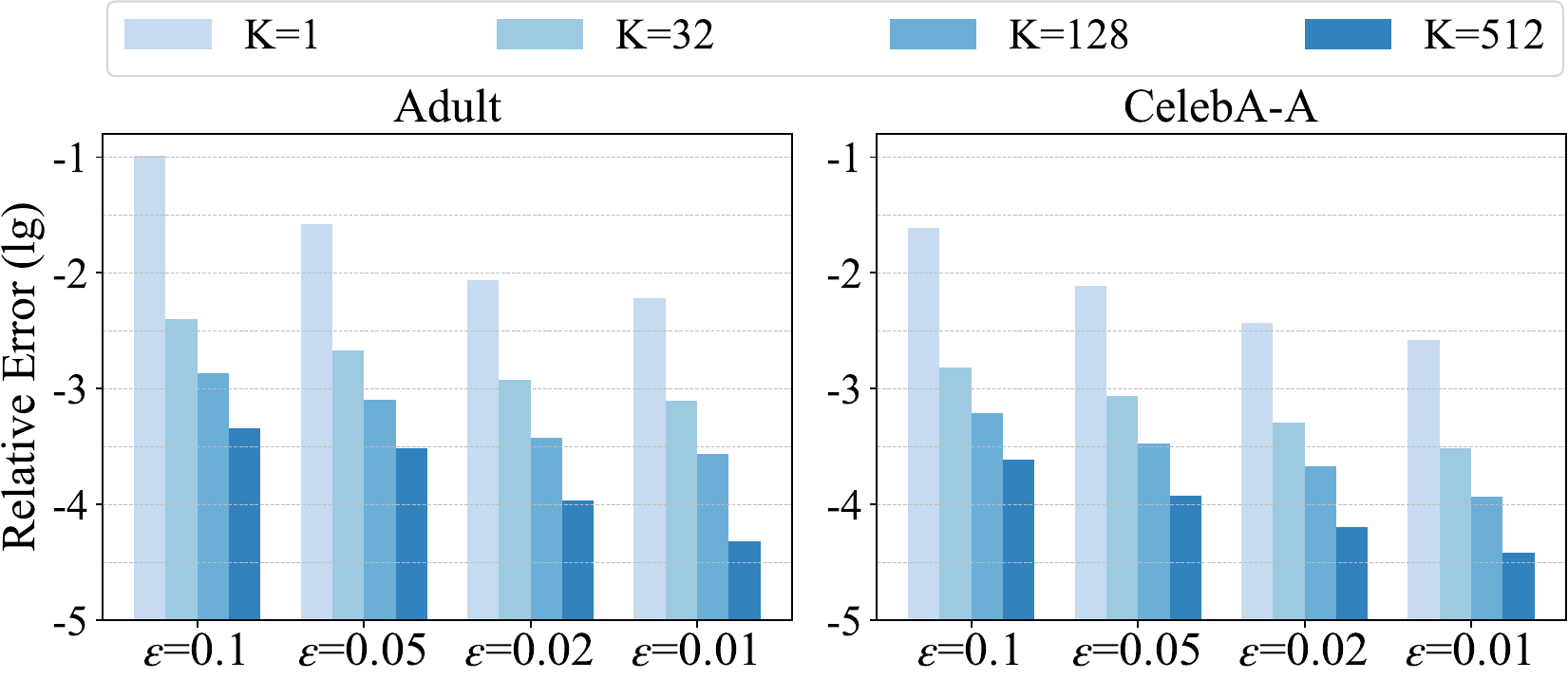}}
    \subcaptionbox{Calculation Efficiency Comparison}{
        \includegraphics[width=0.99\linewidth]{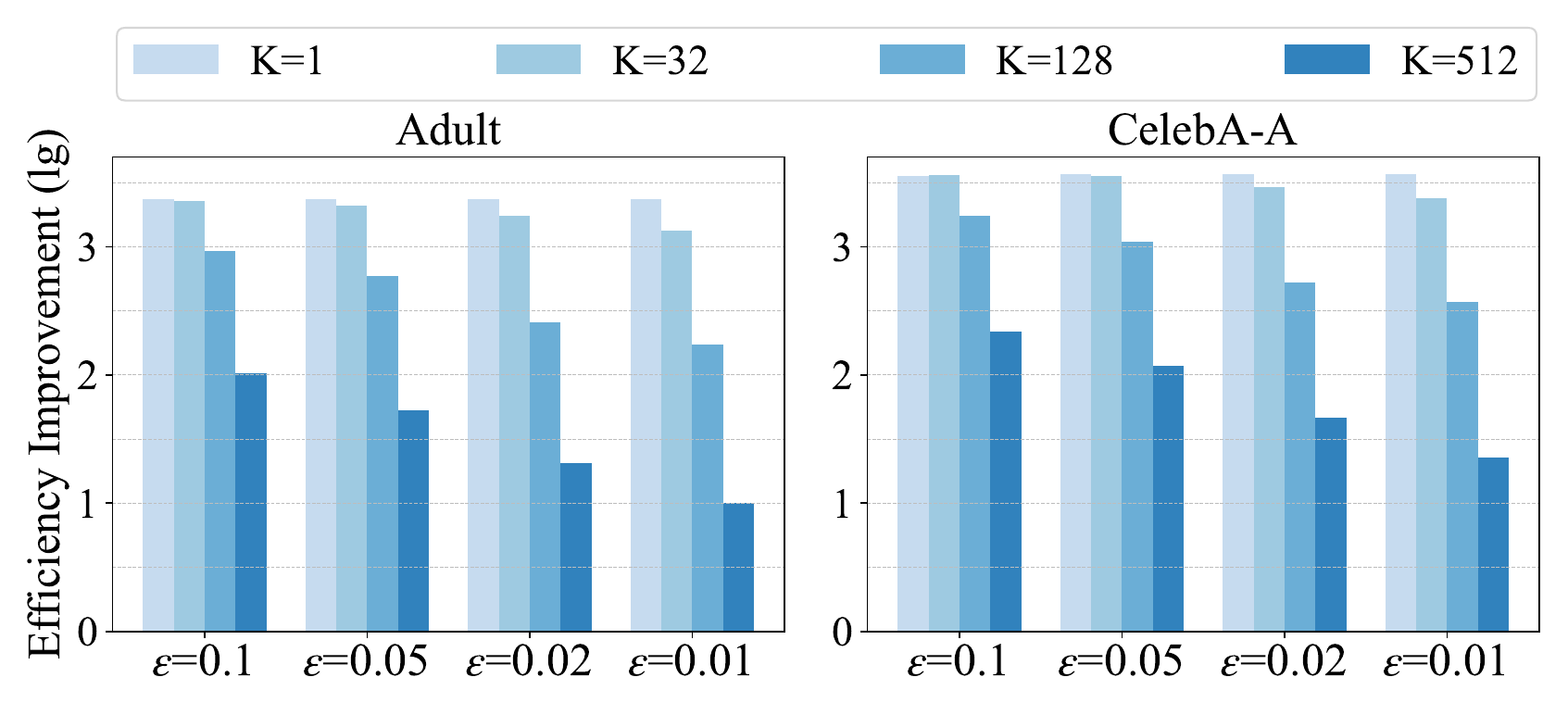}}
    \vspace{-6pt}
    \centering
    \centering
    \caption{Varying $K$ and $\epsilon$ in $\widehat{\mathrm{MCDP}}(\epsilon)$ calculation algorithms.}
    \centering%
    \label{fig:time_err}
    \vspace{-15pt}
\end{figure}

\begin{table*}[t]
    \vspace{-8pt}
    \caption{The performance \wrt different fairness metrics of baselines and our proposed DiffMCDP algorithm on tabular datasets. Following previous work \cite{jung2023re}, we select the fairest model which achieves at least 95\% of the vanilla model's accuracy (\ie $\mathrm{AP}$ of ERM) on validation set, and report the average and standard deviation of metric values of 5 independent runs with different seeds on testing set. The optimal and sub-optimal results are highlighted with \textbf{bold} and \underline{underline}, respectively.}
    \label{tab:main_tabular}
    \vspace{1pt}
    \centering
    \resizebox{0.95\textwidth}{!}{\begin{tabular}{l|rrrl|rrrl}
    \toprule
    \multicolumn{1}{c}{} & \multicolumn{4}{c}{Adult} & \multicolumn{4}{c}{Bank} \\
    \cmidrule{2-9}
    \multicolumn{1}{c}{} & \multicolumn{1}{c}{$\mathrm{AP}\uparrow$} & \multicolumn{1}{c}{$\Delta\mathrm{DP}\downarrow$} & \multicolumn{1}{c}{$\mathrm{ABCC}\downarrow$} & \multicolumn{1}{c}{$\mathrm{MCDP}(0)\downarrow$} & \multicolumn{1}{c}{$\mathrm{AP}\uparrow$} & \multicolumn{1}{c}{$\Delta\mathrm{DP}\downarrow$} & \multicolumn{1}{c}{$\mathrm{ABCC}\downarrow$} & \multicolumn{1}{c}{$\mathrm{MCDP}(0)\downarrow$} \\
    \midrule
    ERM & \bms{76.77}{0.54} & \ms{19.67}{0.94} & \ms{18.89}{0.29} & \ms{36.62}{0.53} & \bms{61.02}{1.12} & \ms{9.85}{1.28} & \ms{9.83}{0.98} & \ms{25.36}{1.81} \\ \midrule
    AdvDebias & \ums{76.52}{0.62} & \ms{12.73}{1.81} & \ms{12.95}{1.35} & \ms{29.28}{2.26} & \ums{60.54}{1.02} & \ms{1.95}{2.76} & \ms{3.95}{0.97} & \ms{17.52}{4.12} \\
    DiffDP & \ms{75.02}{0.28} & \ms{7.77}{1.20} & \ms{8.97}{0.73} & \ms{22.37}{1.11} & \ms{58.36}{1.67} & \ums{1.03}{0.23} & \ms{2.13}{0.20} & \ms{14.01}{1.34} \\
    FairMixup & \ms{74.52}{0.40} & \bms{3.87}{1.20} & \ums{7.34}{0.62} & \ms{24.87}{1.53} & \ms{59.45}{1.94} & \ms{1.28}{0.44} & \ms{2.95}{0.91} & \ums{13.22}{2.91} \\
    DRAlign & \ms{74.98}{0.28} & \ms{7.64}{1.22} & \ms{8.86}{0.75} & \ms{22.08}{1.19} & \ms{58.19}{1.54} & \ms{1.15}{0.41} & \ums{1.98}{0.23} & \ms{13.58}{1.26} \\
    DiffABCC & \ms{74.19}{0.24} & \ums{5.90}{1.28} & \ms{7.83}{0.74} & \ums{21.18}{1.23} & \ms{58.06}{1.60} & \bms{0.99}{0.44} & \ms{2.23}{0.19} & \ms{14.67}{1.35} \\ \midrule
    DiffMCDP & \ms{73.90}{0.29} & \ms{6.63}{0.85} & \bms{6.09}{0.59} & \bms{11.53}{1.12} & \ms{59.98}{1.80} & \ms{2.13}{0.72} & \bms{1.83}{0.28} & \bms{11.02}{1.00} \\ 
    \bottomrule
    \end{tabular}}
    \vspace{-10pt}
\end{table*}

\begin{table*}[t]    
  \centering
  \vspace{-5pt}
  \caption{Performance comparison on the CelebA-A and CelebA-W datasets. All details are the same as Table \ref{tab:main_tabular}.}
  \label{tab:main_image}
  \vspace{1pt}
  \centering
  \resizebox{0.95\textwidth}{!}{\begin{tabular}{l|rrrl|rrrl}
  \toprule
  \multicolumn{1}{c}{} & \multicolumn{4}{c}{CelebA-A} & \multicolumn{4}{c}{CelebA-W} \\
  \cmidrule{2-9}
  \multicolumn{1}{c}{} & \multicolumn{1}{c}{$\mathrm{AP}\uparrow$} & \multicolumn{1}{c}{$\Delta\mathrm{DP}\downarrow$} & \multicolumn{1}{c}{$\mathrm{ABCC}\downarrow$} & \multicolumn{1}{c}{$\mathrm{MCDP}(0)\downarrow$} & \multicolumn{1}{c}{$\mathrm{AP}\uparrow$} & \multicolumn{1}{c}{$\Delta\mathrm{DP}\downarrow$} & \multicolumn{1}{c}{$\mathrm{ABCC}\downarrow$} & \multicolumn{1}{c}{$\mathrm{MCDP}(0)\downarrow$} \\
  \midrule
  ERM & \bms{84.62}{0.38} & \ms{54.32}{0.77} & \ms{37.20}{0.69} & \ms{54.85}{0.92} & \ums{77.16}{0.68} & \ms{34.02}{2.24} & \ms{28.94}{1.36} & \ms{48.96}{2.16} \\ \midrule
  AdvDebias & \ums{84.04}{0.51} & \ms{46.90}{1.89} & \ms{30.72}{2.05} & \ms{47.45}{1.83} & \bms{77.19}{0.47} & \ms{31.14}{1.44} & \ms{25.27}{2.48} & \ms{43.11}{5.20} \\
  DiffDP & \ms{82.52}{0.42} & \ms{36.03}{1.77} & \ms{21.33}{1.40} & \ms{36.40}{1.61} & \ms{75.04}{1.60} & \ms{23.01}{0.95} & \ms{16.99}{1.29} & \ms{29.17}{3.00} \\
  FairMixup & \ms{80.70}{1.03} & \ms{34.64}{3.23} & \bms{14.15}{1.63} & \ms{35.52}{2.67} & \ms{75.45}{0.72} & \ms{26.49}{4.48} & \ms{17.96}{1.04} & \ms{39.61}{3.01} \\
  DRAlign & \ms{80.45}{1.05} & \ums{26.36}{2.48} & \ums{15.25}{1.71} & \ums{27.38}{2.46} & \ms{74.86}{1.49} & \ms{22.20}{1.54} & \ms{16.95}{1.11} & \ms{29.16}{2.76} \\
  DiffABCC & \ms{80.71}{0.86} & \ms{28.05}{1.83} & \ms{15.70}{1.67} & \ms{28.94}{1.89} & \ms{73.38}{1.69} & \ums{19.93}{1.85} & \bms{13.67}{0.88} & \ums{23.77}{3.44} \\ \midrule
  DiffMCDP & \ms{80.32}{0.67} & \bms{21.10}{1.94} & \ms{16.03}{1.61} & \bms{23.10}{1.66} & \ms{74.14}{1.26} & \bms{19.58}{1.91} & \ums{15.35}{1.26} & \bms{20.29}{1.78} \\ 
  \bottomrule
  \end{tabular}}
  \vspace{-6pt}
\end{table*}

\subsection{Exact and Approximate Calculation}\label{sec:expestimate}
To explore the estimation accuracy and efficiency of the approximate algorithm on real-world data, we run both the exact and approximate algorithms with varying $K$ and $\epsilon$ values to compute $\widehat{\mathrm{MCDP}}(\epsilon)$ of the testing results in Figure \ref{fig:ap_mcdp0}. Then we calculate the \emph{relative error} by $(V_a-V_e)/V_e$, where $V_e$ and $V_a$ are the estimated $\widehat{\mathrm{MCDP}}(\epsilon)$ metric value returned by the exact and approximate algorithms, respectively. Moreover, we measure the \emph{efficiency improvement} by $T_e/T_a$, where $T_e$ and $T_a$ represents the time of a single run of the exact and approximate algorithms.

We plot the results across different settings in Figure \ref{fig:time_err} (results on more datasets are deferred in Appendix \ref{sec:detailestimate}), where we have the following observations. \ding{172} With increasing $K$ values, the relative error decreases as Theorems \ref{theo:approover} and \ref{theo:appromono} suggests. Meanwhile, the calculation time also increases, which is consistent with the positive correlation between computational complexity and sampling frequency. It is noteworthy that selecting moderate $K$ values can achieve promising performance. For example, when $K=32$, the average estimation error and efficiency improvement  are $0.03\%-3\%$ and over $1000$ times, respectively. \ding{173} As $\epsilon$ increases, the calculation efficiency improvement over the exact algorithm keeps increasing, which can be explained by the inverse correlation between computational complexity and $\epsilon$. Meantime, the relative error also increases, and a potential reason is that estimating the minimal CDF value with $2K$ sampled prediction points within a larger interval tends to be more inaccurate. \ding{174} The efficiency improvements are more obvious in CelebA-A dataset compared with Adult dataset, which is attributed to that the number of instances in CelebA-A is larger than Adult, and the computational complexity of the exact algorithm is quadratically proportional to the sample size. This indicates that the efficiency superiority of approximation algorithm becomes more evident when adopted to predictions on more instances.

\subsection{In-Depth Analysis}\label{sec:indepth}

\begin{figure}[t]
    \centering
\includegraphics[width=0.99\linewidth]{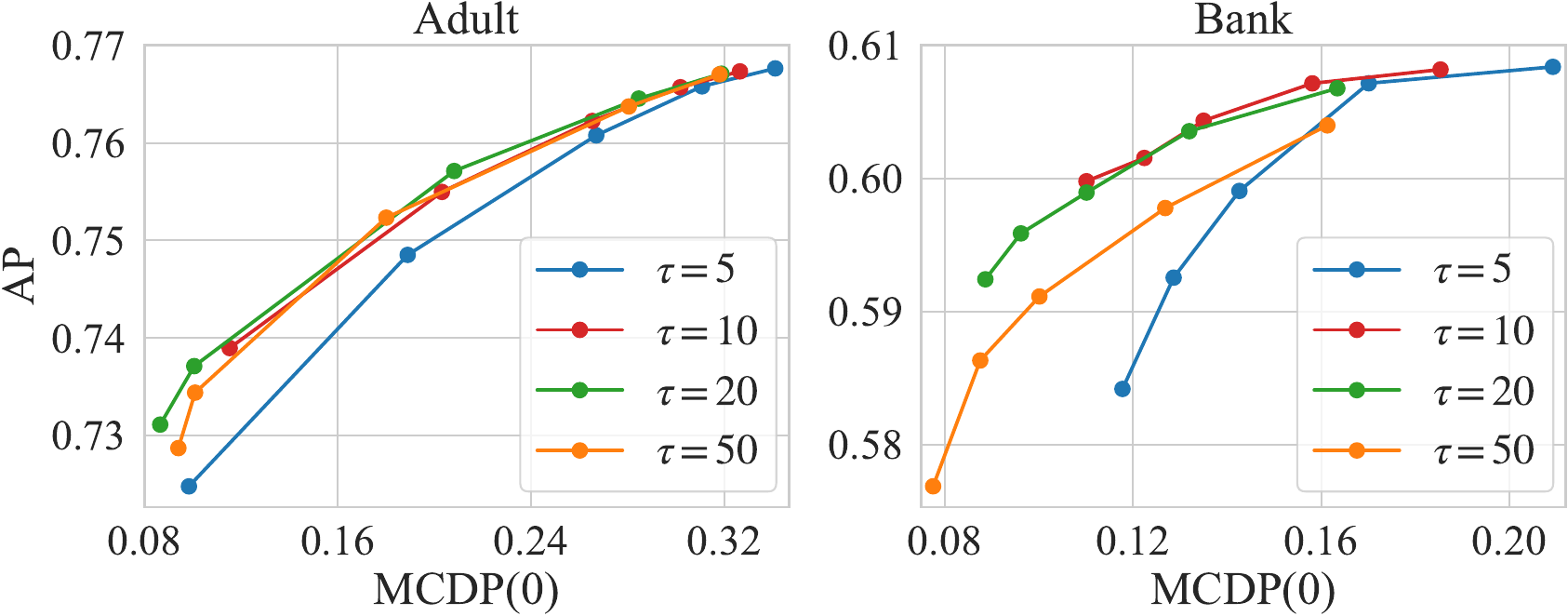}
    \vspace{-6pt}
    \centering
    \caption{Trade-offs with varying temperature $\tau$.}
    \label{fig:temperature}
    \vspace{-15pt}
\end{figure}

\textbf{Effect of Temperature $\tau$. }The temperature $\tau$ in Algorithm \ref{alg:learning} is crucial to approximate the true maximal CDF disparity. On the one hand, as shown in Theorem \ref{theo:tempsigmoid}, the estimation error vanishes when the temperature $\tau$ grows. On the other hand, in practice, a large $\tau$ value may arise the gradient vanishing problem \cite{roodschild2020new}, which limits the learning capacity and optimization convergence. To verify this point, we tune $\tau$ in $\{5,10,20,50\}$ and plot the fairness-accuracy trade-off curves as shown in Figure \ref{fig:temperature} (the results of image datasets are deferred in Appendix \ref{sec:varytau}). We find that very small or large temperatures ($\tau=5,50$) may lead to dissatisfactory results, thus it's better to adopt moderate temperatures ($\tau=10, 20$) to effectively trade-off the estimation accuracy and the gradient magnitude.

\textbf{Visualizations of Model Predictions. }Figure \ref{fig:vis} visualizes the empirical distribution functions of model predictions for two age groups in Bank dataset, where the training methods include ERM, DiffDP and DiffMCDP. As ERM optimizes the vanilla classification loss without fairness objectives, its prediction distribution gap is the most evident. Taking a step further, DiffDP regularizes the $\Delta\widehat{\mathrm{DP}}$ metric on empirical risk, thus the average prediction gap is much smaller than ERM. Nevertheless, it leaves a lot to be desired in terms of reducing the maximal local disparity. By contrast, our proposed DiffMCDP method is much more effective in narrowing the maximal gap between prediction distributions, \ie the $\mathrm{MCDP}(0)$ value of DiffMCDP declined by $47.98\%$ compared to DiffDP, which is also beneficial for minimizing the difference of average predictions.

\begin{figure}[t]
    \centering
    \includegraphics[width=0.99\linewidth]{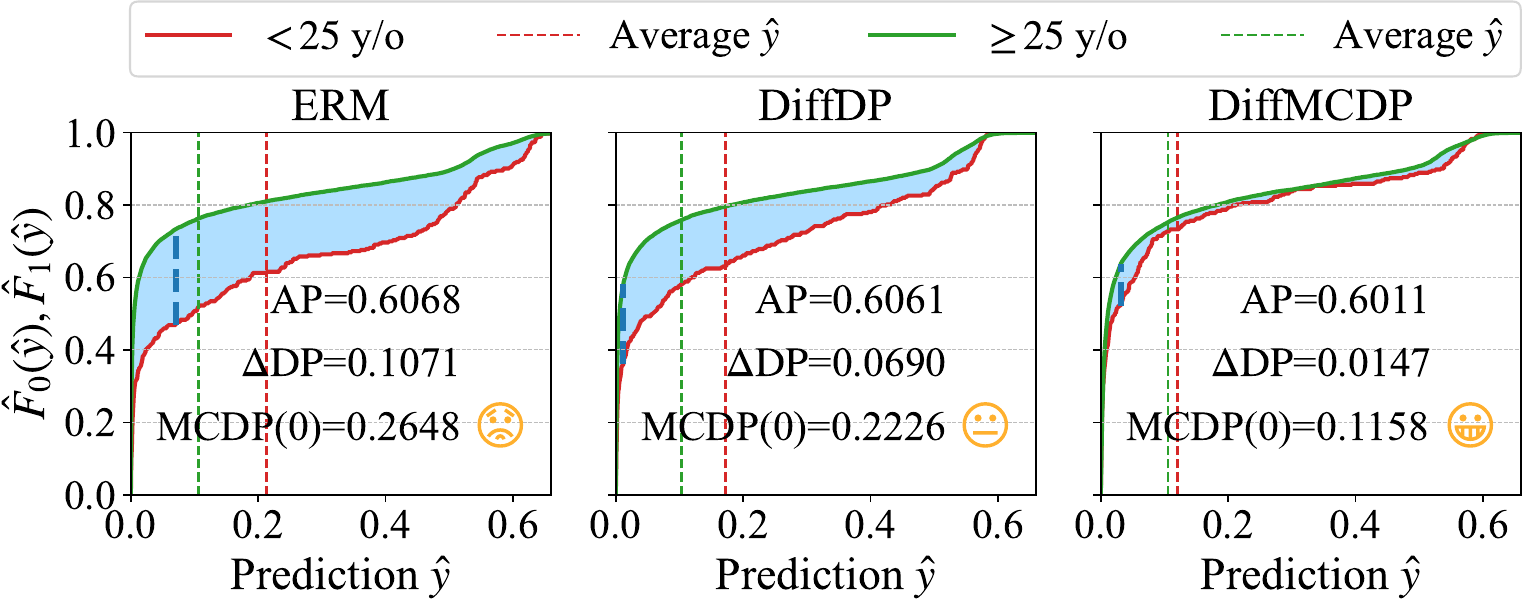}
    \vspace{-6pt}
    \centering
    \caption{Visualizations of empirical distribution functions.}
    \label{fig:vis}
    \vspace{-15pt}
\end{figure}
\section{Related Work}

\textbf{Fairness Notions and Metrics.} 
While machine learning algorithms are widely-used in high-stake applications with broad social impacts, algorithmic fairness has been a crucial requirement in accessing and regulating the models \cite{yeung2018algorithmic,chen2019fairness}. Generally, algorithmic fairness notions can be categorized into group fairness \cite{feldman2015certifying,shui2022learning,sun2023fair,jin2024fairly}, individual fairness \cite{dwork2012fairness,biega2018equity,li2023learning,wicker2023certification} and counterfactual fairness \cite{kusner2017counterfactual,chiappa2019path,wu2019counterfactual,ma2022learning,han2023achieving,rosenblatt2023counterfactual}. Group fairness requires the model to treat different groups specified by certain sensitive attributes equally without discrimination. For example, demographic parity \cite{zemel2013learning,jiang2020wasserstein,fukuchi2023demographic} defines fairness as the independence of model predictions and sensitive attributes, while equalized odds \cite{hardt2016equality} requires the prediction and group membership to be independent conditioned on the target label.

To measure the degree of fairness violation, various fairness metrics have been adopted for model evaluation \cite{garg2020fairness,franklin2022ontology,han2023ffb}. Specifically, to measure the violation of demographic parity, $\Delta\mathrm{DP}$ \cite{zemel2013learning} calculates the difference of average model predictions of instances in two demographic groups; $p$-Rule \cite{zafar2017fairness} computes the ratio between probabilities of two groups assigned the positive decision outcome; $\mathrm{SDD/SPDD}$ \cite{jiang2020wasserstein} averages the binary group prediction's disparity over 100 uniformly-spaced thresholds; and $\mathrm{ABCC}$ \cite{han2023retiring} measures the difference between the prediction distribution for different groups. In this work, we point out that these expectation-level or distribution-level metrics may fail to measure unfairness in certain cases, since the extreme but important local disparity is prone to be averaged by overall variation.

\textbf{Fair Machine Learning Algorithms.} 
To alleviate unfairness of machine learning systems, various fairness-aware algorithms have been proposed, which can be generally divided into three categories: pre-processing methods \cite{kamiran2012data,calmon2017optimized,zhang2018mitigating,biswas2021fair} attempt to adjust the training data distribution to remove the underlying data bias; in-processing methods \cite{kamishima2012fairness,zafar2017fairness2,zafar2017fairness,chuang2021fair,roh2020fairbatch} aim to reduce model's intrinsic discrimination during the training stage; and post-processing \cite{hardt2016equality,pleiss2017fairness,noriega2019active,chen2023post,yin2023fair} methods perform calibration on model predictions after the training. Among these three categories, in-processing methods fundamentally improve the fairness of both model outputs and representations, which can also lead to the highest utility \cite{barocas-fairbook,wan2023processing}. Recently, there is also a line of works focusing on how to achieve algorithmic fairness in special cases, for instance, fairness under distributional shift \cite{chai2022fairness,jiang2023chasing,roh2023improving} or with missing attribute values \cite{zhao2022towards,feng2023adapting,zhu2023weak}. Nevertheless, it is unknown that whether these algorithms can improve fairness in terms of reducing maximal local disparity or not, and this work re-evaluates some of the methods on benchmark datasets.

\vspace{-3pt}
\section{Conclusions and Future Work}

In this work, we reveal that previous expectation-level fairness metrics ($\Delta\mathrm{DP}$) cannot accurately measure the violation of demographic parity. Meanwhile, the overall variation of distribution-level metrics ($\mathrm{ABCC}$) may conceal the extreme but important local disparity. Upon this understanding, we propose a novel metric called $\mathrm{MCDP}(\epsilon)$, which calculates the maximal local disparity (defined by the minimal CDF disparity of a prediction's $\epsilon$-neighborhood) of two demographic groups. Accordingly, we propose two algorithms to estimate $\mathrm{MCDP}(\epsilon)$ with finite samples, where the approximate algorithm greatly reduces the computational complexity with high accuracy compared to the exact one. Furthermore, we develop a fair model learning framework which regularizes a differentiable estimation of $\mathrm{MCDP}(0)$. Extensive experiments are conducted on both tabular and image datasets demonstrate the effectiveness of our proposal.

In the future, we plan to extend the $\mathrm{MCDP}(\epsilon)$ metric to measure the unfairness in terms of other fairness notions, such as equalized odds \cite{hardt2016equality} and demographic parity over multiple or continuous attributes \cite{jiang2022generalized,grari2023fairness}. Moreover, designing more effective fair learning algorithms to control the maximal local disparity is also a interesting and promising research direction.

\section*{Acknowledgement}
This work was supported in part by National Natural Science Foundation of China (No. 623B2002 and 62272437) and the CCCD Key Lab of Ministry of Culture and Tourism.
\section*{Impact Statement}

Fairness is a crucial issue when designing trustworthy machine learning systems. Recent years have witnessed many impressive works which aims at developing fair machine learning algorithms under various scenarios. In this paper, by noticing the gap between overall distribution disparity (which is the core of traditional fairness measurements) and local disparity, we propose a novel fairness metric called $\mathrm{MCDP}(\epsilon)$, which captures the maximal local disparity of model predictions on two demographic groups. Compared with previous metrics such as $\Delta\mathrm{DP}$ and $\mathrm{ABCC}$, $\mathrm{MCDP}(\epsilon)$ is more strict and conservative in measuring the violation of demographic parity, which is beneficial for reducing the risk of discrimination and improving algorithmic fairness when developing high-stake applications. We call on the research community to investigate more rigorous and theoretically-guaranteed fairness evaluation protocols, which can advance the field of trustworthy machine learning.

\balance
\bibliography{refs}
\bibliographystyle{icml2024}

\newpage
\appendix
\onecolumn
\section*{Appendix}

\section{Proofs}\label{sec:proofs}

\subsection{Proofs of Theorem \ref{theo:prop_MCDP}}
\textbf{Theorem \ref{theo:prop_MCDP}} (Properties of $\mathrm{MCDP}(\epsilon)$). 
\emph{The proposed $\mathrm{MCDP}(\epsilon)$ metric has the following desired properties: 
\ding{172} $\mathrm{MCDP}(\epsilon)$ has a range of $[0,1]$. 
\ding{173} $\mathrm{MCDP}(0)=0$ holds if and only if demographic parity is established.
\ding{174} $\mathrm{MCDP}(0)$ is invariant to any monotone and invertible transformation $T:[0,1]\to[0,1]$ on $\hat{y}$.
\ding{175} $\mathrm{MCDP}(\epsilon)$ is a monotonically decreasing function \wrt $\epsilon$.
\ding{176} Assume $\Delta F(\hat{y})$ is continuous on $[0,1]$ with Lipschitz constant $L$ \cite{goldstein1977optimization}, then
\begin{equation*}
    \mathrm{ABCC}\leq
    \begin{cases}
        \mathrm{MCDP}(\epsilon)+\frac{\epsilon L}{2},& \text{if }L\leq2, \\
        \mathrm{MCDP}(\epsilon)+2\epsilon\left(1-\frac{1}{L}\right),& \text{if }L>2.
    \end{cases}
\end{equation*}
}

\begin{proof}
\ding{172} If demographic parity is established, \ie $\forall\hat{y}\in[0,1],\ F_0(\hat{y})=F_1(\hat{y})$, then $\mathrm{MCDP}(\epsilon)=0$. In addition, let $F_0(\hat{y})=\mathbb{I}(\hat{y}=1)$ and $F_1(\hat{y})=1$, we have $\mathrm{MCDP}(\epsilon)=1$. This proves that $\mathrm{MCDP}(\epsilon)\in[0,1]$.

\ding{173} According to Eq.~\eqref{eq:MCDPeps}, $\mathrm{MCDP}(0)$ can be written as
\begin{equation*}
    \mathrm{MCDP}(0)= \max\limits_{\hat{y}\in[0,1]} \Delta F(\hat{y})= \max\limits_{\hat{y}\in[0,1]} \left|F_0(\hat{y})-F_1(\hat{y})\right|.
\end{equation*}
Thus the following holds
\begin{equation*}
    \mathrm{MCDP}(0)=0\ \iff\ \forall\hat{y}\in[0,1],\ F_0(\hat{y})=F_1(\hat{y}),
\end{equation*}
which indicates that $\mathrm{MCDP}(0)=0$ and demographic parity are equivalent.

\ding{174} Denote the prediction which achieves the largest $\mathrm{MCDP}(0)$ value as $\hat{y}^*= \mathop{\arg\max}_{\hat{y}\in[0,1]} \Delta F(\hat{y})$. Moreover, denote the CDFs for the transformed predictions in two groups as $F_0^T(\hat{y})$ and $F_1^T(\hat{y})$. As $T$ is a monotone and invertible transformation on $[0,1]$, for any $\hat{y}^{\prime}\in[0,1]$, there exists $\hat{y}=T^{-1}(\hat{y}^{\prime})$, such that $F_a^T(\hat{y}^{\prime})= F_a(\hat{y}),\ a\in\{0,1\}$. Thus we have
\begin{equation*}
    \left|F_0^T(\hat{y}^{\prime})-F_1^T(\hat{y}^{\prime})\right|= \left|F_0(\hat{y})-F_1(\hat{y})\right|\leq \left|F_0(\hat{y}^*)-F_1(\hat{y}^*)\right|= \mathrm{MCDP}(0),\ \forall\hat{y}^{\prime}\in[0,1],
\end{equation*}
and the equal sign holds if and only if $\hat{y}^{\prime}=T(\hat{y}^*)$. This indicates that $\mathrm{MCDP}(0)$ is invariant to $T$.

\ding{175} For any $y_0\in[0,1]$ and $0\leq\epsilon_a<\epsilon_b\leq1$, we have 
\begin{equation*}
    \{\hat{y}:\left|\hat{y}-y_0\right|\leq\epsilon_b\} \subset \{\hat{y}:\left|\hat{y}-y_0\right|\leq\epsilon_a\}\ \Longrightarrow\ \max\limits_{y_0\in[0,1]} \min\limits_{\left|\hat{y}-y_0\right|\leq\epsilon_b} \Delta F(\hat{y})\geq \max\limits_{y_0\in[0,1]} \min\limits_{\left|\hat{y}-y_0\right|\leq\epsilon_a} \Delta F(\hat{y}),
\end{equation*}
which yields that $\mathrm{MCDP}(\epsilon_a)\leq \mathrm{MCDP}(\epsilon_b)$.
\end{proof}

\ding{176} To complete the proof of Property \ding{176}, we firstly present and prove the following two lemmas:
\begin{lemma}\label{lemma:L<2}
    If $\Delta F(\hat{y})$ is continuous on $[0,1]$ with Lipschitz constant $L\leq2$, then $\mathrm{MCDP}(\epsilon)=0 \Rightarrow \mathrm{ABCC}\leq\frac{\epsilon L}{2}$.
\end{lemma}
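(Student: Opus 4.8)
The plan is to exploit the fact that $\mathrm{MCDP}(\epsilon)=0$ forces the zero set of $\Delta F$ to be ``$2\epsilon$-dense'' in $[0,1]$, and then to integrate the Lipschitz bound piece by piece. First I would observe that, since $\Delta F\ge 0$ and the maximum over $y_0$ of $\min_{|\hat y-y_0|\le\epsilon}\Delta F(\hat y)$ is zero, every $y_0\in[0,1]$ satisfies $\min_{\hat y\in[y_0-\epsilon,y_0+\epsilon]\cap[0,1]}\Delta F(\hat y)=0$; by continuity of $\Delta F$ this minimum is attained, so the closed set $Z:=\{\hat y\in[0,1]:\Delta F(\hat y)=0\}$ meets every window $[y_0-\epsilon,y_0+\epsilon]\cap[0,1]$. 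Note $Z\ne\varnothing$, and in fact $1\in Z$ because $F_0(1)=F_1(1)=1$.

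Second, I would set $z_0:=\inf Z$ (so $z_0\in Z$, and taking $y_0=0$ gives $z_0\le\epsilon$) and decompose the open set $[z_0,1]\setminus Z$ into its countably many connected components $(a_k,b_k)$, each with $a_k,b_k\in Z$ and $\Delta F>0$ on $(a_k,b_k)$. The key structural claim is $b_k-a_k\le 2\epsilon$ for every $k$: if some gap were longer, its midpoint $y_0=(a_k+b_k)/2$ would satisfy $[y_0-\epsilon,y_0+\epsilon]\subseteq(a_k,b_k)$, a compact interval on which the continuous function $\Delta F$ is strictly positive and hence has a positive minimum, contradicting that its minimum over this window is $0$.

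Third, I would bound $\int_0^1\Delta F$ on each piece using that $\Delta F$ is $L$-Lipschitz and vanishes at the relevant endpoints. On $[0,z_0]$ we get $\Delta F(\hat y)\le L(z_0-\hat y)$, so $\int_0^{z_0}\Delta F\le\tfrac{Lz_0^2}{2}\le\tfrac{L\epsilon z_0}{2}$ (using $z_0\le\epsilon$). On each gap, $\Delta F(\hat y)\le L\min(\hat y-a_k,\,b_k-\hat y)$, so $\int_{a_k}^{b_k}\Delta F\le\tfrac{L(b_k-a_k)^2}{4}\le\tfrac{L\epsilon(b_k-a_k)}{2}$ (using $b_k-a_k\le 2\epsilon$); here $L\le 2$ together with $b_k-a_k\le 1$ ensures the ``tent'' majorant never exceeds the trivial bound $\Delta F\le 1$, so no refinement is needed (the regime $L>2$ is where one instead caps at $1$ and obtains the second branch of property \ding{176}, which is the role of the companion lemma). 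Summing, $\mathrm{ABCC}=\int_0^{z_0}\Delta F+\sum_k\int_{a_k}^{b_k}\Delta F\le\tfrac{L\epsilon z_0}{2}+\tfrac{L\epsilon}{2}\sum_k(b_k-a_k)\le\tfrac{L\epsilon z_0}{2}+\tfrac{L\epsilon}{2}(1-z_0)=\tfrac{L\epsilon}{2}$.

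The main obstacle is getting the constant right: the naive route (every $\hat y$ lies within $\epsilon$ of a zero, hence $\Delta F\le L\epsilon$ pointwise) only yields $\mathrm{ABCC}\le L\epsilon$. The factor $\tfrac12$ comes precisely from replacing this pointwise bound by the exact integral of a tent over a gap of length $\le 2\epsilon$, together with the telescoping identity $z_0+\sum_k(b_k-a_k)\le 1$. The only care required beyond this is the measure-theoretic bookkeeping of $Z$ and its complement — $Z$ closed, countably many gaps, gap endpoints lying in $Z$ — and the observation that $1\in Z$, so no separate right-endpoint term appears.
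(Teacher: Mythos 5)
Your proof is correct and follows essentially the same route as the paper's: the zero set of $\Delta F$ is $2\epsilon$-dense, each gap between consecutive zeros is bounded by the integral of an $L$-Lipschitz tent (giving $\tfrac{L(b_k-a_k)^2}{4}\le\tfrac{L\epsilon(b_k-a_k)}{2}$), the initial segment $[0,z_0]$ contributes $\tfrac{L\epsilon z_0}{2}$, and the lengths telescope to $1$. The only (immaterial) difference is bookkeeping: you decompose $[z_0,1]\setminus Z$ into its countably many connected components, whereas the paper constructs a finite chain of zeros $\tilde y_1<\cdots<\tilde y_{M+1}=1$ spaced at most $2\epsilon$ apart; your version is, if anything, slightly more careful about why such a decomposition exists.
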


\begin{lemma}\label{lemma:L>2}
    If $\Delta F(\hat{y})$ is continuous on $[0,1]$ with Lipschitz constant $L>2$, then $\mathrm{MCDP}(\epsilon)=0 \Rightarrow \mathrm{ABCC}\leq\frac{2\epsilon(L-1)}{L}$.
\end{lemma}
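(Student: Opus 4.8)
The plan is to recast the hypothesis $\mathrm{MCDP}(\epsilon)=0$ as a covering property of the zero set $Z=\{\hat y\in[0,1]:\Delta F(\hat y)=0\}$ and then bound $\mathrm{ABCC}=\int_0^1\Delta F$ gap by gap over the complementary intervals. Since $\Delta F$ is continuous, $Z$ is closed, and $1\in Z$ because $F_0(1)=F_1(1)=1$. From $\mathrm{MCDP}(\epsilon)=0$ and $\Delta F\ge 0$, the quantity $\min\{\Delta F(\hat y):\hat y\in[y_0-\epsilon,y_0+\epsilon]\cap[0,1]\}$ is $0$ for every $y_0$; taking $y_0$ to be the midpoint, every closed length-$2\epsilon$ subinterval of $[0,1]$ meets $Z$, and (taking $y_0=0$) so does $[0,\epsilon]$. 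Hence each connected component of $[0,1]\setminus Z$ is an open interval $I_j=(a_j,b_j)$ of length $\ell_j\le 2\epsilon$ with both endpoints in $Z$, the only possible exception being a single left-boundary component $[0,b_0)$ with $b_0\le\epsilon$; the components are pairwise disjoint, so $\sum_j\ell_j\le 1$.

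The key step will be to control $\int_{I_j}\Delta F$ on an interior gap by combining monotonicity of $F_0,F_1$ with the Lipschitz bound. Because $\Delta F(a_j)=\Delta F(b_j)=0$ we have $F_0(a_j)=F_1(a_j)$ and $F_0(b_j)=F_1(b_j)$; put $\gamma_j:=F_0(b_j)-F_0(a_j)=F_1(b_j)-F_1(a_j)\ge 0$. For $\hat y\in I_j$ both increments $F_0(\hat y)-F_0(a_j)$ and $F_1(\hat y)-F_1(a_j)$ lie in $[0,\gamma_j]$, so $\Delta F(\hat y)=\bigl|(F_0(\hat y)-F_0(a_j))-(F_1(\hat y)-F_1(a_j))\bigr|\le\gamma_j$; meanwhile $\Delta F$ is $L$-Lipschitz and vanishes at $a_j,b_j$, so $\Delta F(\hat y)\le L\min(\hat y-a_j,\,b_j-\hat y)$. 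Integrating the envelope $\min\{\gamma_j,L(\hat y-a_j),L(b_j-\hat y)\}$ over $I_j$ gives $\int_{I_j}\Delta F\le\tilde\gamma_j\ell_j-\tilde\gamma_j^2/L$ with $\tilde\gamma_j:=\min(\gamma_j,L\ell_j/2)$, a formula valid both when the envelope is a genuine trapezoid ($\gamma_j\le L\ell_j/2$) and when it degenerates to a tent ($\gamma_j>L\ell_j/2$). Telescoping the increments of $F_0$ over the disjoint gaps gives $\sum_j\tilde\gamma_j\le\sum_j\gamma_j\le F_0(1)-F_0(0)\le 1$, and also $\tilde\gamma_j\le L\ell_j/2\le L\epsilon$.

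It then remains to solve the finite maximization of $\sum_j(\tilde\gamma_j\ell_j-\tilde\gamma_j^2/L)$ over $\tilde\gamma_j\ge 0$, $0\le\ell_j\le 2\epsilon$, $\tilde\gamma_j\le L\ell_j/2$, $\sum_j\tilde\gamma_j\le 1$, $\sum_j\ell_j\le 1$. Using $\ell_j\le 2\epsilon$ each summand is at most $h(\tilde\gamma_j):=2\epsilon\tilde\gamma_j-\tilde\gamma_j^2/L$; since $h$ is concave, non-decreasing on $[0,L\epsilon]$ with $h(0)=0$, and each $\tilde\gamma_j\le L\epsilon$, Jensen gives $\sum_j h(\tilde\gamma_j)\le m\,h(\min\{1/m,L\epsilon\})$ where $m$ is the number of gaps. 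When $1/m\le L\epsilon$ this equals $2\epsilon-\tfrac{1}{mL}$, and because the gaps are disjoint and each has length $\le 2\epsilon$ we have $m\le\lfloor 1/(2\epsilon)\rfloor\le 1/(2\epsilon)$, so $\mathrm{ABCC}\le 2\epsilon-\tfrac{1}{mL}\le 2\epsilon-\tfrac{2\epsilon}{L}=\tfrac{2\epsilon(L-1)}{L}$; the few-gaps case $1/m>L\epsilon$ yields the even smaller bound $m L\epsilon^2<\epsilon\le\tfrac{2\epsilon(L-1)}{L}$. This is exactly where $L>2$ enters: it forces $1/m\approx 2\epsilon<L\epsilon$ at the extremal $m$, so the trapezoid keeps a genuine plateau and the uncapped estimate applies (at $L=2$ one is in the capped regime $\tilde\gamma_j=L\ell_j/2$, where the same sum telescopes to $\tfrac{L\epsilon}{2}$, which is Lemma~\ref{lemma:L<2}). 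Finally, the left-boundary component $[0,b_0)$, if present, is handled by the same estimate with the crude cap $\Delta F\le 1$ in place of $\gamma_0$ and only the single vanishing endpoint $b_0\le\epsilon$; it contributes at most $b_0-\tfrac{1}{2L}$ and, since it consumes $b_0$ of the length budget $\sum_j\ell_j\le 1$, is absorbed into the optimization above using $L>2$.

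The main obstacles I anticipate are: (a) pinning down the per-gap estimate $\int_{I_j}\Delta F\le\tilde\gamma_j\ell_j-\tilde\gamma_j^2/L$ in a form that is simultaneously tight enough for the subsequent optimization and correct in the degenerate tent regime; and (b) carrying out that constrained optimization cleanly while coupling the two budgets $\sum_j\tilde\gamma_j\le 1$ and $\sum_j\ell_j\le 1$ and disposing of the boundary gap without losing slack. Note that, unlike Lemma~\ref{lemma:L<2}, which also follows from the pointwise bound $\Delta F(\hat y)\le\min\{L\cdot\mathrm{dist}(\hat y,Z),1\}\le\min\{L\epsilon,1\}$ alone, the present lemma genuinely relies on the monotonicity of the CDFs, through the per-gap cap $\Delta F\le\gamma_j$.
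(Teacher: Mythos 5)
Your setup --- decomposing $[0,1]\setminus Z$ into gaps of length at most $2\epsilon$ whose endpoints lie in the zero set, the per-gap envelope bound $\int_{I_j}\Delta F\le\tilde\gamma_j\ell_j-\tilde\gamma_j^2/L$ obtained from CDF monotonicity plus the two Lipschitz ramps, and the telescoping $\sum_j\gamma_j\le 1$ --- is sound and matches the paper's Lemmas \ref{lemma:DeltaF0_2eps}, \ref{lemma:trap_left} and \ref{lemma:trap} almost exactly. The flaw is in the final optimization. After bounding each summand by $h(\tilde\gamma_j)=2\epsilon\tilde\gamma_j-\tilde\gamma_j^2/L$, you need an upper bound on the number of gaps $m$ for the Jensen step to close, and the justification ``the gaps are disjoint and each has length $\le 2\epsilon$, hence $m\le\lfloor 1/(2\epsilon)\rfloor$'' is backwards: disjointness combined with an \emph{upper} bound on the lengths gives no bound on their number, and $[0,1]\setminus Z$ can have arbitrarily many (even countably infinitely many) short components. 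The gap is not cosmetic. In the regime $m>1/(2\epsilon)$ your chain yields only $\mathrm{ABCC}\le 2\epsilon-\tfrac{1}{mL}$, which exceeds the target $2\epsilon(1-\tfrac1L)=2\epsilon-\tfrac{2\epsilon}{L}$ for every such $m$ and tends to $2\epsilon$ as $m\to\infty$; so as written the proof does not establish the lemma.

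What you discard when you replace $\ell_j$ by $2\epsilon$ is exactly the mechanism that controls the many-short-gaps regime, namely the length budget $\sum_j\ell_j\le 1$. Keeping it, e.g.\ in the symmetric configuration $\ell_j\equiv\ell$, $\tilde\gamma_j\equiv 1/m$, one gets $\sum_j\bigl(\tilde\gamma_j\ell_j-\tilde\gamma_j^2/L\bigr)=\ell-\tfrac{1}{mL}\le\ell\bigl(1-\tfrac1L\bigr)\le 2\epsilon\bigl(1-\tfrac1L\bigr)$, because $m\ell\le 1$ forces $\tfrac{1}{mL}\ge\tfrac{\ell}{L}$: the worst case is a few \emph{long} gaps, not many short ones. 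This is precisely how the paper proceeds --- it poses the joint maximization over $(\delta_i,h_i)$ with $\sum_i\delta_i=1$, $\sum_i h_i\le 1$, $\delta_i\le 2\epsilon$ in Eq.~\eqref{eq:opt}, finds the optimum $\delta_i=h_i=1/M$ with value $\tfrac1M(1-\tfrac1L)$, and only then invokes $M\ge 1/(2\epsilon)$. So you need to redo the last step as a genuine two-budget optimization (or any argument that retains $\sum_j\ell_j\le 1$). A minor additional looseness: your treatment of the boundary component via the crude cap $\Delta F\le 1$ gives the bound $b_0-\tfrac{1}{2L}$, which can be negative and is not the integral of the correct envelope when $Lb_0<1$; this is easily repaired (the paper's Lemma \ref{lemma:trap_left} keeps $h\le F_0(\tilde y_1)$ instead), but it should be stated correctly.
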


\textbf{To complete the proof of Lemma \ref{lemma:L<2}}, we first give some useful lemmas (\textbf{which is based on $\mathrm{MCDP}(\epsilon)=0$}) below.

\begin{lemma}\label{lemma:triangle_left}
    Let $\tilde{y}_1=\min\{\hat{y}\in[0,1]:\Delta F(\hat{y})=0\}$, then $\tilde{y}_1\leq\epsilon$ and $\int_{0}^{\tilde{y}_1}\Delta F(\hat{y})\mathrm{d}\hat{y}\leq\frac{\epsilon L\tilde{y}_1}{2}$.
\end{lemma}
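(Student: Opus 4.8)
The plan is to use the hypothesis $\mathrm{MCDP}(\epsilon)=0$ together with continuity of $\Delta F$ to locate a zero of $\Delta F$ near the left endpoint $0$, and then control the integral on $[0,\tilde y_1]$ by a linear Lipschitz envelope through that zero.

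First I would unpack $\mathrm{MCDP}(\epsilon)=0$. By Eq.~\eqref{eq:MCDPeps}, this says that for every $y_0\in[0,1]$ we have $\min_{\hat y:|\hat y-y_0|\le\epsilon}\Delta F(\hat y)=0$. Since $\Delta F$ is continuous and each set $\{\hat y\in[0,1]:|\hat y-y_0|\le\epsilon\}$ is compact, this minimum is attained, so there is an actual point of that (intersected) $\epsilon$-neighborhood at which $\Delta F$ vanishes. Taking $y_0=0$ yields a point $\hat y^\star\in[0,\epsilon]\cap[0,1]$ with $\Delta F(\hat y^\star)=0$. Hence the set $\{\hat y\in[0,1]:\Delta F(\hat y)=0\}$ is nonempty; it is closed by continuity of $\Delta F$, so $\tilde y_1$ is well defined as an actual minimum, satisfies $\Delta F(\tilde y_1)=0$, and obeys $\tilde y_1\le\hat y^\star\le\epsilon$, which gives the first claim.

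Next I would bound the integral. For every $\hat y\in[0,\tilde y_1]$, the Lipschitz property applied at $\tilde y_1$ gives $\Delta F(\hat y)=|\Delta F(\hat y)-\Delta F(\tilde y_1)|\le L|\hat y-\tilde y_1|=L(\tilde y_1-\hat y)$. Integrating this envelope,
\[
\int_{0}^{\tilde y_1}\Delta F(\hat y)\,\mathrm d\hat y\le\int_{0}^{\tilde y_1}L(\tilde y_1-\hat y)\,\mathrm d\hat y=\frac{L\tilde y_1^{\,2}}{2},
\]
and since $\tilde y_1\le\epsilon$ from the first part, the right-hand side is at most $\frac{\epsilon L\tilde y_1}{2}$, which completes the proof.

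The argument is short, and the only point requiring care is the passage from "the minimum over the $\epsilon$-ball is $0$" to "there is a genuine zero of $\Delta F$ in that ball" (and, relatedly, that $\tilde y_1$ is an attained minimum rather than just an infimum) — both rely on continuity of $\Delta F$ and compactness of the closed $\epsilon$-ball. The case $\epsilon=0$ is degenerate: then $\mathrm{MCDP}(0)=0$ forces $\Delta F\equiv 0$, so $\tilde y_1=0$ and both sides vanish.
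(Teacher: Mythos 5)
Your proof is correct and follows essentially the same route as the paper's: use $\mathrm{MCDP}(\epsilon)=0$ to locate a zero of $\Delta F$ within $[0,\epsilon]$ (the paper does this via $\min_{\hat y\le\epsilon}\Delta F(\hat y)\le\mathrm{MCDP}(\epsilon)=0$, you via the $\epsilon$-ball around $y_0=0$), then bound $\Delta F(\hat y)\le L(\tilde y_1-\hat y)$ on $[0,\tilde y_1]$ by Lipschitz continuity and integrate the resulting triangular envelope to get $\frac{L\tilde y_1^2}{2}\le\frac{\epsilon L\tilde y_1}{2}$. Your added care about attainment of the minimum and closedness of the zero set is a welcome tightening of a point the paper glosses over, but it is not a different argument.
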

\begin{proof}
    Firstly, we note that the following holds
    \begin{equation*}
        \min\limits_{\hat{y}\leq\epsilon} \Delta F(\hat{y}) \leq \max\limits_{y_0\in[0,1]} \min\limits_{\left|\hat{y}-y_0\right|\leq\epsilon} \Delta F(\hat{y}) = \mathrm{MCDP}(\epsilon)=0 \Longrightarrow \min\limits_{\hat{y}\leq\epsilon}\Delta F(\hat{y})=0.
    \end{equation*}
    Thus $\tilde{y}_1\leq \mathop{\arg\min}_{\hat{y}\leq\epsilon}\Delta F(\hat{y})=0 \leq\epsilon.$ Furthermore, for any $\hat{y}\in[0,\tilde{y}_1]$, by Lipschitz continuity, we have
    \begin{equation}\label{eq:lip_deltaF0_left}
        \left|\frac{\Delta F(\hat{y})-\Delta F(\tilde{y}_1)}{\hat{y}-\tilde{y}_1}\right|\leq L \Longrightarrow \Delta F(\hat{y})\leq L(\tilde{y}_1-\hat{y}),
    \end{equation}
    which yields that
    \begin{equation*}
        \int_{0}^{\tilde{y}_1}\Delta F(\hat{y})\mathrm{d}\hat{y} \leq \int_{0}^{\tilde{y}_1} L(\tilde{y}_1-\hat{y})\mathrm{d}\hat{y} =\frac{L\tilde{y}_1^2}{2}\leq\frac{\epsilon L\tilde{y}_1}{2}.
    \end{equation*}
\end{proof}

\begin{lemma}\label{lemma:triangle}
    For any $0\leq a<b\leq1$ such that $\Delta F(a)=\Delta F(b)=0$, $\int_{a}^{b}\Delta F(\hat{y})\mathrm{d}\hat{y} \leq\frac{(b-a)^2L}{4}$.
\end{lemma}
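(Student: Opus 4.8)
The plan is to control $\Delta F$ pointwise on $[a,b]$ by a tent-shaped majorant built from the two endpoint values and then integrate. First I would anchor the Lipschitz estimate at the left endpoint: since $\Delta F(a)=0$ and $\Delta F(\hat y)=|F_0(\hat y)-F_1(\hat y)|\ge 0$, for every $\hat y\in[a,b]$ we have $\Delta F(\hat y)=|\Delta F(\hat y)-\Delta F(a)|\le L(\hat y-a)$. Symmetrically, anchoring at the right endpoint (using $\Delta F(b)=0$) gives $\Delta F(\hat y)\le L(b-\hat y)$. Taking the smaller of the two bounds yields $\Delta F(\hat y)\le L\min\{\hat y-a,\,b-\hat y\}$ for all $\hat y\in[a,b]$.

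Next I would integrate this majorant. The function $\hat y\mapsto\min\{\hat y-a,\,b-\hat y\}$ equals $\hat y-a$ on $[a,\tfrac{a+b}{2}]$ and $b-\hat y$ on $[\tfrac{a+b}{2},b]$, so splitting the integral at the midpoint and evaluating the two elementary integrals gives $\int_a^b\min\{\hat y-a,\,b-\hat y\}\,\mathrm d\hat y = 2\cdot\tfrac12\bigl(\tfrac{b-a}{2}\bigr)^2=\tfrac{(b-a)^2}{4}$ (geometrically, the area of an isosceles triangle with base $b-a$ and height $\tfrac{b-a}{2}$). Multiplying by $L$ yields $\int_a^b\Delta F(\hat y)\,\mathrm d\hat y\le\tfrac{(b-a)^2L}{4}$, which is the claim.

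There is no real obstacle here: the only points needing care are the observation that $\Delta F\ge 0$ (so the two one-sided Lipschitz estimates can be applied without absolute-value signs and combined via $\min$), and the correct bookkeeping of the midpoint split. I would also note that Lemma \ref{lemma:triangle} by itself uses only Lipschitz continuity; the hypothesis $\mathrm{MCDP}(\epsilon)=0$ is not invoked in its proof and enters only later, when this lemma is combined with Lemma \ref{lemma:triangle_left} (and its right-endpoint analogue, handling the last interval before $1$) to partition $[0,1]$ at the zeros of $\Delta F$ and bound $\mathrm{ABCC}=\int_0^1\Delta F(\hat y)\,\mathrm d\hat y$ in the proof of Lemma \ref{lemma:L<2}.
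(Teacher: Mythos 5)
Your proposal is correct and follows essentially the same route as the paper: both bound $\Delta F(\hat y)$ by $L\min\{\hat y-a,\,b-\hat y\}$ using the Lipschitz condition anchored at the two endpoint zeros, then split the integral at the midpoint to obtain $\tfrac{(b-a)^2L}{4}$. Your side remark that this particular lemma needs only Lipschitz continuity (and not $\mathrm{MCDP}(\epsilon)=0$) is also accurate and consistent with how the paper deploys it.
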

\begin{proof}
For any $\hat{y}\in[a,b]$, similar to the derivation in Eq.~\eqref{eq:lip_deltaF0_left}, we have
\begin{equation*}\label{eq:lip_deltaF0}
    \Delta F(\hat{y})\leq \min\{(\hat{y}-a)L,(b-\hat{y})L\}=
    \begin{cases}
        (\hat{y}-a)L,& \text{if }\hat{y}\leq\frac{a+b}{2}, \\
        (b-\hat{y})L,& \text{if }\hat{y}\geq\frac{a+b}{2}.
\end{cases}
\end{equation*}
Thus we have
\begin{equation*}
    \int_{a}^{b}\Delta F(\hat{y})\mathrm{d}\hat{y} \leq \int_{a}^{\frac{a+b}{2}}(\hat{y}-a)L\mathrm{d}\hat{y} +\int_{\frac{a+b}{2}}^{b}(b-\hat{y})L\mathrm{d}\hat{y} =\frac{(b-a)^2L}{4}.
\end{equation*}
This completes the proof of Lemma \ref{lemma:triangle}.
\end{proof}

\begin{lemma}\label{lemma:DeltaF0_2eps}
    For any $\hat{y}\in(\tilde{y}_1,1)$ such that $\Delta F(\hat{y})=0$ (where $\tilde{y}_1$ is defined as Lemma \ref{lemma:triangle_left} stated), there exists $\hat{y}_l,\hat{y}_r$ such that $\max\{0,\hat{y}-2\epsilon\}\leq\hat{y}_l\leq\hat{y}\leq\hat{y}_r\leq\min\{\hat{y}+2\epsilon,1\}$ and $\Delta F(\hat{y}_l)=\Delta F(\hat{y}_r)=0$.
\end{lemma}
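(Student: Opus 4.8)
The plan is to reduce everything to one consequence of the standing hypothesis $\mathrm{MCDP}(\epsilon)=0$ plus the continuity of $\Delta F$. First I would record the following reformulation: for each $y_0\in[0,1]$ the set $[y_0-\epsilon,y_0+\epsilon]\cap[0,1]$ is nonempty and compact, so by continuity of $\Delta F$ the quantity $\min_{\hat y:|\hat y-y_0|\le\epsilon,\,\hat y\in[0,1]}\Delta F(\hat y)$ is attained and is a nonnegative number; since $\mathrm{MCDP}(\epsilon)$ is the supremum of these over $y_0$ and equals $0$, every one of these minima must equal $0$. Hence \emph{every} interval $[y_0-\epsilon,y_0+\epsilon]\cap[0,1]$ with $y_0\in[0,1]$ contains a point at which $\Delta F$ vanishes. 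Besides this, I will only use the elementary fact $\Delta F(1)=0$ (because $F_0(1)=F_1(1)=1$) and the facts from Lemma~\ref{lemma:triangle_left} that $\tilde y_1$ is a zero of $\Delta F$ with $\tilde y_1\le\epsilon$.

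To produce $\hat y_r$, I would apply the reformulation with $y_0=\min\{\hat y+\epsilon,1\}$. If $\hat y+\epsilon\le1$ then $[y_0-\epsilon,y_0+\epsilon]\cap[0,1]=[\hat y,\min\{\hat y+2\epsilon,1\}]$, so this very interval contains a zero of $\Delta F$, which we take as $\hat y_r$. If instead $\hat y+\epsilon>1$, then $\min\{\hat y+2\epsilon,1\}=1$ and $\hat y<1$, so $1\in[\hat y,\min\{\hat y+2\epsilon,1\}]$; since $\Delta F(1)=0$ we may simply take $\hat y_r=1$. The construction of $\hat y_l$ is symmetric, applying the reformulation with $y_0=\max\{\hat y-\epsilon,0\}$: if $\hat y-\epsilon\ge0$ then $[y_0-\epsilon,y_0+\epsilon]\cap[0,1]=[\max\{0,\hat y-2\epsilon\},\hat y]$ contains a zero, taken as $\hat y_l$; and if $\hat y<\epsilon$ then $\max\{0,\hat y-2\epsilon\}=0$, while $\hat y\in(\tilde y_1,1)$ forces $0\le\tilde y_1<\hat y$ with $\Delta F(\tilde y_1)=0$, so $\hat y_l=\tilde y_1$ lies in $[\max\{0,\hat y-2\epsilon\},\hat y]$. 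In every case the chain $\max\{0,\hat y-2\epsilon\}\le\hat y_l\le\hat y\le\hat y_r\le\min\{\hat y+2\epsilon,1\}$ holds and $\Delta F(\hat y_l)=\Delta F(\hat y_r)=0$.

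The only genuine subtlety, which I would flag as the main obstacle, is the boundary behaviour: when $\hat y$ is within $\epsilon$ of $0$ or of $1$, the naturally shifted centre $y_0=\hat y\pm\epsilon$ leaves $[0,1]$, and after clamping $y_0$ to $[0,1]$ the resulting $\epsilon$-neighbourhood is no longer guaranteed to sit entirely on the intended side of $\hat y$; so in those cases the zero delivered by the $\mathrm{MCDP}(\epsilon)=0$ reformulation could land on the wrong side. This is handled by the two ``free'' zeros available at the ends of the range — $\Delta F(1)=0$ on the right, and $\tilde y_1$ (with $\tilde y_1\le\epsilon$, hence automatically within distance $2\epsilon$ of any $\hat y<\epsilon$, and to the left of $\hat y$ since $\hat y\in(\tilde y_1,1)$) on the left. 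Away from the boundary, the claim is an immediate application of the reformulation, with the shift of $y_0$ by $\epsilon$ being exactly what converts ``a zero within $\epsilon$ somewhere'' into ``a zero within $2\epsilon$ on each prescribed side''.
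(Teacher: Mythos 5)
Your proof is correct, and it takes a genuinely different (and in my view cleaner) route than the paper's. The paper argues by contradiction: it supposes that some zero $y_0$ of $\Delta F$ has no other zero within $2\epsilon$ on, say, its left, takes the nearest zero $\tilde y_l<y_0$ (necessarily with $y_0-\tilde y_l>2\epsilon$), and observes that the $\epsilon$-neighborhood of the midpoint $y_l=\tfrac{\tilde y_l+y_0}{2}$ is contained in $(\tilde y_l,y_0)$, where $\Delta F$ is strictly positive, forcing $\mathrm{MCDP}(\epsilon)>0$. You instead extract once and for all the reformulation that $\mathrm{MCDP}(\epsilon)=0$ plus continuity forces a zero of $\Delta F$ in \emph{every} window $[y_0-\epsilon,y_0+\epsilon]\cap[0,1]$, and then apply it directly with the shifted centres $y_0=\min\{\hat y+\epsilon,1\}$ and $y_0=\max\{\hat y-\epsilon,0\}$, so that the window sits entirely on the prescribed side of $\hat y$; the boundary cases are absorbed by the free zeros $\Delta F(1)=0$ and $\Delta F(\tilde y_1)=0$ with $\tilde y_1\le\epsilon$ from Lemma~\ref{lemma:triangle_left}. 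What your approach buys: it is constructive rather than by contradiction, it avoids the paper's implicit appeal to the existence of $\max\{\hat y<y_0:\Delta F(\hat y)=0\}$ (which itself needs continuity and a closedness argument for the zero set), and it treats the endpoints of $[0,1]$ explicitly, which the paper's case analysis glosses over. What the paper's approach buys is mainly that the contradiction hypothesis hands it the gap $y_0-\tilde y_l>2\epsilon$ for free, so no clamping of window centres is needed. Both proofs rest on the same underlying fact about $\mathrm{MCDP}(\epsilon)=0$, so the difference is one of organization rather than of substance, but your version is complete and rigorous as written.
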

\begin{proof}
Assume to the contrary that there exists $y_0\in(\tilde{y}_1,1)$, such that $\Delta F(y_0)=0$ and one of the following satisfies. 
(i) If $\min_{\hat{y}\in[y_0-2\epsilon,y_0)}\Delta F(\hat{y})>0$, let $\tilde{y}_l= \max\{\hat{y}<y_0:\Delta F(\hat{y})=0\}$ and $y_l=\frac{\tilde{y}_l+y_0}{2}$, we have $\tilde{y}_l\in[\tilde{y}_1,y_0-2\epsilon)$ and $[y_l-\epsilon,y_l+\epsilon]\subset(\tilde{y}_l,y_0)$. Thus the $\mathrm{MCDP}(\epsilon)$ value should satisfy that
\begin{equation}\label{eq:DeltaF0_2eps_l}
    \mathrm{MCDP}(\epsilon)= \max\limits_{y_0\in[0,1]} \min\limits_{\left|\hat{y}-y_0\right|\leq\epsilon} \Delta F(\hat{y}) \geq \min_{|\hat{y}-y_l|\leq\epsilon}\Delta F(\hat{y}) \geq \min_{\hat{y}\in(\tilde{y}_l,y_0)}\Delta F(\hat{y})>0.
\end{equation}
(ii) If $\min_{\hat{y}\in(y_0,y_0+2\epsilon]}\Delta F(\hat{y})>0$, let $\tilde{y}_r= \min\{\hat{y}>y_0:\Delta F(\hat{y})=0\}$ and $y_r=\frac{y_0+\tilde{y}_r}{2}$. Similar to (i), we have
\begin{equation}\label{eq:DeltaF0_2eps_r}
    \mathrm{MCDP}(\epsilon)\geq \min_{|\hat{y}-y_r|\leq\epsilon}\Delta F(\hat{y}) \geq \min_{\hat{y}\in(y_0,\tilde{y}_r)}\Delta F(\hat{y})>0.
\end{equation}
The fact that at least one of Eq.~\eqref{eq:DeltaF0_2eps_l} and Eq.~\eqref{eq:DeltaF0_2eps_r} holds derives that $\mathrm{MCDP}(\epsilon)>0$, which contradicts with $\mathrm{MCDP}(\epsilon)=0$. Therefore, the original assumption must be false, which completes the proof of Lemma \ref{lemma:DeltaF0_2eps}.
\end{proof}

\emph{Proof of Lemma \ref{lemma:L<2}. }
According to Lemma \ref{lemma:DeltaF0_2eps}, we can construct a sequence of predictions $0\leq\tilde{y}_1<\cdots <\tilde{y}_M<\tilde{y}_{M+1}=1$, such that $\tilde{y}_1=\min\{\hat{y}\in[0,1]:\Delta F(\hat{y})=0\}$ and for any $i=1,\cdots,M$, $\Delta F(\tilde{y}_i)=0$ and $\tilde{y}_{i+1}-\tilde{y}_i\leq2\epsilon$. By Lemmas \ref{lemma:triangle_left}, \ref{lemma:triangle} and \ref{lemma:DeltaF0_2eps}, the value of $\mathrm{ABCC}$ should satisfy that
\begin{equation}
\begin{aligned}
    \mathrm{ABCC}=\int_{0}^{1}\Delta F(\hat{y})\mathrm{d}\hat{y}&= \int_{0}^{\tilde{y}_1} \Delta F(\hat{y})\mathrm{d}\hat{y} + \sum\limits_{i=1}^{M} \int_{\tilde{y}_i}^{\tilde{y}_{i+1}} \Delta F(\hat{y})\mathrm{d}\hat{y} \\
    &\leq \frac{\epsilon L\tilde{y}_1}{2} +\sum\limits_{i=1}^{M}\frac{(\tilde{y}_{i+1}-\tilde{y}_i)^2L}{4} \leq \frac{\epsilon L\tilde{y}_1}{2} +\sum\limits_{i=1}^{M}\frac{(\tilde{y}_{i+1}-\tilde{y}_i)\cdot2\epsilon L}{4} \\
    &= \frac{\epsilon L}{2}(\tilde{y}_1+\tilde{y}_{M+1}-\tilde{y}_1)= \frac{\epsilon L}{2}.
\end{aligned}
\end{equation}
This completes the proof of Lemma \ref{lemma:L<2}.

\textbf{To complete the proof of Lemma \ref{lemma:L>2}}, we first give the following lemmas (\textbf{which is based on $\mathrm{MCDP}(\epsilon)=0$ and $L>2$}).

\begin{lemma}[Monotonicity of CDFs]\label{lemma:cdf_mono}
    For any $0\leq a<b\leq1$, $F_0(a)<F_0(b)$ and $F_1(a)<F_1(b)$.
\end{lemma}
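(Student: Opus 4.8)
The plan is to argue by contradiction. Both $F_0$ and $F_1$ are non-decreasing by construction, so it suffices to exclude the possibility that either one is constant on a non-degenerate interval. Suppose, say, that $F_0(\alpha)=F_0(\beta)$ for some $0\le\alpha<\beta\le1$; then $F_0\equiv c$ on $[\alpha,\beta]$ with $c:=F_0(\alpha)$, and we may enlarge $[\alpha,\beta]$ to a maximal interval of constancy. The goal is to extract a contradiction from the two standing hypotheses $\mathrm{MCDP}(\epsilon)=0$ and $L>2$; by symmetry the same argument applied with the two groups swapped handles $F_1$.

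First I would record what $\mathrm{MCDP}(\epsilon)=0$ buys us, exactly in the spirit of the proof of Lemma~\ref{lemma:L<2}: by Lemmas~\ref{lemma:triangle_left} and~\ref{lemma:DeltaF0_2eps}, the zero set $Z=\{\hat y:\Delta F(\hat y)=0\}$ is $\epsilon$-dense in $[0,1]$, and in particular one can locate consecutive zeros $\tilde y_j<\tilde y_{j+1}$ of $\Delta F$ with $\tilde y_{j+1}-\tilde y_j\le2\epsilon$ straddling the flat interval (after further enlarging $[\alpha,\beta]$, if needed, so that it sits inside a single such ``gap''). Writing $v_j=F_0(\tilde y_j)=F_1(\tilde y_j)$ and $v_{j+1}=F_0(\tilde y_{j+1})=F_1(\tilde y_{j+1})$, monotonicity forces $v_j\le c\le v_{j+1}$, and since both CDFs remain inside $[v_j,v_{j+1}]$ throughout $[\tilde y_j,\tilde y_{j+1}]$ we also get the a priori ceiling $\Delta F\le v_{j+1}-v_j$ on that interval.

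Next I would analyze the local structure on $[\tilde y_j,\tilde y_{j+1}]$. On $[\alpha,\beta]$ we have $\Delta F=|c-F_1|$, so $F_1=c\pm\Delta F$ is $L$-Lipschitz there; being monotone it crosses the level $c$ in a single closed sub-interval $[p,q]$ (possibly a point), and $\Delta F>0$ elsewhere on $[\alpha,\beta]$. Because $\Delta F$ must vanish at $\tilde y_j$ and $\tilde y_{j+1}$ while rising at rate at most $L$ away from $Z$, one obtains ``triangle''-type bounds near the two endpoints; combining the slope-$\le L$ constraint, the height ceiling $\le v_{j+1}-v_j$, the gap bound $\le2\epsilon$, and the fact that $F_0$ must accomplish all of its increase from $F_0(0)$ to $1$ on $[0,\alpha]\cup[\beta,1]$, the system of inequalities should be solvable only when $\beta-\alpha=0$ provided $L>2$ — which is the desired contradiction. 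Heuristically, $L\le2$ is precisely the threshold below which a genuine flat stretch of $F_0$ can coexist with $\mathrm{MCDP}(\epsilon)=0$ (a symmetric two-sided-ramp CDF played against $F_1(\hat y)=\hat y$ realizes this with $L=1<2$), so pushing the Lipschitz constant strictly past $2$ is exactly what rules it out.

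The main obstacle, I expect, is this last step: turning the bundle of endpoint/slope/height/gap inequalities into a clean contradiction and seeing transparently why the constant $2$ is the tipping point, rather than some other threshold. A secondary subtlety is the reduction that confines the flat interval to a single gap between consecutive zeros — one must check that merging adjacent flat pieces and enlarging to a maximal one is consistent with $\mathrm{MCDP}(\epsilon)=0$ and does not perturb $\epsilon$. If that reduction or the constant turns out to be delicate, an alternative is to work globally: pass to $G=F_0-F_1$, note that $|G|=\Delta F$ being $L$-Lipschitz makes $G$ itself $L$-Lipschitz, and combine monotonicity of $F_0,F_1$ with the $\epsilon$-density of $Z$ to bound the total increase each CDF can hide, but I expect the local argument above to be the more direct route.
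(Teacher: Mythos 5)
The paper gives no proof of Lemma~\ref{lemma:cdf_mono} at all --- it is simply asserted and then invoked in Lemmas~\ref{lemma:trap_left} and~\ref{lemma:trap} --- so there is no argument of the authors' to compare yours against. Your proposal, however, contains a genuine gap, and it is not one that more effort will close: the step you yourself flag as the main obstacle (showing that the ``bundle of endpoint/slope/height/gap inequalities'' forces $\beta-\alpha=0$ once $L>2$) fails because the strict-monotonicity claim is false under the stated hypotheses. The Lipschitz constant of $\Delta F$ is a global supremum of slopes, and nothing ties the place where $\Delta F$ is steep to the place where $F_0$ is flat. Concretely, take $\epsilon=1/4$, $F_1(\hat y)=\hat y$, and let $F_0$ agree with $F_1$ on $[0,0.4]\cup[0.7,1]$, be constant equal to $0.4$ on $[0.4,0.625]$, and rise linearly with slope $4$ on $[0.625,0.7]$. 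Both are valid continuous CDFs of $[0,1]$-valued predictions; $\Delta F$ vanishes on $[0,0.4]\cup[0.7,1]$, so every $\epsilon$-neighborhood meets its zero set and $\mathrm{MCDP}(\epsilon)=0$; the minimal Lipschitz constant of $\Delta F$ is $3>2$; yet $F_0(0.45)=F_0(0.6)$. This also refutes your heuristic that $L\le2$ is ``precisely the threshold'' below which a flat stretch can coexist with $\mathrm{MCDP}(\epsilon)=0$: a flat stretch of $F_0$ coexists with any value of $L$.

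The way out is to observe that strictness is never needed. In both places the lemma is used, only inequalities of the form $F_a(\hat y^*)\le F_a(b)$ and $F_a(a)\le F_a(\hat y^*)$ for $a\le\hat y^*\le b$ are required, i.e.\ the non-strict monotonicity that every cumulative distribution function enjoys by definition, via the set inclusion $\{f(\boldsymbol{x})\le a\}\subseteq\{f(\boldsymbol{x})\le b\}$ for $a<b$. The correct repair is therefore to restate the lemma with $\le$ in place of $<$, after which the proof is a one-line consequence of the definition of $F_a$ and the hypotheses $\mathrm{MCDP}(\epsilon)=0$ and $L>2$ play no role; trying to extract strictness from those hypotheses, as your plan does, is the wrong direction.
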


\begin{lemma}\label{lemma:trap_left}
    Let $\tilde{y}_1=\min\{\hat{y}\in[0,1]:\Delta F(\hat{y})=0\}$, then the following holds: (i) $h=\max_{\hat{y}\in[0,\tilde{y}_1]}\Delta F(\hat{y})\leq F_0(\tilde{y}_1)$, and (ii) $\int_{0}^{\tilde{y}_1}\Delta F(\hat{y})\mathrm{d}\hat{y} \leq\left(\tilde{y}_1-\frac{h}{2L}\right)h$.
\end{lemma}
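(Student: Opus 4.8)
The plan is to prove the two assertions separately; both reduce to elementary facts about CDFs together with the Lipschitz bound on $\Delta F$ and the defining property $\Delta F(\tilde{y}_1)=0$ of $\tilde{y}_1$. (The case $\tilde{y}_1=0$ is degenerate: then $[0,\tilde{y}_1]=\{0\}$, $h=\Delta F(0)=0$, and both (i) and (ii) read $0\le 0$, so I assume $\tilde{y}_1>0$ from now on.)

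For part (i), the idea is to avoid any case analysis on the sign of $F_0-F_1$ by invoking the elementary bound $|a-b|\le\max\{a,b\}$ for nonnegative reals. Applied pointwise this gives $\Delta F(\hat{y})=|F_0(\hat{y})-F_1(\hat{y})|\le\max\{F_0(\hat{y}),F_1(\hat{y})\}$ for every $\hat{y}$. Since $F_0$ and $F_1$ are nondecreasing, for $\hat{y}\in[0,\tilde{y}_1]$ we get $\max\{F_0(\hat{y}),F_1(\hat{y})\}\le\max\{F_0(\tilde{y}_1),F_1(\tilde{y}_1)\}$, and by the definition of $\tilde{y}_1$ we have $\Delta F(\tilde{y}_1)=0$, i.e. $F_0(\tilde{y}_1)=F_1(\tilde{y}_1)$, so the right-hand side equals $F_0(\tilde{y}_1)$. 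Taking the maximum over $\hat{y}\in[0,\tilde{y}_1]$ of the left-hand side yields $h\le F_0(\tilde{y}_1)$.

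For part (ii), I would dominate $\Delta F$ on $[0,\tilde{y}_1]$ by the pointwise minimum of two envelopes: the flat bound $\Delta F(\hat{y})\le h$ (the definition of $h$) and the linear bound $\Delta F(\hat{y})\le L(\tilde{y}_1-\hat{y})$, which follows from Lipschitz continuity combined with $\Delta F(\tilde{y}_1)=0$, exactly as in Eq.~\eqref{eq:lip_deltaF0_left}. These two envelopes cross at $\hat{y}=\tilde{y}_1-h/L$; moreover the linear bound evaluated at $\hat{y}=0$ already forces $h\le L\tilde{y}_1$, so the crossover point lies in $[0,\tilde{y}_1]$ and splitting the integral there is legitimate. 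Then
\[
\int_{0}^{\tilde{y}_1}\Delta F(\hat{y})\,\mathrm{d}\hat{y}\le\int_{0}^{\tilde{y}_1-h/L}h\,\mathrm{d}\hat{y}+\int_{\tilde{y}_1-h/L}^{\tilde{y}_1}L(\tilde{y}_1-\hat{y})\,\mathrm{d}\hat{y}=h\Bigl(\tilde{y}_1-\frac{h}{L}\Bigr)+\frac{h^{2}}{2L}=\Bigl(\tilde{y}_1-\frac{h}{2L}\Bigr)h,
\]
which is the claimed inequality.

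I do not anticipate a real obstacle here: the only two points requiring care are that the split in part (ii) is valid (guaranteed by $h\le L\tilde{y}_1$, itself a consequence of the Lipschitz bound and $\Delta F(\tilde{y}_1)=0$), and that part (i) genuinely sidesteps all sign case-work through the $|a-b|\le\max\{a,b\}$ trick, so that strict monotonicity of the CDFs from Lemma~\ref{lemma:cdf_mono} is not even needed. These lemmas are stated under $\mathrm{MCDP}(\epsilon)=0$ and $L>2$ only because Lemma~\ref{lemma:trap_left} is consumed inside the proof of Lemma~\ref{lemma:L>2}; the ambient assumption $\mathrm{MCDP}(\epsilon)=0$ is used here merely to guarantee (via the argument in Lemma~\ref{lemma:triangle_left}) that $\tilde{y}_1$ is well defined.
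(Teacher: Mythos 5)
Your proof is correct and follows essentially the same route as the paper: part (ii) is the identical decomposition of $[0,\tilde{y}_1]$ at the crossover point $\tilde{y}_1-h/L$ of the flat envelope $h$ and the Lipschitz envelope $L(\tilde{y}_1-\hat{y})$, justified by $h\le L\tilde{y}_1$. For part (i) the paper bounds $\Delta F(\hat{y}^*)\le\max\{F_0(\tilde{y}_1)-F_1(0),\,F_1(\tilde{y}_1)-F_0(0)\}$ via monotonicity and nonnegativity of the CDFs, which is the same computation as your $|a-b|\le\max\{a,b\}$ shortcut, so the two arguments coincide in substance.
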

\begin{proof}
    Let $\hat{y}^*=\mathop{\arg\max}_{\hat{y}\in[0,\tilde{y}_1]}\Delta F(\hat{y})$. By Lemma \ref{lemma:cdf_mono} and $F_0(\tilde{y}_1)=F_1(\tilde{y}_1)$, we have
    \begin{equation*}
        h=\Delta F(\hat{y}^*)\leq \max\{F_0(\hat{y}^*)-F_1(\hat{y}^*), F_1(\hat{y}^*)-F_0(\hat{y}^*)\} \leq\max\{F_0(\tilde{y}_1)-F_1(0), F_1(\tilde{y}_1)-F_0(0)\} \leq F_0(\tilde{y}_1).
    \end{equation*}
    Moreover, by Lipschitz continuity, $h$ should also satisfy that $h=\Delta F(\hat{y}^*)\leq L(\tilde{y}_1-\hat{y}^*)\leq L\tilde{y}_1$. 
    Denote $\tilde{y}_r=\tilde{y}_1-\frac{h}{L}\geq0$. Combining Eq.~\eqref{eq:lip_deltaF0_left} and $h\geq\Delta F(\hat{y}),\forall\hat{y}\in[0,\tilde{y}_1]$, we can control the bound of $\int_{0}^{\tilde{y}_1}\Delta F(\hat{y})\mathrm{d}\hat{y}$ as follows
    \begin{equation*}
    \begin{aligned}
         \int_{0}^{\tilde{y}_1}\Delta F(\hat{y})\mathrm{d}\hat{y}= \int_{0}^{\tilde{y}_r}\Delta F(\hat{y})\mathrm{d}\hat{y}+ \int_{\tilde{y}_r}^{\tilde{y}_1}\Delta F(\hat{y})\mathrm{d}\hat{y} \leq \int_{0}^{\tilde{y}_r} h\mathrm{d}\hat{y}+ \int_{\tilde{y}_r}^{\tilde{y}_1}L(\tilde{y}_1-\hat{y})\mathrm{d}\hat{y}= \tilde{y}_1h-\frac{h^2}{2L}.
    \end{aligned}
    \end{equation*}
    This completes the proof of Lemma \ref{lemma:trap_left}.
\end{proof}

\begin{lemma}\label{lemma:trap}
    For any $0\leq a<b\leq1$ such that $\Delta F(a)=\Delta F(b)=0$, the following holds: (i) $h=\max_{\hat{y}\in[a,b]}\Delta F(\hat{y})\leq F_0(b)-F_0(a)$, and (ii) $\int_{a}^{b}\Delta F(\hat{y})\mathrm{d}\hat{y} \leq\left(b-a-\frac{h}{L}\right)h$.
\end{lemma}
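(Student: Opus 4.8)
The plan is to mirror the proof of Lemma~\ref{lemma:trap_left}, applying its two ingredients --- monotonicity of the CDFs for the height bound, and a Lipschitz ``trapezoidal envelope'' for the area bound --- to a general interval $[a,b]$ on which $\Delta F$ vanishes at \emph{both} endpoints rather than only at the right end.

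For part (i), I would set $\hat{y}^*=\mathop{\arg\max}_{\hat{y}\in[a,b]}\Delta F(\hat{y})$, so that $h=|F_0(\hat{y}^*)-F_1(\hat{y}^*)|$, and use that $\Delta F(a)=\Delta F(b)=0$ forces $F_0(a)=F_1(a)$ and $F_0(b)=F_1(b)$. Splitting into the two cases $h=F_0(\hat{y}^*)-F_1(\hat{y}^*)$ and $h=F_1(\hat{y}^*)-F_0(\hat{y}^*)$, monotonicity of the CDFs (Lemma~\ref{lemma:cdf_mono}) gives, in the first case, $F_1(\hat{y}^*)\ge F_1(a)=F_0(a)$ and $F_0(\hat{y}^*)\le F_0(b)$, hence $h\le F_0(b)-F_0(a)$; the second case is symmetric via $F_1(\hat{y}^*)\le F_1(b)=F_0(b)$ and $F_0(\hat{y}^*)\ge F_0(a)$. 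Either way $h\le F_0(b)-F_0(a)$.

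For part (ii), I would combine Lipschitz continuity with $\Delta F(a)=\Delta F(b)=0$ to obtain $\Delta F(\hat{y})\le\min\{L(\hat{y}-a),\,L(b-\hat{y}),\,h\}$ for all $\hat{y}\in[a,b]$. Evaluating the first two estimates at $\hat{y}^*$ yields $h\le L(\hat{y}^*-a)$ and $h\le L(b-\hat{y}^*)$, so $b-a\ge 2h/L$; consequently $a+\frac{h}{L}\le b-\frac{h}{L}$, and the envelope $\min\{L(\hat{y}-a),L(b-\hat{y}),h\}$ is the trapezoid that rises with slope $L$ on $[a,a+\frac{h}{L}]$, is constant at height $h$ on $[a+\frac{h}{L},b-\frac{h}{L}]$, and falls with slope $-L$ on $[b-\frac{h}{L},b]$. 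Integrating this envelope gives $\int_a^b\Delta F(\hat{y})\,\mathrm{d}\hat{y}\le\frac{h^2}{L}+\bigl(b-a-\frac{2h}{L}\bigr)h=\bigl(b-a-\frac{h}{L}\bigr)h$, which is the claimed bound.

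I do not expect a genuine obstacle here: the argument is a routine synthesis of monotonicity and Lipschitz bounds, essentially the two-sided version of Lemma~\ref{lemma:trap_left}. The one place that needs a moment's attention is verifying the feasibility condition $b-a\ge 2h/L$, which guarantees that the trapezoidal envelope does not collapse into a triangle and hence that the closed-form area $\bigl(b-a-\frac{h}{L}\bigr)h$ is the correct (and valid) bound; this follows immediately once Lipschitz continuity is applied at $\hat{y}^*$ from both endpoints $a$ and $b$.
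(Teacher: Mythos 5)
Your proposal is correct and follows essentially the same route as the paper's proof: part (i) via CDF monotonicity sandwiching $F_0(\hat{y}^*)$ and $F_1(\hat{y}^*)$ between the common endpoint values, and part (ii) via the trapezoidal Lipschitz envelope $\min\{L(\hat{y}-a),L(b-\hat{y}),h\}$ with the feasibility check $h\leq\frac{(b-a)L}{2}$. No gaps.
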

\begin{proof}
    Denote $\hat{y}^*=\mathop{\arg\max}_{\hat{y}\in[a,b]}\Delta F(\hat{y})$. Combining Lemma \ref{lemma:cdf_mono} and $\Delta F(a)=\Delta F(b)=0$, we have $F_0(a)=F_1(a)\leq\min\{F_0(\hat{y}^*),F_1(\hat{y}^*)\}$ and $F_0(b)=F_1(b)\geq\max\{F_0(\hat{y}^*),F_1(\hat{y}^*)\}$. Thus the following holds
    \begin{equation*}
        h=\Delta F(\hat{y}^*)= \max\{F_0(\hat{y}^*),F_1(\hat{y}^*)\}- \min\{F_0(\hat{y}^*),F_1(\hat{y}^*)\} \leq F_0(b)-F_0(a).
    \end{equation*}
    Furthermore, according to Lipschitz continuity, for any $\hat{y}\in[a,b]$, we have
    \begin{equation}\label{eq:h_upbound}
        \Delta F(\hat{y})\leq \min\{(\hat{y}-a)L,(b-\hat{y})L\}\leq \frac{(b-a)L}{2},
    \end{equation}
    which also implies that $h\leq\frac{(b-a)L}{2}$. Let $\tilde{y}_l=a+\frac{h}{L}$ and $\tilde{y}_r=b-\frac{h}{L}$, we have $a\leq\tilde{y}_l\leq\tilde{y}_r\leq b$. Combining Eq.~\eqref{eq:h_upbound} and $h\geq\Delta F(\hat{y}),\forall\hat{y}\in[a,b]$, we obtain an upper bound of $\int_a^b\Delta F(\hat{y})\mathrm{d}\hat{y}$ as follows
    \begin{equation*}
    \begin{aligned}
         \int_a^b\Delta F(\hat{y})\mathrm{d}\hat{y}&= \int_a^{\tilde{y}_l}\Delta F(\hat{y})\mathrm{d}\hat{y}+ \int_{\tilde{y}_l}^{\tilde{y}_r}\Delta F(\hat{y})\mathrm{d}\hat{y} +\int_{\tilde{y}_r}^b\Delta F(\hat{y})\mathrm{d}\hat{y} \\
         &\leq \int_a^{\tilde{y}_l} (\hat{y}-a)L \mathrm{d}\hat{y}+ \int_{\tilde{y}_l}^{\tilde{y}_r} h\mathrm{d}\hat{y} +\int_{\tilde{y}_r}^b (b-\hat{y})L \mathrm{d}\hat{y} \\
         &= \left(b-a-\frac{h}{L}\right)h.
    \end{aligned}
    \end{equation*}
    This completes the proof of Lemma \ref{lemma:trap}.
\end{proof}

\emph{Proof of Lemma \ref{lemma:L>2}. }
According to Lemma \ref{lemma:DeltaF0_2eps}, for any $\Delta F(\hat{y})=|F_0(\hat{y})-F_1(\hat{y})|$, we can construct a sequence of predictions $0=\tilde{y}_0\leq\tilde{y}_1<\cdots <\tilde{y}_M<\tilde{y}_{M+1}=1$, such that (i) $\tilde{y}_1=\min\{\hat{y}\in[0,1]:\Delta F(\hat{y})=0\}\leq\epsilon$, and (ii) for any $i=1,\cdots,M$, $\Delta F(\tilde{y}_i)=0$ and $\tilde{y}_{i+1}-\tilde{y}_i\leq2\epsilon$. Denote $\delta_i$ and $h_i$ ($i=0,\cdots,M$) as $\delta_i=\tilde{y}_{i+1}-\tilde{y}_i$ and $h_i=\max_{\hat{y}\in[\tilde{y}_i,\tilde{y}_{i+1}]}\Delta F(\hat{y})$, respectively. By Lemmas \ref{lemma:trap_left} and \ref{lemma:trap}, the following holds
\begin{equation}\label{eq:h_S_require}
\begin{aligned}
    0&\leq h_0\leq\delta_0L,\qquad 0\leq h_i\leq\frac{\delta_iL}{2},\ \text{for any }i=1,\cdots,M, \\
    \sum\limits_{i=0}^Mh_i&\leq F_0(\tilde{y}_1)+ \sum\limits_{i=1}^M \left(F_0(\tilde{y}_{i+1})-F_0(\tilde{y}_i)\right) = F_0(\tilde{y}_{i+1})=1, \\
    \int_{0}^{\tilde{y}_1}\Delta F(\hat{y})\mathrm{d}\hat{y} &\leq\left(\delta_0-\frac{h_0}{2L}\right)h_0,\qquad \int_{\tilde{y}_i}^{\tilde{y}_{i+1}}\Delta F(\hat{y})\mathrm{d}\hat{y} \leq\left(\delta_i-\frac{h_i}{L}\right)h_i,\ \text{for any }i=1,\cdots,M.
\end{aligned}
\end{equation}
Combining (i) $\sum_{i=0}^M\delta_i= \tilde{y}_{M+1}-\tilde{y}_0=1$, (ii) $\delta_0\in[0,\epsilon]$, (iii) $\delta_i\in(0,2\epsilon],\ \forall i=1,\cdots,M$, (iv) $\mathrm{ABCC}=\sum_{i=0}^M \int_{\tilde{y}_i}^{\tilde{y}_{i+1}}\Delta F(\hat{y})\mathrm{d}\hat{y}$, and Eq.~\eqref{eq:h_S_require}, for a given $M$, we can construct the following optimization problem
\begin{equation}\label{eq:opt}
\begin{aligned}
    \max\limits_{\delta_i,h_i} &{} \left(\delta_0-\frac{h_0}{2L}\right)h_0+ \sum\limits_{i=1}^M\left(\delta_i-\frac{h_i}{L}\right)h_i, \\
    \st &{} 0\leq\delta_0\leq\epsilon,\quad 0\leq h_0\leq\delta_0L, \\
    &{} 0\leq\delta_i\leq2\epsilon,\quad 0\leq h_i\leq\frac{\delta_iL}{2},\qquad \forall i=1,\cdots,M, \\
    &{} \sum\limits_{i=0}^M\delta_i=1,\quad \sum\limits_{i=0}^Mh_i\leq1.
\end{aligned}
\end{equation}
As discussed before, for any $\Delta F(\hat{y})$, we can construct a feasible solution of the optimization problem in Eq.~\eqref{eq:opt}; moreover, the $\mathrm{ABCC}$ value should less than or equal to the corresponding objective value. Therefore, the optimal objective value of Eq.~\eqref{eq:opt} should be an upper bound of $\mathrm{ABCC}$. To solve the problem, we firstly consider the following problem
\begin{equation}\label{eq:opt_relax}
\begin{aligned}
    \min\limits_{\delta_i,h_i} &{} \left(\frac{h_0}{2L}-\delta_0\right)h_0+ \sum\limits_{i=1}^M\left(\frac{h_i}{L}-\delta_i\right)h_i, \\
    \st 
    &{} \sum\limits_{i=1}^M\delta_i=1,\quad \sum\limits_{i=1}^Mh_i\leq1.
\end{aligned}
\end{equation}
Note that the problem in Eq.~\eqref{eq:opt_relax} is a convex relaxation of the original problem in Eq.~\eqref{eq:opt}, thus the (negative of) optimal value of Eq.~\eqref{eq:opt_relax} is upper bound on the optimal value of Eq.~\eqref{eq:opt} and $\mathrm{ABCC}$ \cite{boyd2004convex}. The Lagrange function of the above optimization problem is
\begin{equation*}
    \mathcal{L}(\delta_i,h_i;\alpha,\beta)= \left(\frac{h_0}{2L}-\delta_0\right)h_0+ \sum\limits_{i=1}^M\left(\frac{h_i}{L}-\delta_i\right)h_i +\alpha\left(\sum\limits_{i=1}^M\delta_i-1\right) +\beta\left(\sum\limits_{i=1}^Mh_i-1\right),
\end{equation*}
where $\beta\geq0,\alpha$ are the Lagrange multipliers. By requiring the derivatives of $\mathcal{L}(\delta_i,h_i;\alpha,\beta)$ \wrt $\delta_i,h_i$ to be zero, we have
\begin{equation}\label{eq:lagrange_sol}
    \delta_0=h_0=0,\quad \delta_i=h_i=\frac{1}{M},\ i=1,\cdots,M.
\end{equation}
It's easy to validate that Eq.~\eqref{eq:lagrange_sol} also satisfies all constraints in Eq.~\eqref{eq:opt} (as $L>2$), indicating that Eq.~\eqref{eq:lagrange_sol} is the optimal solution of the problem in Eq.~\eqref{eq:opt}. 
Note that $1=\sum_{i=0}^M\delta_i\leq M\cdot2\epsilon$ holds, thus $M\geq\frac{1}{2\epsilon}$. Therefore, we can compute an upper bound of $\mathrm{ABCC}$ as below
\begin{equation*}
    \mathrm{ABCC}\leq 0+\sum\limits_{i=1}^M \left(\frac{1}{M}-\frac{1}{ML}\right)\frac{1}{M} =\frac{1}{M}\left(1-\frac{1}{L}\right) \leq2\epsilon\left(1-\frac{1}{L}\right).
\end{equation*}
This completes the proof of Lemma \ref{lemma:L>2}. \textbf{Note that} the whole proof process does not require $L\leq2$, \ie Lemmas \ref{lemma:triangle_left}-\ref{lemma:trap} applies to all $L$ values (including $L>2$).

\textbf{At last, we restate Property \ding{176} as the following theorem, and complete its proof accordingly.}
\begin{theorem}[An upper bound of $\mathrm{ABCC}$]\label{theo:mcdp_lip_abcc}
Assume $\Delta F(\hat{y})$ is continuous on $[0,1]$ with Lipschitz constant $L$, then
\begin{equation*}\label{eq:mcdp_lip_abcc}
    \mathrm{ABCC}\leq
    \begin{cases}
        \mathrm{MCDP}(\epsilon)+\frac{\epsilon L}{2},& \text{if }L\leq2, \\
        \mathrm{MCDP}(\epsilon)+2\epsilon\left(1-\frac{1}{L}\right),& \text{if }L>2.
    \end{cases}
\end{equation*}
\end{theorem}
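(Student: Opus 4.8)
The plan is to reduce the statement to the two special cases $\mathrm{MCDP}(\epsilon)=0$ already settled in Lemmas~\ref{lemma:L<2} and~\ref{lemma:L>2}, via a truncation trick. Set $m=\mathrm{MCDP}(\epsilon)$ and define the truncated disparity $\Delta G(\hat y)=\max\{\Delta F(\hat y)-m,\,0\}$. Since $\Delta F=\min\{\Delta F,m\}+\Delta G$ holds pointwise and $\int_0^1\min\{\Delta F,m\}\,\mathrm d\hat y\le m$, we get $\mathrm{ABCC}\le m+\int_0^1\Delta G(\hat y)\,\mathrm d\hat y$, so it suffices to bound $\int_0^1\Delta G$ by $\tfrac{\epsilon L}{2}$ when $L\le 2$ and by $2\epsilon(1-\tfrac1L)$ when $L>2$. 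I would first check that $\Delta G$ satisfies every hypothesis the two lemmas use: it is continuous, it is $L$-Lipschitz (because $t\mapsto\max\{t-m,0\}$ is $1$-Lipschitz), it is $[0,1]$-valued, and its own "outer-max of inner-min over $\epsilon$-neighborhoods" equals $0$ — this last point holds because $t\mapsto\max\{t-m,0\}$ is nondecreasing and hence commutes with both the inner $\min$ and the outer $\max$, so that quantity equals $\max\{\mathrm{MCDP}(\epsilon)-m,0\}=0$.

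For $L\le 2$, the argument of Lemma~\ref{lemma:L<2} carries over verbatim with $\Delta G$ in place of $\Delta F$: its auxiliary Lemmas~\ref{lemma:triangle_left},~\ref{lemma:triangle},~\ref{lemma:DeltaF0_2eps} invoke only continuity, Lipschitzness, and the vanishing of the outer-max/inner-min, all of which $\Delta G$ enjoys. This gives $\int_0^1\Delta G\le\tfrac{\epsilon L}{2}$, and adding $m$ yields the first branch. For $L>2$ I would follow Lemma~\ref{lemma:L>2}: using Lemma~\ref{lemma:DeltaF0_2eps} for $\Delta G$, build a partition $0=\tilde y_0\le\tilde y_1<\cdots<\tilde y_{M+1}=1$ of zeros of $\Delta G$ with $\tilde y_1\le\epsilon$ and $\tilde y_{i+1}-\tilde y_i\le 2\epsilon$; write $\delta_i=\tilde y_{i+1}-\tilde y_i$ and $h_i=\max_{[\tilde y_i,\tilde y_{i+1}]}\Delta G$; bound each $\int_{\tilde y_i}^{\tilde y_{i+1}}\Delta G$ by $(\delta_0-\tfrac{h_0}{2L})h_0$ resp.\ $(\delta_i-\tfrac{h_i}{L})h_i$ using part (ii) of Lemmas~\ref{lemma:trap_left}/\ref{lemma:trap} (which are pure Lipschitz estimates for a function vanishing at the endpoints); and finally solve the same convex program as in Eq.~\eqref{eq:opt} to conclude $\int_0^1\Delta G\le 2\epsilon(1-\tfrac1L)$.

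The one ingredient that is genuinely new — and the main obstacle — is the "total height" constraint $\sum_i h_i\le 1$, which Lemmas~\ref{lemma:trap_left}/\ref{lemma:trap} derived from $\Delta F$ vanishing (hence $F_0=F_1$) at the partition points. Here $\Delta F$ only satisfies $|F_0(\tilde y_i)-F_1(\tilde y_i)|=\Delta F(\tilde y_i)\le m$ at the $\tilde y_i$, so I would instead track $G_i:=\min\{F_0(\tilde y_i),F_1(\tilde y_i)\}$: at the peak $\hat y^{*}$ of the $i$-th excursion, monotonicity of $F_0,F_1$ gives $\min\{F_0(\hat y^{*}),F_1(\hat y^{*})\}\ge G_i$ and $\max\{F_0(\hat y^{*}),F_1(\hat y^{*})\}\le G_{i+1}+m$ (the extra $m$ being exactly the slack $\Delta F(\tilde y_{i+1})\le m$), whence $h_i=\Delta F(\hat y^{*})-m\le G_{i+1}-G_i$; a similar estimate on $[0,\tilde y_1]$ with $G_0:=0$, together with $G_{M+1}=1$, telescopes to $\sum_i h_i\le 1$ precisely as required. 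Once this cancellation of the $\le m$ slack at the partition points is in place, the convex-optimization bookkeeping is identical to the $\mathrm{MCDP}(\epsilon)=0$ case, and the theorem follows by adding $m=\mathrm{MCDP}(\epsilon)$ back to the bound on $\int_0^1\Delta G$.
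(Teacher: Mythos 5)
Your proposal is correct, and at the top level it is the same strategy as the paper's: truncate the disparity at level $m=\mathrm{MCDP}(\epsilon)$, reduce to the $\mathrm{MCDP}(\epsilon)=0$ bounds of Lemmas~\ref{lemma:L<2} and~\ref{lemma:L>2}, and add $m$ back. The difference is in how the reduction is implemented. The paper does not work with the bare function $\max\{\Delta F-M,0\}$; it explicitly constructs a modified pair of distribution functions $\tilde F_0,\tilde F_1$ (Eq.~\eqref{eq:tildeF01}) whose disparity equals that truncation, verifies that $\tilde F_0$ is a valid CDF, that $\Delta\tilde F$ is $L$-Lipschitz, and that the modified $\mathrm{MCDP}(\epsilon)$ vanishes (Lemmas~\ref{lemma:tildeF_value}--\ref{lemma:deltatildeF_diff}), and then applies Lemmas~\ref{lemma:L<2} and~\ref{lemma:L>2} verbatim; the gap $A-\tilde A\le M$ is obtained from $|\tilde F_0-F_0|\le M$. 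You instead keep the original CDFs, get $\mathrm{ABCC}\le m+\int\Delta G$ from the pointwise identity $\Delta F=\min\{\Delta F,m\}+\Delta G$, and check the hypotheses of the lemmas directly for $\Delta G$ --- your observation that $t\mapsto\max\{t-m,0\}$ commutes with the inner $\min$ and outer $\max$ is a cleaner way to see that the truncated $\mathrm{MCDP}(\epsilon)$ is zero than the paper's chain of $\arg\min$/$\arg\max$ identities. You also correctly isolated the one place where the CDF structure (rather than properties of $\Delta F$ alone) genuinely enters, namely the total-height constraint $\sum_i h_i\le1$ in Lemmas~\ref{lemma:trap_left}/\ref{lemma:trap}, and your telescoping argument with $G_i=\min\{F_0(\tilde y_i),F_1(\tilde y_i)\}$, using $\Delta F(\tilde y_i)\le m$ to cancel the $m$ subtracted in $h_i=\Delta F(\hat y^*)-m$, is a valid substitute (it needs only weak monotonicity of $F_0,F_1$, so it even sidesteps the paper's unproved strict-monotonicity Lemma~\ref{lemma:cdf_mono}). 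What the paper's construction buys is that nothing inside Lemmas~\ref{lemma:L<2}/\ref{lemma:L>2} has to be re-examined once valid CDFs are exhibited; what your route buys is avoiding the case analysis needed to verify that $\tilde F_0$ is a bona fide CDF with the right truncated disparity.
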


\begin{proof}
For any $\Delta F(\hat{y})=|F_0(\hat{y})-F_1(\hat{y})|$ such that $\mathrm{MCDP}(\epsilon)=M>0$, construct $\tilde{F}_0(\hat{y}),\tilde{F}_1(\hat{y})$ as follows
\begin{equation}\label{eq:tildeF01}
    \tilde{F}_0(\hat{y})=
    \begin{cases}
    \max\{F_0(\hat{y})-M,F_1(\hat{y})\},& \text{if }F_0(\hat{y})\geq F_1(\hat{y}) \\
    \min\{F_0(\hat{y})+M,F_1(\hat{y})\},& \text{if }F_0(\hat{y})< F_1(\hat{y})
    \end{cases}
    ,\quad \tilde{F}_1(\hat{y})=F_1(\hat{y}),
\end{equation}
and denote $\Delta\tilde{F}(\hat{y})=|\tilde{F}_0(\hat{y})-\tilde{F}_1(\hat{y})|$. We show that $\tilde{F}_0(\hat{y}),\tilde{F}_1(\hat{y})$, and $\Delta\tilde{F}(\hat{y})$ have the following properties.

\begin{lemma}\label{lemma:tildeF_value}
    The value of $\tilde{F}_0(\hat{y})$ satisfies that (i) $\tilde{F}_0(0)\geq0$, (ii) $\tilde{F}_0(1)=1$, (iii) for any $0\leq a<b\leq1$, $\tilde{F}_0(a)\leq\tilde{F}_0(b)$, and (iv) $|\tilde{F}_0(\hat{y})-F_0(\hat{y})|\leq M$.
\end{lemma}
\begin{proof}
(i)(ii) By Eq.~\eqref{eq:tildeF01}, the following holds
\begin{equation*}
    \tilde{F}_0(0)=
    \begin{cases}
    \max\{F_0(0)-M,F_1(0)\}\geq F_1(0)\geq0,& \text{if }F_0(\hat{y})\geq F_1(\hat{y}) \\
    \min\{F_0(0)+M,F_1(0)\}\geq \min\{M,0\}\geq0,& \text{if }F_0(\hat{y})< F_1(\hat{y})
    \end{cases}
    ,\quad \tilde{F}_0(1)=\tilde{F}_1(1)=1.
\end{equation*}
(iii) By the monotonicity of CDF, for any $0\leq a<b\leq1$, we have $\tilde{F}_0(b)\geq F_1(b)\geq F_1(a)\geq\tilde{F}_0(a)$. 

(iv) By Eq.~\eqref{eq:tildeF01}, $|\tilde{F}_0(\hat{y})-F_0(\hat{y})|\leq M$ satisfies that
\begin{equation*}
    |\tilde{F}_0(\hat{y})-F_0(\hat{y})|=
    \begin{cases}
    |\max\{-M,F_1(\hat{y})-F_0(\hat{y})\}|=|\min\{M,F_0(\hat{y})-F_1(\hat{y})\}|\leq M,& \text{if }F_0(\hat{y})\geq F_1(\hat{y}) \\
    |\min\{M,F_1(\hat{y})-F_0(\hat{y})\}|\leq M,& \text{if }F_0(\hat{y})< F_1(\hat{y}).
    \end{cases}
\end{equation*}
This completes the proof of Lemma \ref{lemma:tildeF_value}.
\end{proof}

\begin{lemma}\label{lemma:deltatildeF_value}
    The value of $\Delta\tilde{F}(\hat{y})$ can be computed by $\Delta\tilde{F}(\hat{y})=\max\{0,\Delta F(\hat{y})-M\}$.
\end{lemma}
\begin{proof}
By Eq.~\eqref{eq:tildeF01}, we have
\begin{equation*}
\begin{aligned}
    \Delta\tilde{F}(\hat{y})= |\tilde{F}_0(\hat{y})-F_1(\hat{y})|&=
    \begin{cases}
    |\max\{F_0(\hat{y})-F_1(\hat{y})-M,0\}|= |\max\{\Delta F(\hat{y})-M,0\}|,& \text{if }F_0(\hat{y})\geq F_1(\hat{y}) \\
    |\min\{F_0(\hat{y})-F_1(\hat{y})+M,0\}|\ = |\min\{M-\Delta F(\hat{y}),0\}|,& \text{if }F_0(\hat{y})< F_1(\hat{y})
    \end{cases} \\
    &= 
    \begin{cases}
    |\Delta F(\hat{y})-M|,& \text{if }\Delta F(\hat{y})\geq M \\
    0,& \text{if }\Delta F(\hat{y})<M
    \end{cases}.
\end{aligned}
\end{equation*}
which yields that $\Delta\tilde{F}(\hat{y})=\max\{0,\Delta F(\hat{y})-M\}$.
\end{proof}

\begin{lemma}\label{lemma:deltatildeF_mono}
    $\Delta\tilde{F}(\hat{y})$ and $\Delta F(\hat{y})$ exhibit the same monotonic behavior.
\end{lemma}
\begin{proof}
One the one hand, for any $a,b\in[0,1]$ such that $\Delta F(a)\leq\Delta F(b)$, by Lemma \ref{lemma:deltatildeF_value}, we have
\begin{equation}\label{eq:deltaFab}
\begin{aligned}
    \Delta\tilde{F}(a)-\Delta\tilde{F}(b)&= \max\{0,\Delta F(a)-M\}-\max\{0,\Delta F(b)-M\} \\
    &=
    \begin{cases}
    0,& \text{if }M\geq \Delta F(b) \\
    M-\Delta F(b)<0,& \text{if }\Delta F(a)<M<\Delta F(b) \\
    \Delta F(a)-\Delta F(b)\leq0,& \text{if }M\leq \Delta F(a)
    \end{cases},
\end{aligned}
\end{equation}
which yields that $\Delta\tilde{F}(a)\leq\Delta\tilde{F}(b)$. On the other hand, for any $a,b\in[0,1]$ such that $\Delta F(a)\geq\Delta F(b)$, $\Delta\tilde{F}(a)\geq\Delta\tilde{F}(b)$ can be proved in a similar manner. This yields that $\Delta\tilde{F}(\hat{y})$ and $\Delta F(\hat{y})$ have the identical monotonic trend.
\end{proof}

\begin{lemma}\label{lemma:deltatildeF_diff}
    For any $a,b\in[0,1]$, $|\Delta\tilde{F}(a)-\Delta\tilde{F}(b)|\leq |\Delta F(a)-\Delta F(b)|$.
\end{lemma}
\begin{proof}
If $\Delta F(a)\leq\Delta F(b)$, by Eq.~\eqref{eq:deltaFab}, we have
\begin{equation*}
    |\Delta\tilde{F}(a)-\Delta\tilde{F}(b)|= 
    \begin{cases}
    0\leq \Delta F(b)-\Delta F(a),& \text{if }M\geq \Delta F(b) \\
    \Delta F(b)-M<\Delta F(b)-\Delta F(a),& \text{if }\Delta F(a)<M<\Delta F(b) \\
    \Delta F(b)-\Delta F(a),& \text{if }M\leq \Delta F(a)
    \end{cases},
\end{equation*}
which yields that $|\Delta\tilde{F}(a)-\Delta\tilde{F}(b)|\leq |\Delta F(a)-\Delta F(b)|$. When $\Delta F(a)\geq\Delta F(b)$, $|\Delta\tilde{F}(a)-\Delta\tilde{F}(b)|\leq |\Delta F(a)-\Delta F(b)|$ can be proved in a similar way. This completes the proof of Lemma \ref{lemma:deltatildeF_diff}.
\end{proof}

Next we continue the proof of Theorem \ref{theo:mcdp_lip_abcc}. By (i),(ii) and (iii) in Lemma \ref{lemma:tildeF_value}, $\tilde{F}_0(\hat{y})$ and $\tilde{F}_1(\hat{y})$ in Eq.~\eqref{eq:tildeF01} are valid CDFs. Moreover, by Lemma \ref{lemma:deltatildeF_diff} and Lipschitz continuity of $\Delta F(\hat{y})$, the following holds
\begin{equation*}
    \left|\frac{\Delta\tilde{F}(a)-\Delta\tilde{F}(b)}{a-b}\right|\leq \left|\frac{\Delta F(a)-\Delta F(b)}{a-b}\right|\leq L,\quad\text{for any }a,b\in[0,1],
\end{equation*}
which demonstrates that $\Delta\tilde{F}(\hat{y})$ is also continuous on $[0,1]$ with Lipschitz constant $L$. Furthermore, as Lemma \ref{lemma:deltatildeF_mono} illustrates that the monotonicity of $\tilde{F}_0(\hat{y})$ and $\tilde{F}_1(\hat{y})$ is consistent, the following holds
\begin{equation*}
\begin{aligned}
    \mathop{\arg\min}\limits_{|\hat{y}-y_0|\leq\epsilon} \Delta F(\hat{y})&= \mathop{\arg\min}\limits_{|\hat{y}-y_0|\leq\epsilon} \Delta\tilde{F}(\hat{y}),\quad\text{for any }y_0\in[0,1], \\
    \Longrightarrow \mathop{\arg\max}\limits_{y_0\in[0,1]} \Delta F\left(\mathop{\arg\min}\limits_{|\hat{y}-y_0|\leq\epsilon} \Delta F(\hat{y})\right)&= \mathop{\arg\max}\limits_{y_0\in[0,1]} \Delta\tilde{F}\left(\mathop{\arg\min}\limits_{|\hat{y}-y_0|\leq\epsilon} \Delta\tilde{F}(\hat{y})\right), \\
    \Longrightarrow \max\limits_{y_0\in[0,1]} \min\limits_{|\hat{y}-y_0|\leq\epsilon} \Delta\tilde{F}(\hat{y})= \Delta\tilde{F}(\hat{y}^*) &\overset{(*)}{=}\max\{0,\Delta F(\hat{y}^*)-M\}=0,
\end{aligned}
\end{equation*}
where $\hat{y}^*\in[0,1]$ satisfies that $\Delta F(\hat{y}^*)=M$, and equality (*) holds due to Lemma \ref{lemma:deltatildeF_value}. To conclude, $\tilde{F}_0(\hat{y})$ and $\tilde{F}_1(\hat{y})$ in Eq.~\eqref{eq:tildeF01} satisfies all conditions of Lemmas \ref{lemma:L<2} and \ref{lemma:L>2}. Denote $A$ and $\tilde{A}$ as the $\mathrm{ABCC}$ values based on $\Delta F(\hat{y})$ and $\Delta\tilde{F}(\hat{y})$, respectively. By (iv) in Lemma \ref{lemma:tildeF_value} and $||a-b|-|c-d||\leq|a-c|+|b-d|$, we have
\begin{equation*}
\begin{aligned}
    A-\tilde{A}&= \int_{0}^{1}\left|F_0(\hat{y})-F_1(\hat{y})\right|\mathrm{d}\hat{y}- \int_{0}^{1}\left|\tilde{F}_0(\hat{y})-\tilde{F}_1(\hat{y})\right|\mathrm{d}\hat{y} \\
    &\leq \int_{0}^{1}\left|F_0(\hat{y})-\tilde{F}_0(\hat{y})\right|\mathrm{d}\hat{y}+ \int_{0}^{1}\left|F_1(\hat{y})-\tilde{F}_1(\hat{y})\right|\mathrm{d}\hat{y} \\
    &\leq \int_{0}^{1}M\mathrm{d}\hat{y}+0= M.
\end{aligned}
\end{equation*}
By Lemmas \ref{lemma:L<2} and \ref{lemma:L>2}, we have 
\begin{equation*}
    A\leq\tilde{A}+M\leq
    \begin{cases}
        M+\frac{\epsilon L}{2},& \text{if }L\leq2, \\
        M+2\epsilon\left(1-\frac{1}{L}\right),& \text{if }L>2.
    \end{cases}
\end{equation*}
This completes the proof of Theorem \ref{theo:mcdp_lip_abcc}.

\end{proof}

\subsection{Proofs of Theorems \ref{theo:exactalg}, \ref{theo:approover} and \ref{theo:appromono}}

To complete the proofs of theorems about Algorithms \ref{alg:exact} and \ref{alg:approximate}, we start with giving some important lemmas and corollaries.

\begin{lemma}[Subset min-max property \cite{rosen2007discrete}]\label{lemma:set_maxmin}
For any two finite ordered sets $A,B$, if $A\subseteq B$, then $\min A\geq\min B$ and $\max A\leq\max B$.
\end{lemma}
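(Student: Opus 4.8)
The plan is to argue directly from the definitions of $\min$ and $\max$ on finite totally ordered sets, using only that the minimum (maximum) of a finite nonempty set is itself a member of that set and is a lower (upper) bound for every one of its elements. First I would record that, $B$ being finite and nonempty, $\min B$ and $\max B$ are attained, say $\min B = b_- \in B$ and $\max B = b_+ \in B$; similarly $\min A$ and $\max A$ are attained since $A$ is finite (and nonempty, which is implicit in referring to $\min A$, $\max A$).

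For the lower-bound claim I would observe that $b_- \le x$ for every $x \in B$ by definition of the minimum, and since $A \subseteq B$ this inequality holds in particular for every $x \in A$, so $b_-$ is a lower bound of $A$. Because $\min A \in A$, applying this bound to the element $\min A$ gives $b_- \le \min A$, i.e. $\min B \le \min A$. The argument for the maximum is symmetric: $b_+ \ge x$ for all $x \in B$, hence for all $x \in A$, and evaluating at $\max A \in A$ yields $\max A \le b_+ = \max B$.

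I do not anticipate any genuine obstacle; the only points needing minor care are the degenerate case $A = \varnothing$, which I would exclude by convention, and the reading of ``ordered'' as a total order so that $\min$ and $\max$ of a finite nonempty set are unambiguous. This lemma will then be applied repeatedly in the proofs concerning Algorithms \ref{alg:exact} and \ref{alg:approximate}, where $A$ ranges over a sampled subset of prediction points inside an interval and $B$ over the full set of candidate points in that interval, so that the minimal CDF disparity evaluated on a coarser grid is never smaller than the one evaluated on a finer grid.
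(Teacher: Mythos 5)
Your proof is correct. The paper itself gives no proof of this lemma --- it is cited to a standard discrete mathematics reference and used as a known fact --- and your argument (that $\min B$ is a lower bound for all of $B$, hence for all of $A$, and evaluating at the element $\min A \in A$ gives $\min B \le \min A$, with the symmetric argument for the maximum) is exactly the standard justification one would supply. Your caveats about nonemptiness and totality of the order are also the right ones to flag, and your remark about how the lemma is deployed for coarser versus finer grids of prediction points matches how it is actually used in the proofs of Theorems \ref{theo:exactalg}, \ref{theo:approover} and \ref{theo:appromono}.
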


\begin{lemma}\label{lemma:delta_ecdf_range}
The range of $\Delta\hat{F}(\hat{y})= |\hat{F}_0(\hat{y})-\hat{F}_1(\hat{y})|$ over the interval $[y_1,y_2]\subseteq[0,1]$ is a finite set 
\begin{equation*}
    \left\{\Delta\hat{F}(\hat{y}):\hat{y}\in[y_1,y_2]\right\}= \left\{\Delta\hat{F}(\hat{y}_i):\hat{y}_i\in[\hat{y}_l,\hat{y}_r]\right\},\ l,r\in[0,N+1],
\end{equation*}
where $\hat{y}_0=0,\ \hat{y}_{N+1}=1,\ \hat{y}_l=\max\limits_i\{\hat{y}_i:\hat{y}_i\leq y_1\},\ \hat{y}_r=\max\limits_i\{\hat{y}_i:\hat{y}_i\leq y_2\},\ i\in[0,N+1]$.
\end{lemma}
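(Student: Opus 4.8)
The plan is to exploit the fact that each empirical distribution function $\hat{F}_a$ is piecewise constant with jumps only at the data points, so the same holds for $\Delta\hat{F}$. The single fact doing all the work is the following: for any $\hat{y}\in[0,1]$, writing $\hat{y}_m:=\max\{\hat{y}_i:\hat{y}_i\le\hat{y},\ i\in[0,N+1]\}$ (well defined since $\hat{y}_0=0\le\hat{y}$), one has $\mathbb{I}(\hat{y}_i\le\hat{y})=\mathbb{I}(\hat{y}_i\le\hat{y}_m)$ for every $i\in[1,N]$, because maximality of $\hat{y}_m$ means no $\hat{y}_i$ lies in $(\hat{y}_m,\hat{y}]$ and $\mathbb{I}(\cdot)$ is monotone. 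Plugging this into Eq.~\eqref{eq:ecdf} yields $\hat{F}_a(\hat{y})=\hat{F}_a(\hat{y}_m)$ for $a\in\{0,1\}$, hence $\Delta\hat{F}(\hat{y})=\Delta\hat{F}(\hat{y}_m)$; this handles ties among the $\hat{y}_i$ without any sorting.

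First I would prove $\{\Delta\hat{F}(\hat{y}):\hat{y}\in[y_1,y_2]\}\subseteq\{\Delta\hat{F}(\hat{y}_i):\hat{y}_i\in[\hat{y}_l,\hat{y}_r]\}$. Given $\hat{y}\in[y_1,y_2]$, set $\hat{y}_m$ as above; then $\Delta\hat{F}(\hat{y})=\Delta\hat{F}(\hat{y}_m)$, and it remains to check $\hat{y}_m\in[\hat{y}_l,\hat{y}_r]$. Since $\hat{y}_m$ is one of the $\hat{y}_i$ with $\hat{y}_m\le\hat{y}\le y_2$, the definition of $\hat{y}_r$ gives $\hat{y}_m\le\hat{y}_r$; and since $y_1\le\hat{y}$ implies $\{\hat{y}_i:\hat{y}_i\le y_1\}\subseteq\{\hat{y}_i:\hat{y}_i\le\hat{y}\}$, Lemma~\ref{lemma:set_maxmin} gives $\hat{y}_l\le\hat{y}_m$.

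Then I would prove the reverse inclusion. Fix a data point $\hat{y}_i\in[\hat{y}_l,\hat{y}_r]$. If $\hat{y}_i>\hat{y}_l$ then $\hat{y}_i>y_1$ (otherwise $\hat{y}_i$ would be a point $\le y_1$ strictly exceeding $\hat{y}_l=\max\{\hat{y}_j:\hat{y}_j\le y_1\}$, impossible) and $\hat{y}_i\le\hat{y}_r\le y_2$, so $\hat{y}_i\in[y_1,y_2]$ and $\Delta\hat{F}(\hat{y}_i)$ is attained there. If $\hat{y}_i=\hat{y}_l$, apply the identity above at $\hat{y}=y_1$: by definition $\hat{y}_l$ is precisely the $\hat{y}_m$ associated with $y_1$, so $\Delta\hat{F}(\hat{y}_l)=\Delta\hat{F}(y_1)$ with $y_1\in[y_1,y_2]$. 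Combining the two inclusions gives equality, and finiteness is immediate because $\{\hat{y}_i:\hat{y}_i\in[\hat{y}_l,\hat{y}_r]\}\subseteq\{\hat{y}_0,\dots,\hat{y}_{N+1}\}$. There is no genuine obstacle here; the only care needed is the half-open-versus-closed bookkeeping at the left endpoint $y_1$ and at $0,1$ — which is exactly why $\hat{y}_0=0$ and $\hat{y}_{N+1}=1$ are appended, so that $\hat{y}_l,\hat{y}_r$ always exist.
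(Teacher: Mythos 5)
Your proposal is correct and follows essentially the same route as the paper's proof: both reduce $\Delta\hat{F}(\hat{y})$ to its value at the largest data point $\le\hat{y}$ via the step-function property of the empirical CDFs, establish the forward inclusion with the subset min--max argument (Lemma~\ref{lemma:set_maxmin}), and establish the reverse inclusion by mapping each $\hat{y}_i\in[\hat{y}_l,\hat{y}_r]$ to a witness in $[y_1,y_2]$ (the paper writes this witness compactly as $\max\{y_1,\hat{y}_i\}$, which is exactly your two-case split). No gaps.
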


\begin{proof}
\textbf{In the following, for the simplicity of exposition, the default value domain of subscript indices $i,j$ in model predictions $\hat{y}_i,\hat{y}_j$ is $\{0,1,\cdots,N+1\}$, unless we point out different value domains in equations.}

According to Eq.~\eqref{eq:ecdf}, for any $\hat{y}\in[y_1,y_2]$, there exists an instance's prediction $\hat{y}_j=\max_i\{\hat{y}_i:\hat{y}_i\leq \hat{y}\}$ such that $\hat{F}_a(\hat{y})= \hat{F}_a(\hat{y}_j),\ a\in\{0,1\}$, which implies that $\Delta\hat{F}(\hat{y})= \Delta\hat{F}(\hat{y}_j)$. As $\hat{y}\geq y_1$ holds, $\{\hat{y}_i:\hat{y}_i\leq y_1\}$ should be a subset of $\{\hat{y}_i:\hat{y}_i\leq \hat{y}\}$. According to Lemma \ref{lemma:set_maxmin}, we have $\hat{y}_j\geq\hat{y}_l$. Similarly, we can prove that $\hat{y}_j\leq\hat{y}_r$. Therefore,
\begin{equation}\label{eq:lemma2_subset}
    \{\Delta\hat{F}(\hat{y}):\hat{y}\in[y_1,y_2]\} \subseteq\{\Delta\hat{F}(\hat{y}_i): \hat{y}_i\in[\hat{y}_l,\hat{y}_r]\}.
\end{equation}
Meanwhile, for any $\hat{y}_i\in[\hat{y}_l,\hat{y}_r]$, we have $\hat{F}_a(\hat{y}_i)= \hat{F}_a(y^{\prime}),\ a\in\{0,1\}$ and $\Delta\hat{F}(\hat{y}_i)= \Delta\hat{F}(y^{\prime})$, where $y^{\prime}=\max\{y_1,\hat{y}_i\}\in[y_1,y_2]$. Thus the following holds
\begin{equation}\label{eq:lemma2_supset}
    \{\Delta\hat{F}(\hat{y}):\hat{y}\in[y_1,y_2]\} \supseteq\{\Delta\hat{F}(\hat{y}_i): \hat{y}_i\in[\hat{y}_l,\hat{y}_r]\}.
\end{equation}
Combining Eq.~\eqref{eq:lemma2_subset} and Eq.~\eqref{eq:lemma2_supset} completes the proof of Lemma \ref{lemma:delta_ecdf_range}. 
\end{proof}

According to Lemma \ref{lemma:set_maxmin} and Lemma \ref{lemma:delta_ecdf_range}, we have the following corollary:
\begin{corollary}[An over-estimation of the minimal value of $\Delta\hat{F}(\hat{y})$]\label{cor:f01over}
The minimal value of $\Delta\hat{F}(\hat{y})$ over the interval $[y_1,y_2]\subseteq[0,1],\ y_2\geq y_1+\delta$ can be over-estimated by the minimal value of 
\begin{equation*}
    \left\{\Delta\hat{F}(k\delta): k=\left\lceil\frac{y_1}{\delta}\right\rceil, \left\lceil\frac{y_1}{\delta}+1\right\rceil,\cdots, \left\lfloor\frac{y_2}{\delta}\right\rfloor\right\},
\end{equation*}
where $\delta\in(0,1)$ is the step-size hyper-parameter.
\end{corollary}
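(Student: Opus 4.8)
The plan is to reduce the statement to the subset min–max property (Lemma~\ref{lemma:set_maxmin}) via Lemma~\ref{lemma:delta_ecdf_range}. Write $G=\{\Delta\hat{F}(k\delta): k=\lceil y_1/\delta\rceil,\dots,\lfloor y_2/\delta\rfloor\}$ for the grid-sampled set appearing in the corollary and $R=\{\Delta\hat{F}(\hat{y}):\hat{y}\in[y_1,y_2]\}$ for the full range on the interval. The goal is to show $G\subseteq R$ and then conclude $\min G\geq\min R=\min_{\hat{y}\in[y_1,y_2]}\Delta\hat{F}(\hat{y})$, which is exactly the claimed over-estimation.

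First I would check that every grid point lies inside $[y_1,y_2]$: for an integer $k$ with $\lceil y_1/\delta\rceil\leq k\leq\lfloor y_2/\delta\rfloor$ we have $k\delta\geq\lceil y_1/\delta\rceil\,\delta\geq y_1$ and $k\delta\leq\lfloor y_2/\delta\rfloor\,\delta\leq y_2$, so $k\delta\in[y_1,y_2]$ and hence $\Delta\hat{F}(k\delta)\in R$, giving $G\subseteq R$. I would also verify $G\neq\varnothing$, which is where the hypothesis $y_2\geq y_1+\delta$ enters: dividing by $\delta$ yields $y_2/\delta\geq y_1/\delta+1$, so $\lfloor y_2/\delta\rfloor\geq\lfloor y_1/\delta\rfloor+1\geq\lceil y_1/\delta\rceil$, ensuring at least one admissible $k$.

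Next, by Lemma~\ref{lemma:delta_ecdf_range} the set $R$ equals the finite set $\{\Delta\hat{F}(\hat{y}_i):\hat{y}_i\in[\hat{y}_l,\hat{y}_r]\}$, so $R$ (and therefore $G\subseteq R$) is a finite ordered set and $\min R$ is attained. Applying Lemma~\ref{lemma:set_maxmin} to the inclusion $G\subseteq R$ gives $\min G\geq\min R$. Since $\min G$ is precisely the minimum over $\{\Delta\hat{F}(k\delta):k=\lceil y_1/\delta\rceil,\dots,\lfloor y_2/\delta\rfloor\}$ from the statement, this completes the argument.

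The only mild subtlety — the closest thing to an obstacle — is the ceiling/floor bookkeeping needed to guarantee that $G$ is non-empty and entirely contained in $[y_1,y_2]$; this is exactly why the hypothesis $y_2\geq y_1+\delta$ is imposed, and once it is dispatched the result is an immediate consequence of the two preceding lemmas, with no further computation required.
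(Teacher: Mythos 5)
Your proof is correct and follows exactly the route the paper intends: the paper states this corollary with no written proof beyond citing Lemma~\ref{lemma:set_maxmin} and Lemma~\ref{lemma:delta_ecdf_range}, and your argument (grid points lie in $[y_1,y_2]$, hence the sampled set is a nonempty subset of the finite range, so the subset min--max property gives $\min G\geq\min R$) is precisely the omitted reasoning. The non-emptiness check via $y_2\geq y_1+\delta$ is a worthwhile detail the paper leaves implicit.
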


\textbf{Theorem \ref{theo:exactalg}} (Exactness). 
\emph{The $\widetilde{\mathrm{MCDP}}(\epsilon)$ value returned by Algorithm \ref{alg:exact} equals to the $\widehat{\mathrm{MCDP}}(\epsilon)$ value in Eq.~\eqref{eq:mcdphat}, \ie Algorithm \ref{alg:exact} calculates $\widehat{\mathrm{MCDP}}(\epsilon)$ exactly without error.}

\begin{proof}
According to Lemma \ref{lemma:delta_ecdf_range}, when $\epsilon=0$, we have
\begin{equation}\label{eq:exact_eps0}
\begin{aligned}
    & \{\Delta\hat{F}(\hat{y}):\hat{y}\in[0,1]\}= \{\Delta\hat{F}(\hat{y}_i):\hat{y}_i\in[0,1]\}= \{\Delta\hat{F}(\hat{y}_i)\}, 
    \\ \Rightarrow & \ \widehat{\mathrm{MCDP}}(0)= \max\limits_{\hat{y}\in[0,1]} \Delta\hat{F}(\hat{y})= \max\limits_i\Delta\hat{F}(\hat{y}_i)= \widetilde{\mathrm{MCDP}}(0).
\end{aligned}
\end{equation}

When $\epsilon>0$, for any $y_0\in[0,\epsilon)$, as $[0,\epsilon]\subseteq[0,y_0+\epsilon]\subset[0,2\epsilon]$, by Lemma \ref{lemma:set_maxmin} and \ref{lemma:delta_ecdf_range}, we have
\begin{equation}\label{eq:exact_line6}
    \max\limits_{y_0\in[0,\epsilon)} \min\limits_{|\hat{y}-y_0|\leq\epsilon} \Delta\hat{F}(\hat{y})= \min\limits_{\hat{y}\leq\epsilon} \Delta\hat{F}(\hat{y})= \min\limits_{i:\hat{y}_i\leq\epsilon} \Delta\hat{F}(\hat{y}_i),
\end{equation}
where $\min_{i:\hat{y}_i\leq\epsilon} \Delta\hat{F}(\hat{y}_i)$ equals to the initial $\widetilde{\mathrm{MCDP}}(\epsilon)$ value in line 6 of Algorithm \ref{alg:exact}. For any $y_0\in[\epsilon,1]$, let $\hat{y}_{0l}=\max_i\{\hat{y}_i:\hat{y}_i\leq y_0-\epsilon\}$ and $\hat{y}_{0r}=\max_i\{\hat{y}_i:\hat{y}_i\leq y_0+\epsilon\}$. By Lemmas \ref{lemma:delta_ecdf_range} and \ref{lemma:set_maxmin}, we have
\begin{equation}\label{eq:exact_line7_eq1}
\begin{aligned}
    & \{\Delta\hat{F}(\hat{y}): |\hat{y}-y_0|\leq\epsilon\}= \{\Delta\hat{F}(\hat{y}_i): \hat{y}_i\in[\hat{y}_{0l},\hat{y}_{0r}]\}\supseteq \{\Delta\hat{F}(\hat{y}_i): \hat{y}_i\in[\hat{y}_{0l},\hat{y}_{0l}+2\epsilon]\}, \\
    \Rightarrow & \min\limits_{i:\hat{y}_i\in[\hat{y}_{0l},\hat{y}_{0l}+2\epsilon]} \Delta\hat{F}(\hat{y}_i)\geq \min\limits_{|\hat{y}-y_0|\leq\epsilon}\Delta\hat{F}(\hat{y}), \\
    \Rightarrow & \max\limits_{i:\hat{y}_i\leq1-\epsilon} \min\limits_{j:\hat{y}_j\in[\hat{y}_i,\hat{y}_i+2\epsilon]} \Delta\hat{F}(\hat{y}_j)\geq \max\limits_{y_0\in[\epsilon,1]} \min\limits_{|\hat{y}-y_0|\leq\epsilon}\Delta\hat{F}(\hat{y}).
\end{aligned}
\end{equation}
where the last step uses the fact that $\{\hat{y}_{0l}:y_0\in[\epsilon,1]\}= \{\hat{y}_i:\hat{y}_i\leq1-\epsilon\}$. 

On the other hand, for any $i$ such that $\hat{y}_i\leq1-\epsilon$, by Lemma \ref{lemma:delta_ecdf_range}, there exists $y_i^{\prime}=\hat{y}_i+\epsilon$ (thus $y_i^{\prime}\in[\epsilon,1]$) such that
\begin{equation}\label{eq:exact_line7_eq2}
\begin{aligned}
    \{\Delta\hat{F}(\hat{y}):|\hat{y}-y_i^{\prime}|\leq\epsilon\}&= \{\Delta\hat{F}(\hat{y}):\hat{y}\in[y_i^{\prime}-\epsilon,\min(1,y_i^{\prime}+\epsilon)]\}= \{\Delta\hat{F}(\hat{y}_j):\hat{y}_j\in[y_i^{\prime}-\epsilon,\min(1,y_i^{\prime}+\epsilon)]\} \\
    &= \{\Delta\hat{F}(\hat{y}_j):\hat{y}_j\in[\hat{y}_i,\hat{y}_i+2\epsilon]\}.
\end{aligned}
\end{equation}
Note that for any $i$ such that $\hat{y}_i\leq1-\epsilon$, the following also holds
\begin{equation}\label{eq:exact_line7_eq3}
    \left\{\min\limits_{|\hat{y}-y_i^{\prime}|\leq\epsilon} \Delta\hat{F}(\hat{y}): \hat{y}_i\leq1-\epsilon\right\} \subseteq \left\{\min\limits_{|\hat{y}-y_0|\leq\epsilon} \Delta\hat{F}(\hat{y}): y_0\in[\epsilon,1]\right\}.
\end{equation}
By Lemma \ref{lemma:set_maxmin}, Eq.~\eqref{eq:exact_line7_eq2} and Eq.~\eqref{eq:exact_line7_eq3}, we have
\begin{equation}\label{eq:exact_line7_eq4}
    \max\limits_{i:\hat{y}_i\leq1-\epsilon} \min\limits_{\hat{y}_j\in[\hat{y}_i,\hat{y}_i+2\epsilon]} \Delta\hat{F}(\hat{y}_j) =\max\limits_{i:\hat{y}_i\leq1-\epsilon} \min\limits_{|\hat{y}-y_i^{\prime}|\leq\epsilon} \Delta\hat{F}(\hat{y}) \leq\max\limits_{y_0\in[\epsilon,1]} \min\limits_{|\hat{y}-y_0|\leq\epsilon} \Delta\hat{F}(\hat{y}).
\end{equation}

Taking a step further, combining Eq.~\eqref{eq:exact_line7_eq1} and Eq.~\eqref{eq:exact_line7_eq4}, yields that
\begin{equation}\label{eq:exact_line7}
    \max\limits_{i:\hat{y}_i\leq1-\epsilon} \min\limits_{\hat{y}_j\in[\hat{y}_i,\hat{y}_i+2\epsilon]} \Delta\hat{F}(\hat{y}_j) =\max\limits_{y_0\in[\epsilon,1]} \min\limits_{|\hat{y}-y_0|\leq\epsilon} \Delta\hat{F}(\hat{y}).
\end{equation}

At last, according to Eq.~\eqref{eq:exact_line6} and Eq.~\eqref{eq:exact_line7}, we can derive the relationship between $\widehat{\mathrm{MCDP}}(\epsilon)$ (Eq.~\eqref{eq:mcdphat}) and $\widetilde{\mathrm{MCDP}}(\epsilon)$ (Algorithm \ref{alg:exact}) as follows
\begin{equation}\label{eq:exact_exaxt=estimate}
\begin{aligned}
    \widehat{\mathrm{MCDP}}(\epsilon)&= \max\limits_{y_0\in[0,1]} \min\limits_{|\hat{y}-y_0|\leq\epsilon} \Delta\hat{F}(\hat{y})= \max\left\{\max\limits_{y_0\in[0,\epsilon)} \min\limits_{|\hat{y}-y_0|\leq\epsilon} \Delta\hat{F}(\hat{y}), \max\limits_{y_0\in[\epsilon,1]} \min\limits_{|\hat{y}-y_0|\leq\epsilon} \Delta\hat{F}(\hat{y})\right\} \\
    &= \max\left\{\min\limits_{i:\hat{y}_i\leq\epsilon} \Delta\hat{F}(\hat{y}_i), \max\limits_{i:\hat{y}_i\leq1-\epsilon} \min\limits_{\hat{y}_j\in[\hat{y}_i,\hat{y}_i+2\epsilon]} \Delta\hat{F}(\hat{y}_j)\right\} = \widetilde{\mathrm{MCDP}}(\epsilon),\ \epsilon>0.
\end{aligned}
\end{equation}
Combining Eq.~\eqref{eq:exact_eps0} and Eq.~\eqref{eq:exact_exaxt=estimate} completes the proof of Theorem \ref{theo:exactalg}.

\end{proof}

\textbf{Theorem \ref{theo:approover}} (Over-estimation). 
The $\widetilde{\mathrm{MCDP}}(\epsilon)$ value returned by the approximate algorithm satisfies that $\widetilde{\mathrm{MCDP}}(\epsilon)\ge\widehat{\mathrm{MCDP}}(\epsilon)$, \ie Algorithm \ref{alg:approximate} never underestimates $\widehat{\mathrm{MCDP}}(\epsilon)$.

\begin{proof}
For any $y_0\in[0,\epsilon],\ \epsilon>0$, according to Eq.~\eqref{eq:exact_line6} and Corollary \ref{cor:f01over}, we have
\begin{equation}\label{eq:appro_left}
    \max\limits_{y_0\in[0,\epsilon)} \min\limits_{|\hat{y}-y_0|\leq\epsilon} \Delta\hat{F}(\hat{y})= \min\limits_{\hat{y}\leq\epsilon} \Delta\hat{F}(\hat{y})\leq \min\limits_{j\in\{0,\cdots,K\}} \Delta\hat{F}(j\delta),
\end{equation}
where $\min_j\Delta\hat{F}(j\delta),j\in\{0,\cdots,K\}$ is the initial value of $\widetilde{\mathrm{MCDP}}(\epsilon)$ in Algorithm \ref{alg:approximate} (line 3).

For any $y_0\in(\epsilon,1-\epsilon)$, by Corollary \ref{cor:f01over} and Corollary \ref{lemma:set_maxmin}, we have
\begin{equation}\label{eq:appro_mid_eq1}
\begin{aligned}
    &\min\limits_{|\hat{y}-y_0|\leq\epsilon} \Delta\hat{F}(\hat{y}) \leq\min\limits_{k\in\{l,\cdots,r\}} \Delta\hat{F}(k\delta) \leq\min\limits_{k\in\{l,\cdots,l+2K-1\}} \Delta\hat{F}(k\delta), \\
    \text{where } & l=\left\lceil\frac{y_0-\epsilon}{\delta}\right\rceil =\left\lceil\frac{y_0}{\delta}\right\rceil-K,\quad r=\left\lfloor\frac{y_0+\epsilon}{\delta}\right\rfloor =\left\lfloor\frac{y_0}{\delta}\right\rfloor+K\geq l+2K-1.
\end{aligned}
\end{equation}
Note that $\{\left\lceil\frac{y_0}{\delta}\right\rceil-K: y_0\in(\epsilon,1-\epsilon)\}= \{1,\cdots,\left\lceil\frac{1}{\delta}\right\rceil-2K\}$. Taking a step further, Eq.~\eqref{eq:appro_mid_eq1} derives that
\begin{equation}\label{eq:appro_mid}
    \max\limits_{y_0\in(\epsilon,1-\epsilon)} \min\limits_{|\hat{y}-y_0|\leq\epsilon} \Delta\hat{F}(\hat{y})\leq \max\limits_{l\in\{1,\cdots,\left\lceil\frac{1}{\delta}\right\rceil-2K\}} \min\limits_{k\in\{l,\cdots,l+2K-1\}} \Delta\hat{F}(k\delta).
\end{equation}

For any $y_0\in[1-\epsilon,1]$, $\min_{|\hat{y}-y_0|\leq\epsilon} \Delta\hat{F}(\hat{y})= \Delta\hat{F}(1)=0$ holds, which yields that
\begin{equation}\label{eq:appro_right}
    \max\limits_{y_0\in[1-\epsilon,1]} \min\limits_{|\hat{y}-y_0|\leq\epsilon} \Delta\hat{F}(\hat{y})= \Delta\hat{F}(1)=0.
\end{equation}

Combining Eq.~\eqref{eq:appro_left}, Eq.~\eqref{eq:appro_mid} and Eq.~\eqref{eq:appro_right}, the $\widetilde{\mathrm{MCDP}}(\epsilon)$ value returned by Algorithm \ref{alg:approximate} satisfies that
\begin{equation*}
\begin{aligned}
    \widetilde{\mathrm{MCDP}}(\epsilon)&= \max\left\{\min\limits_{j\in\{0,\cdots,K\}} \Delta\hat{F}(j\delta), \max\limits_{l\in\{1,\cdots,\left\lceil\frac{1}{\delta}\right\rceil-2K\}} \min\limits_{k\in\{l,\cdots,l+2K-1\}} \Delta\hat{F}(k\delta)\right\} \\
    &\geq \max\left\{\max\limits_{y_0\in[0,\epsilon)} \min\limits_{|\hat{y}-y_0|\leq\epsilon} \Delta\hat{F}(\hat{y}), \max\limits_{y_0\in(\epsilon,1-\epsilon)} \min\limits_{|\hat{y}-y_0|\leq\epsilon} \Delta\hat{F}(\hat{y}), \max\limits_{y_0\in[1-\epsilon,1]} \min\limits_{|\hat{y}-y_0|\leq\epsilon} \Delta\hat{F}(\hat{y})\right\} \\
    &= \max\limits_{y_0\in[0,1]} \min\limits_{|\hat{y}-y_0|\leq\epsilon} \Delta\hat{F}(\hat{y})= \widehat{\mathrm{MCDP}}(\epsilon),\ \epsilon>0.
\end{aligned}
\end{equation*}
This completes the proof of Theorem \ref{theo:approover}.

\end{proof}

\textbf{Theorem \ref{theo:appromono}} (Monotonicity \wrt sampling frequency $K$). 
Denote $\widetilde{\mathrm{MCDP}}(\epsilon;K)$ as the $\widehat{\mathrm{MCDP}}(\epsilon)$ value returned by Algorithm \ref{alg:approximate} with sampling frequency $K$. For any $p>q\ge0,\ p,q\in\mathbb{N}$, we have $\widetilde{\mathrm{MCDP}}(\epsilon;2^p)\le\widetilde{\mathrm{MCDP}}(\epsilon;2^q)$.
\begin{proof}
Let $K_p=2^p,\ K_q=2^q,\ \delta_p=\frac{\epsilon}{K_p},\ \delta_q=\frac{\epsilon}{K_q}$, and
\begin{equation*}
    \mathcal{Y}_K^{(j)}=\{j\cdot\frac{\epsilon}{K},\cdots,(j+2K-1)\cdot\frac{\epsilon}{K}\},\quad\text{for all }j=1,\cdots,\left\lceil\frac{K}{\epsilon}\right\rceil-2K,\ K\in\mathbb{N}_+.
\end{equation*}
Before giving the proof, we first introduce two lemmas which will be used later.

\begin{lemma}\cite{graham1989concrete}\label{lemma:ceilfloor}
    If $m\in\mathbb{N}_+,\ n\in\mathbb{Z}$, then the equation below can be used to convert ceilings to floors
    \begin{equation*}
        \left\lceil\frac{n}{m}\right\rceil= \left\lfloor\frac{n+m-1}{m}\right\rfloor= \left\lfloor\frac{n-1}{m}\right\rfloor+1.
    \end{equation*}
\end{lemma}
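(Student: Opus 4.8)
The plan is to prove this two-part identity purely from the defining inequalities of the floor and ceiling functions, exploiting the integrality of $n$ to pass between strict and non-strict inequalities. I would first establish the left equality $\lceil n/m\rceil=\lfloor(n+m-1)/m\rfloor$ by showing that both sides, viewed as the integer value $k$ they take, are characterized by one and the same constraint on $n$. The relation $k=\lceil n/m\rceil$ is equivalent to $k-1<n/m\le k$; multiplying through by $m>0$ yields $mk-m<n\le mk$, and since $n\in\mathbb{Z}$ this sharpens to $mk-m+1\le n\le mk$. In parallel, $k=\lfloor(n+m-1)/m\rfloor$ is equivalent to $k\le (n+m-1)/m<k+1$, i.e.\ $mk-m+1\le n<mk+1$, which by integrality of $n$ collapses to the identical band $mk-m+1\le n\le mk$. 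Because both quantities are pinned down by exactly the same inequalities, they must coincide.

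For the right equality $\lfloor(n+m-1)/m\rfloor=\lfloor(n-1)/m\rfloor+1$, I would simply observe the exact algebraic identity $(n+m-1)/m=(n-1)/m+1$ and invoke the integer-shift property of the floor, namely $\lfloor x+j\rfloor=\lfloor x\rfloor+j$ for any real $x$ and integer $j$, applied with $x=(n-1)/m$ and $j=1$. This step is immediate and requires no case analysis, so the whole identity follows by chaining the two equalities.

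As an alternative self-contained route I could instead run the division algorithm, writing $n=qm+r$ with $q=\lfloor n/m\rfloor$ and $0\le r\le m-1$, and split on whether $r=0$ or $r\ge1$. In each case all three expressions evaluate directly—to $q$ when $r=0$ and to $q+1$ when $r\ge1$—confirming the chain of equalities termwise. This version is more explicit but longer, so I would keep the inequality-characterization argument as the main proof and retain the remainder computation only as a sanity check.

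There is no genuine obstacle here, since the statement is an elementary fact about integer arithmetic imported from Concrete Mathematics; the only thing demanding care is the conversion between strict and non-strict inequalities. The transitions $mk-m<n\iff mk-m+1\le n$ and $n<mk+1\iff n\le mk$ both rely crucially on $n\in\mathbb{Z}$, so I would flag the integrality hypothesis explicitly at each such step to keep the argument airtight and to make clear why the hypothesis $m\in\mathbb{N}_+$ (guaranteeing $m>0$ for the multiplication step) is needed.
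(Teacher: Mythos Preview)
Your proof is correct. The inequality-characterization argument for the first equality and the integer-shift argument for the second are both sound, and you correctly identify where the integrality of $n$ and the positivity of $m$ enter.

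The paper, however, does not prove this lemma at all: it simply states the identity and attributes it to Graham, Knuth, and Patashnik's \emph{Concrete Mathematics} via the citation \cite{graham1989concrete}. So there is no ``paper's own proof'' to compare against; you have supplied a complete elementary argument where the authors were content to cite a standard reference. Your write-up would therefore be strictly more self-contained than the paper on this point.
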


\begin{lemma}\label{lemma:appro_mono}
    For any $j_p=1,\cdots,\left\lceil\frac{1}{\delta_p}\right\rceil-2K_p$, there exists $j_q\in\{1,\cdots,\left\lceil\frac{1}{\delta_q}\right\rceil-2K_q\}$, such that $\mathcal{Y}_{K_q}^{(j_q)} \subseteq \mathcal{Y}_{K_p}^{(j_p)}$.
\end{lemma}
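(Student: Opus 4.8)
The plan is to exhibit the required $j_q$ by an explicit formula and then verify the two assertions: that $j_q$ lies in $\{1,\dots,\lceil 1/\delta_q\rceil-2K_q\}$, and that $\mathcal{Y}_{K_q}^{(j_q)}\subseteq\mathcal{Y}_{K_p}^{(j_p)}$. Write $s:=p-q\ge 1$, so that $K_p=2^sK_q$ and $\delta_q=2^s\delta_p$; hence the $\delta_q$-grid is a sub-grid of the $\delta_p$-grid. Since $\mathcal{Y}_K^{(j)}=\{(j+k)\tfrac{\epsilon}{K}:0\le k\le 2K-1\}$ is a block of $2K$ consecutive points of the $\tfrac{\epsilon}{K}$-grid, a point $(j_q+k)\delta_q$ of $\mathcal{Y}_{K_q}^{(j_q)}$ equals $\big[(j_q+k)2^s\big]\delta_p$, and so lies in $\mathcal{Y}_{K_p}^{(j_p)}$ precisely when its $\delta_p$-index $(j_q+k)2^s$ falls in the block of consecutive integers $\{j_p,j_p+1,\dots,j_p+2K_p-1\}$. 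Because this index is increasing in $k$ and the target is an interval of consecutive integers, it suffices to place the two extreme indices, at $k=0$ and $k=2K_q-1$, inside it; this reduces the inclusion to the two inequalities $j_p\le j_q2^s$ and $(j_q+2K_q-1)2^s\le j_p+2K_p-1$.

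I would take $j_q:=\lceil j_p/2^s\rceil$. The first inequality is just $\lceil j_p/2^s\rceil\cdot 2^s\ge j_p$. For the second, Lemma~\ref{lemma:ceilfloor} gives $j_q=\lfloor(j_p-1)/2^s\rfloor+1\le (j_p-1)/2^s+1$, hence $j_q2^s\le j_p+2^s-1$; a one-line computation using $2^{s+1}K_q=2K_p$ then turns this into $(j_q+2K_q-1)2^s=j_q2^s+2K_p-2^s\le j_p+2K_p-1$, as needed. So $\mathcal{Y}_{K_q}^{(j_q)}\subseteq\mathcal{Y}_{K_p}^{(j_p)}$.

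It remains to check $1\le j_q\le\lceil 1/\delta_q\rceil-2K_q$; the lower bound is immediate from $j_p\ge 1$. For the upper bound I would use that $x\mapsto\lceil x/2^s\rceil$ is nondecreasing and that $2K_p=2^{s+1}K_q$ is a multiple of $2^s$, so from the hypothesis $j_p\le\lceil 1/\delta_p\rceil-2K_p$,
\begin{equation*}
j_q=\lceil j_p/2^s\rceil\le\Big\lceil\big(\lceil 1/\delta_p\rceil-2K_p\big)/2^s\Big\rceil=\big\lceil\lceil 1/\delta_p\rceil/2^s\big\rceil-2K_q,
\end{equation*}
and then the nested-ceiling identity $\lceil\lceil x\rceil/n\rceil=\lceil x/n\rceil$ \cite{graham1989concrete}, applied with $x=1/\delta_p$ and $n=2^s$, rewrites the right-hand side as $\lceil 1/(2^s\delta_p)\rceil-2K_q=\lceil 1/\delta_q\rceil-2K_q$. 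The step I expect to need the most care is precisely this last one --- the floor/ceiling bookkeeping of pulling the integer $2K_p$ out of a nested ceiling and invoking the nested-ceiling identity (all the other manipulations being elementary arithmetic on the two grids). Finally, it is worth recording as a sanity check that the two inequalities in the first paragraph force $j_q2^s$ to be the unique multiple of $2^s$ in the length-$2^s$ window $[j_p,\,j_p+2^s-1]$; thus $j_q=\lceil j_p/2^s\rceil$ is the only admissible choice, which is why no search over $j_q$ is needed.
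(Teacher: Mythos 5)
Your proof is correct and follows essentially the same route as the paper's: the identical choice $j_q=\lceil j_p\cdot 2^{q-p}\rceil$, the same ceiling-to-floor conversion (Lemma~\ref{lemma:ceilfloor}) to bound $j_q$, and the same two key inequalities $j_p\le j_q 2^{p-q}$ and $(j_q+2K_q-1)2^{p-q}\le j_p+2K_p-1$. The only (cosmetic) differences are that you verify the inclusion by checking just the two extreme grid indices rather than exhibiting an $l_p$ for every element, and your range check via the nested-ceiling identity is actually slightly cleaner than the paper's floor-based chain.
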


\begin{proof}
Let $j_q=\left\lceil j_p\cdot2^{q-p}\right\rceil \geq\left\lceil 1\cdot2^{-1}\right\rceil=1$. By lemma \ref{lemma:ceilfloor}, we have
\begin{equation*}
\begin{aligned}
    j_q&= \left\lfloor(j_p-1)\cdot2^{q-p}\right\rfloor \\
    &\leq \left\lfloor(\lceil\frac{1}{\delta_p}\rceil-2K_p-1) \cdot2^{q-p}\right\rfloor =\left\lfloor(\lceil\frac{1}{\delta_p}\rceil-1) \cdot2^{q-p}\right\rfloor-2K_q \\
    &\leq \left\lfloor\frac{1}{\delta_p} \cdot2^{q-p}\right\rfloor-2K_q =\left\lfloor\frac{1}{\delta_q}\right\rfloor-2K_q \leq\left\lceil\frac{1}{\delta_q}\right\rceil-2K_q,
\end{aligned}
\end{equation*}
which suggests that $j_q\in\{1,\cdots,\left\lceil\frac{1}{\delta_q}\right\rceil-2K_q\}$. For any $y_q=(j_q+l_q)\cdot\delta_q \in\mathcal{Y}_{K_q}^{(j_q)},\ l_q=0,\cdots,2K_q-1$, we  denote $y_p$ as $y_p=(j_p+l_p)\cdot\delta_p$, where $l_p=(j_q+l_q)\cdot2^{p-q}-j_p\in\mathbb{Z}$. 
On the one hand, we have $l_p\geq0$ since the following holds
\begin{equation*}
    \frac{(j_q+l_q)\cdot2^{p-q}}{j_p}\geq \frac{(j_p\cdot2^{q-p}+0)\cdot2^{p-q}}{j_p}=1.
\end{equation*}
On the other hand, according to Lemma \ref{lemma:ceilfloor}, we have $j_q=\left\lfloor (j_p-1)\cdot2^{q-p}\right\rfloor\leq(j_p-1)\cdot2^{q-p}$. Note that 
\begin{equation*}
\begin{aligned}
    l_p-(2K_p-1)&= \left((j_q+l_q)\cdot2^{p-q}-j_p\right) -(2K_p-1) \\
    &\leq \left((j_p-1)\cdot2^{q-p}+1+2K_q-1\right) \cdot2^{p-q}-j_p-2K_p+1 \\
    &=j_p-1+2\cdot2^q\cdot2^{p-q}-j_p-2\cdot2^p+1 =0.
\end{aligned}
\end{equation*}
Thus $l_p\in\{0,\cdots,2K_p-1\}$. Moreover, note that
\begin{equation*}
    y_q-y_p= (j_q+l_q)\cdot\frac{\epsilon}{2^q}-\left(j_p+(j_q+l_q)\cdot2^{p-q}-j_p\right) \cdot\frac{\epsilon}{2^p}=0,
\end{equation*}
in other words, $y_p=y_q$ and $y_p\in\mathcal{Y}_{K_p}^{(j_p)}$. This yields that $\mathcal{Y}_{K_q}^{(j_q)} \subseteq \mathcal{Y}_{K_p}^{(j_p)}$.
\end{proof}

\textbf{Proof of Theorem \ref{theo:appromono}.} Rewrite $\widetilde{\mathrm{MCDP}}(\epsilon;K)$ as the following form:
\begin{equation}\label{eq:appro_rewrite}
\begin{aligned}
    \widetilde{\mathrm{MCDP}}(K;\epsilon)&= \max\{\underbrace{\min\limits_{j\in\{0,\cdots,K\}} \Delta\hat{F}(j\cdot\frac{\epsilon}{K})}_{M_1(K)}, \underbrace{\max\limits_{l\in\{1,\cdots,\left\lceil\frac{K}{\epsilon}\right\rceil-2K\}} \min\limits_{k\in\{l,\cdots,l+2K-1\}} \Delta\hat{F}(k\cdot\frac{\epsilon}{K})}_{M_2(K)}\} \\
    &=\max\left(M_1(K),M_2(K)\right).
\end{aligned}
\end{equation}
To prove the theorem, we show that for any $p>q\ge0,\ p,q\in\mathbb{N}$, $M_1(K_p)\leq M_1(K_q)$ and $M_2(K_p)\leq M_2(K_q)$ holds.

\ding{172} \emph{Proof of $M_1(K_p)\leq M_1(K_q)$}: 
For any $j_q=0,\cdots,K_q$, there exists $j_p=j_q\cdot2^{p-q}\in[0,K_p],\ j_p\in\mathbb{N}$, such that $j_q\delta_q= j_p\delta_p$. By Lemma \ref{lemma:set_maxmin}, we have
\begin{equation}\label{eq:m1p>m1q}
\begin{aligned}
    \{\Delta\hat{F}(j\delta_q)\}_{j=0}^{K_q} \subseteq \{\Delta\hat{F}(j\delta_p)\}_{j=0}^{K_p}\ \Rightarrow\ M_1(K_q)\geq M_1(K_p).
\end{aligned}
\end{equation}

\ding{173} \emph{Proof of $M_2(K_p)\leq M_2(K_q)$}: 
According to Lemma \ref{lemma:appro_mono}, for any $j_p=1,\cdots,\left\lceil\frac{1}{\delta_p}\right\rceil-2K_p$, there exists $j_q^{(p)}=\left\lceil j_p\cdot2^{q-p}\right\rceil \in\{1,\cdots,\left\lceil\frac{1}{\delta_q}\right\rceil-2K_q\}$, such that $\mathcal{Y}_{K_q}^{(j_q)} \subseteq \mathcal{Y}_{K_p}^{(j_p)}$. By Lemma \ref{lemma:set_maxmin}, we have
\begin{equation*}
    \min\limits_{k\in\{j_q^{(j_p)},\cdots,j_q^{(j_p)}+2K_q-1\}} \Delta\hat{F}(k\epsilon_q) \geq \min_{k\in\{j_p,\cdots,j_p+2K_p-1\}} \Delta\hat{F}(k\epsilon_p).
\end{equation*}
Note that $\left\{j_q^{(j_p)}:j_p\in\left\{1,\cdots,\left\lceil\frac{1}{\delta_p}\right\rceil-2K_p\right\}\right\} \subseteq \left\{1,\cdots,\left\lceil\frac{1}{\delta_q}\right\rceil-2K_q\right\}$. Using Lemma \ref{lemma:set_maxmin} again, we have
\begin{equation}\label{eq:m2p>m2q}
\begin{aligned}
    \max\limits_{j_q\in\{1,\cdots,\left\lceil\frac{1}{\delta_q}\right\rceil-2K_q\}} \min\limits_{k\in\{j_q,\cdots,j_q+2K_q-1\}} \Delta\hat{F}(k\epsilon_q) &\geq \max\limits_{j_p\in\{1,\cdots,\left\lceil\frac{1}{\delta_p}\right\rceil-2K_p\}} \min_{k\in\{j_p,\cdots,j_p+2K_p-1\}} \Delta\hat{F}(k\epsilon_p), \\
    \Rightarrow\quad M_2(K_q)&\geq M_2(K_p).
\end{aligned}
\end{equation}

Combining Eq.~\eqref{eq:appro_rewrite}, Eq.~\eqref{eq:m1p>m1q}, and Eq.~\eqref{eq:m2p>m2q}, we obtain that
\begin{equation*}
    \widetilde{\mathrm{MCDP}}(\epsilon;K_p) =\max\{M_1(K_p),M_2(K_p)\} \leq\max\{M_1(K_q),M_2(K_q)\} \leq\widetilde{\mathrm{MCDP}}(\epsilon;K_q).
\end{equation*}
This completes the proof of Theorem \ref{theo:appromono}.

\end{proof}

\subsection{Proofs of Theorem \ref{theo:tempsigmoid}}

\textbf{Theorem \ref{theo:tempsigmoid}. }
$\Delta\Tilde{F}_{\tau}(\hat{y}) \overset{\text{a.e.}}{\longrightarrow} \Delta\hat{F}(\hat{y})$ as $\tau\to\infty$, and in particular, $\lim_{\tau\to\infty} \Delta\Tilde{F}_{\tau}(\hat{y})= \Delta\hat{F}(\hat{y}),\ \forall\hat{y}\notin\{\hat{y}_i\}_{i=1}^N$.
\begin{proof}
Note that for any $i=1,\cdots,N$, we have
\begin{equation*}
    \lim\limits_{\tau\to\infty} \sigma_{\tau}(\hat{y}-\hat{y}_i)= \frac{1}{1+\exp(-\tau(\hat{y}-\hat{y}_i))}=
    \begin{cases}
        1,& \text{if }\hat{y}_i<\hat{y}, \\
        \frac{1}{2},& \text{if }\hat{y}_i=\hat{y}, \\
        0,& \text{if }\hat{y}_i>\hat{y}.
    \end{cases}
\end{equation*}
Thus for any $\hat{y}\notin\{\hat{y}_i\}_{i=1}^N$, the following holds
\begin{equation}
    \lim_{\tau\to\infty} \sum\limits_{i\in\mathcal{S}_a} \sigma_{\tau}(\hat{y}-\hat{y}_i)= \sum\limits_{i\in\mathcal{S}_a} \lim_{\tau\to\infty} \sigma_{\tau}(\hat{y}-\hat{y}_i)= \sum\limits_{i\in\mathcal{S}_a} \mathbb{I}(\hat{y}_i\leq\hat{y}),\ a\in\{0,1\}.
\end{equation}
Then we have
\begin{equation}
    \lim\limits_{\tau\to\infty} \Delta\Tilde{F}_{\tau}(\hat{y})= \left|\frac{1}{|\mathcal{S}_0|} \sum\limits_{i\in\mathcal{S}_0} \mathbb{I}(\hat{y}_i\leq\hat{y})- \frac{1}{|\mathcal{S}_1|} \sum\limits_{i\in\mathcal{S}_1} \mathbb{I}(\hat{y}_i\leq\hat{y})\right| =|\hat{F}_0(\hat{y})-\hat{F}_1(\hat{y})|=\Delta\hat{F}(\hat{y}),\quad\forall\hat{y}\notin\{\hat{y}_i\}_{i=1}^N.
\end{equation}
Note that $\{\hat{y}_i\}_{i=1}^N$ has measure zero, which further yields that $\Delta\Tilde{F}_{\tau}(\hat{y}) \overset{\text{a.e.}}{\longrightarrow} \Delta\hat{F}(\hat{y})$.
\end{proof}

\section{Estimation Error Analysis of $\widehat{\mathrm{MCDP}}(\epsilon)$ Metric}\label{sec:tractability} 
To conduct analysis of estimation error of the $\widehat{\mathrm{MCDP}}(\epsilon)$ metric, we start with giving two lemmas that will be used in proofs.
\begin{lemma}\label{lemma:abs_ineq1}
    For real numbers $a,b,c,d\in\mathbb{R}$, $\left||a-b|-|c-d|\right|\leq|a-c|+|b-d|$ holds.
\end{lemma}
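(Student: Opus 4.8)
The plan is to reduce the claim to two successive applications of the triangle inequality. First I would invoke the reverse triangle inequality in the form $\left||x|-|y|\right|\le|x-y|$, valid for all real $x,y$ (which itself follows from $|x|\le|x-y|+|y|$ together with the symmetric bound obtained by swapping $x$ and $y$). Applying it with $x=a-b$ and $y=c-d$ gives
\begin{equation*}
    \left||a-b|-|c-d|\right|\le\left|(a-b)-(c-d)\right|.
\end{equation*}

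Next I would use the elementary identity $(a-b)-(c-d)=(a-c)-(b-d)$ and apply the ordinary triangle inequality $|u-v|\le|u|+|v|$ with $u=a-c$ and $v=b-d$, obtaining $\left|(a-c)-(b-d)\right|\le|a-c|+|b-d|$. Chaining the two displayed bounds yields $\left||a-b|-|c-d|\right|\le|a-c|+|b-d|$, as claimed.

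Since both ingredients are entirely elementary, there is essentially no obstacle here; the only point requiring minimal care is the sign bookkeeping when regrouping $(a-b)-(c-d)$ as $(a-c)-(b-d)$, which is a routine rearrangement. No continuity, measure-theoretic, or optimization machinery is needed — this lemma is a purely scalar inequality, meant (as the surrounding text indicates) to control differences of empirical CDF gaps such as $\Delta\hat{F}$ in the subsequent estimation-error analysis.
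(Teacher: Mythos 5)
Your proof is correct: the reverse triangle inequality gives $\left||a-b|-|c-d|\right|\le\left|(a-b)-(c-d)\right|$, and the regrouping $(a-b)-(c-d)=(a-c)-(b-d)$ followed by the ordinary triangle inequality completes the bound. The paper states Lemma~\ref{lemma:abs_ineq1} without proof (only Lemma~\ref{lemma:abs_ineq2} receives one), so your argument simply supplies the standard elementary justification the authors took for granted; it is exactly the argument one would expect and is used correctly in the subsequent estimation-error analysis.
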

\begin{lemma}\label{lemma:abs_ineq2}
    For $a,b,c,d\in\mathbb{R}$ where $a\leq b$ and $c\leq d$, $\max\{|a-c|,|b-d|\}\leq\max\{|b-c|,|a-d|\}$ holds.
\end{lemma}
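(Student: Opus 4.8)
The plan is to use the two orderings $a\le b$ and $c\le d$ to trap both of the quantities $a-c$ and $b-d$ inside a single interval whose endpoints are $a-d$ and $b-c$, and then apply the elementary observation that any point of an interval has absolute value bounded by the larger of the absolute values of the endpoints. First I would record the four sign-flipped inequalities that follow immediately from the hypotheses: from $a\le b$ one gets $a-c\le b-c$ and $a-d\le b-d$; from $c\le d$ one gets $b-c\ge b-d$ and $a-c\ge a-d$. Chaining these yields $a-d\le a-c\le b-c$ and $a-d\le b-d\le b-c$, so both $a-c$ and $b-d$ lie in the closed interval $[a-d,\,b-c]$ (which is nonempty, since $a-d\le b-d\le b-c$).

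Next I would invoke the fact that if $x\in[p,q]$ with $p\le q$, then $|x|\le\max\{|p|,|q|\}$; this is immediate from $-\max\{|p|,|q|\}\le p\le x\le q\le\max\{|p|,|q|\}$. Applying it with $p=a-d$, $q=b-c$ first to $x=a-c$ and then to $x=b-d$ gives $|a-c|\le\max\{|a-d|,|b-c|\}$ and $|b-d|\le\max\{|a-d|,|b-c|\}$. Taking the maximum over the two left-hand sides then produces exactly $\max\{|a-c|,|b-d|\}\le\max\{|b-c|,|a-d|\}$, which is the assertion of Lemma \ref{lemma:abs_ineq2}.

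There is no real obstacle here; the only care needed is in getting the directions of all four subtractions correct and in stating the interval fact cleanly. An alternative, more mechanical route would be a case split on the signs of $a-c$, $b-d$, $b-c$, and $a-d$, but the sandwiching argument sidesteps that bookkeeping and is the version I would write up.
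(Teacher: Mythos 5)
Your proof is correct. Both of your containments check out: $a-d\le a-c\le b-c$ follows from $c\le d$ and $a\le b$ respectively, $a-d\le b-d\le b-c$ likewise, and the interval fact $x\in[p,q]\Rightarrow|x|\le\max\{|p|,|q|\}$ is applied correctly with $p=a-d$, $q=b-c$. Your route differs in organization from the paper's: the paper argues by a case split on the sign of $b-d$ (if $b\le d$ then $|b-d|=d-b\le d-a=|a-d|$; if $d\le b$ then $|b-d|=b-d\le b-c=|b-c|$), and it only writes out the bound for $|b-d|$, leaving the symmetric bound for $|a-c|$ implicit. Your sandwiching argument buys two things: it eliminates the sign case analysis entirely by routing everything through the single interval $[a-d,b-c]$, and it treats $|a-c|$ and $|b-d|$ uniformly with one application of the same lemma, so nothing is left to ``likewise.'' The paper's version is marginally shorter on the page but relies on the reader supplying the omitted half; yours is self-contained. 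Either is acceptable for a statement at this level.
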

\begin{proof}
    If $b\leq d$, then we have $|b-d|=d-b\leq d-a\leq\max\{|b-c|,|a-d|\}$. Likewise, $d\leq b$ derives that $|d-b|=b-d\leq b-c\leq\max\{|b-c|,|a-d|\}$. Thus the Lemma \ref{lemma:abs_ineq2} gets proved.
\end{proof}

Next we provide the statement of the Glivenko–Cantelli theorem as below:
\begin{theorem}[Glivenko-Cantelli]\label{theo:glivenko}\cite{tucker1959generalization}
    Let $X_1,\cdots,X_n$ be \emph{i.i.d.} random variables in $\mathbb{R}$ with common cumulative distribution function $F(x)=\mathbb{P}(X_1\leq x)$. Let $\hat{F}_n(x)=\frac{1}{n}\sum\limits_{i=1}^n\mathbb{I}(X_i\leq x)$ be the empirical distribution function. Then
    \begin{equation}\label{eq:glivenko_bound}
        \mathbb{P}\left\{\sup\limits_{x\in\mathbb{R}} |F(x)-\hat{F}_n(x)|>\delta\right\} \leq8(n+1)e^{-\frac{n\delta^2}{32}}.
    \end{equation}
    In particular, by the Borel-Cantelli Lemma \cite{chung1952application}, $\hat{F}_n(x)\stackrel{\text{a.s.}}{\longrightarrow}F(x),\ n\to\infty$, \ie
    \begin{equation*}
        \lim_{n\to\infty}\sup\limits_{x\in\mathbb{R}}|F(x)-\hat{F}_n(x)|=0.
    \end{equation*}
\end{theorem}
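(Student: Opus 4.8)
The plan is to treat the statement's two components separately: the finite-sample deviation bound in Eq.~\eqref{eq:glivenko_bound}, and the almost-sure uniform convergence $\hat F_n\stackrel{\text{a.s.}}{\longrightarrow}F$ that it implies. For the bound itself I would rely on the classical Dvoretzky--Kiefer--Wolfowitz-type estimate of \cite{tucker1959generalization}; if a self-contained derivation is wanted, the standard route is: (i) apply the quantile transform $U_i=F(X_i)$, which turns $\hat F_n$ into the uniform empirical distribution function and replaces $\sup_{x\in\mathbb R}$ by $\sup_{t\in[0,1]}$; (ii) lay down a grid $0=t_0<t_1<\cdots<t_m=1$ with $t_{j+1}-t_j\le\delta$, so $m=\lceil 1/\delta\rceil$; (iii) control $|\hat F_n(t_j)-t_j|$ at each grid point by Hoeffding's inequality for a sum of $n$ Bernoulli indicators and take a union bound, which produces the polynomial prefactor of order $n$; (iv) use the monotonicity of both $t\mapsto\hat F_n(t)$ and $t\mapsto t$ together with the triangle inequality (this is where elementary facts such as Lemmas~\ref{lemma:abs_ineq1} and \ref{lemma:abs_ineq2} are convenient) to bound the supremum over each cell $[t_j,t_{j+1}]$ by the larger of its two endpoint errors plus $\delta$, then rescale $\delta$ to absorb the mesh term. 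Collecting constants yields a bound of the stated form $8(n+1)e^{-n\delta^2/32}$.

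For the ``in particular'' claim I would invoke the first Borel--Cantelli lemma. Fix $\delta>0$ and set $A_n^{\delta}=\{\sup_{x\in\mathbb R}|F(x)-\hat F_n(x)|>\delta\}$. By Eq.~\eqref{eq:glivenko_bound}, $\sum_{n\ge1}\mathbb P(A_n^{\delta})\le\sum_{n\ge1}8(n+1)e^{-n\delta^2/32}<\infty$, since the summand decays geometrically (the ratio $e^{-\delta^2/32}$ is strictly less than $1$), so $\mathbb P(\limsup_n A_n^{\delta})=0$. Hence on an event $\Omega_\delta$ of probability one we have $\sup_x|F(x)-\hat F_n(x)|\le\delta$ for all $n$ beyond some random index, \ie $\limsup_{n\to\infty}\sup_x|F(x)-\hat F_n(x)|\le\delta$ on $\Omega_\delta$. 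Taking the intersection $\Omega=\bigcap_{k\ge1}\Omega_{1/k}$, which still has probability one, gives $\limsup_{n\to\infty}\sup_x|F(x)-\hat F_n(x)|=0$ on $\Omega$, which is exactly the asserted almost-sure uniform convergence.

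The only genuinely delicate point is the uncountable supremum $\sup_{x\in\mathbb R}$ sitting inside the probability in steps (iii)--(iv): a direct union bound over $\mathbb R$ is impossible, and the grid-plus-monotonicity argument is precisely what reduces it to a maximum over $O(1/\delta)$ deterministic points. Once that reduction is in place, each term is an elementary binomial tail and the rest — choosing the mesh, bookkeeping the constants $8$ and $32$, and the Borel--Cantelli passage to the limit — is routine. I expect no obstacle beyond that reduction.
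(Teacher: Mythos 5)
The paper itself offers no proof of this statement: it is imported verbatim as a classical result with the citation \cite{tucker1959generalization}, and only its consequences (Corollary~\ref{lemma:CDF_as}, Corollary~\ref{cor:delta_as}, Theorem~\ref{theo:mcdpeps_conv}) are proved. Your primary plan --- defer the finite-sample bound to the cited reference and then run Borel--Cantelli --- therefore matches what the paper does, and your Borel--Cantelli half is complete and correct: fixing $\delta$, summing the geometric-times-polynomial tail, intersecting the full-measure events $\Omega_{1/k}$ over $k$ is exactly the standard passage from Eq.~\eqref{eq:glivenko_bound} to almost-sure uniform convergence.

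The self-contained derivation you sketch for Eq.~\eqref{eq:glivenko_bound}, however, has a concrete mismatch. A grid of mesh $\delta$ on $[0,1]$ has $\lceil 1/\delta\rceil$ points, so the union bound over grid points combined with Hoeffding yields a prefactor of order $1/\delta$ (a Dvoretzky--Kiefer--Wolfowitz-type bound of the form $C\delta^{-1}e^{-cn\delta^2}$), not the prefactor of order $n$ that you claim it ``produces.'' The stated form $8(n+1)e^{-n\delta^2/32}$ is the Vapnik--Chervonenkis inequality specialized to the class of half-lines $\{(-\infty,x]\}$: the factor $n+1$ is the shatter coefficient of that class, and obtaining it requires the ghost-sample symmetrization argument (reducing the uncountable supremum to the at most $n+1$ distinct configurations the indicators realize on the combined sample), not a deterministic grid. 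Your grid bound does imply the a.s.\ convergence just as well, and in the regime where the stated bound is below $1$ one can check it dominates the grid bound, but ``collecting constants'' does not turn $\lceil 1/\delta\rceil$ into $8(n+1)$ without that extra case analysis. A second, smaller point: the quantile transform $U_i=F(X_i)$ only yields uniform variables when $F$ is continuous; for general $F$ (relevant here, since empirical prediction distributions can have atoms) the reduction needs generalized quantiles or separate control of $F(x)$ and $F(x^-)$.
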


By Theorem \ref{theo:glivenko}, we have the following corollaries in the binary classification task's setting of this work:
\begin{corollary}\label{lemma:CDF_as}
    Suppose the instances $\{\boldsymbol{x}_i\}_{i=1}^N$ in $\mathcal{D}$ are \emph{i.i.d.}, then
    \begin{equation*}
        \hat{F}_{a,n_a}(\hat{y}) \stackrel{\text{a.s.}}{\longrightarrow} F_a(\hat{y}),\ n_a\to\infty,\ a=0,1,
    \end{equation*}
    where $n_a$ denotes the number of instances with group label $a$ in $\mathcal{D}$ (\ie $n_a=|\mathcal{S}_a|$), and $\hat{F}_{a,n_a}(\hat{y})$ is the empirical distribution function of predictions of group $a$ when estimated by $N_a$ samples.
\end{corollary}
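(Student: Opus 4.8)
The plan is to reduce the statement directly to the Glivenko--Cantelli theorem (Theorem~\ref{theo:glivenko}) recalled above; no new probabilistic machinery is needed. The only real task is to exhibit, for each fixed group label $a\in\{0,1\}$, an \emph{i.i.d.} sequence of $[0,1]$-valued random variables whose common cumulative distribution function is exactly $F_a$ and whose empirical distribution function is $\hat{F}_{a,n_a}$, after which the corollary is immediate.

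First I would observe that, conditionally on the event $\{s_i=a\}$, the input $\boldsymbol{x}_i$ is distributed according to $\mathcal{P}_a=\mathbb{P}(\boldsymbol{x}\mid s=a)$, and since the classifier $f:\mathcal{X}\to[0,1]$ is a fixed measurable map, the prediction $\hat{y}_i=f(\boldsymbol{x}_i)$ then has law equal to the pushforward of $\mathcal{P}_a$ under $f$, whose CDF is by definition $F_a(\hat{y})=\mathbb{P}(f(\boldsymbol{x})\le\hat{y})$ with $\boldsymbol{x}\sim\mathcal{P}_a$. Because the triples $\{(\boldsymbol{x}_i,y_i,s_i)\}$ are \emph{i.i.d.}, the subcollection of predictions $\{\hat{y}_i:i\in\mathcal{S}_a\}$, listed in order of appearance, is itself an \emph{i.i.d.} sequence with common CDF $F_a$ (the standard ``\emph{i.i.d.}\ subsequence'' fact for independent sampling). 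Writing this sequence as $\hat{y}^{(a)}_1,\hat{y}^{(a)}_2,\ldots$, its empirical distribution function from the first $n_a$ terms is exactly $\hat{F}_{a,n_a}(\hat{y})=\frac{1}{n_a}\sum_{j=1}^{n_a}\mathbb{I}(\hat{y}^{(a)}_j\le\hat{y})$, which coincides with Eq.~\eqref{eq:ecdf} restricted to group $a$.

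With this identification in place, applying Theorem~\ref{theo:glivenko} with $X_j=\hat{y}^{(a)}_j$, $n=n_a$, $F=F_a$, and $\hat{F}_n=\hat{F}_{a,n_a}$ gives $\sup_{\hat{y}\in[0,1]}|F_a(\hat{y})-\hat{F}_{a,n_a}(\hat{y})|\to 0$ almost surely as $n_a\to\infty$, hence in particular $\hat{F}_{a,n_a}(\hat{y})\overset{\text{a.s.}}{\longrightarrow}F_a(\hat{y})$. The one point requiring a line of care --- and the only mild obstacle --- is that $n_a=|\mathcal{S}_a|$ is itself random: I would handle this by conditioning on the sequence of group labels, so that $n_a$ becomes a deterministic index along which Glivenko--Cantelli applies term by term, together with the remark that $n_a\to\infty$ almost surely as the total sample size grows whenever $\mathbb{P}(s=a)>0$ (strong law of large numbers; the case $\mathbb{P}(s=a)=0$ being vacuous). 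This bookkeeping aside, the substantive content is entirely supplied by Theorem~\ref{theo:glivenko}.
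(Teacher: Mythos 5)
Your proposal is correct and follows the same route as the paper, which derives this corollary directly from the Glivenko--Cantelli theorem (Theorem~\ref{theo:glivenko}) without further elaboration. You in fact supply more detail than the paper does --- in particular the identification of the group-$a$ predictions as an i.i.d.\ sequence with CDF $F_a$ and the conditioning argument handling the randomness of $n_a=|\mathcal{S}_a|$ --- both of which are sound and worth making explicit.
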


\begin{corollary}\label{cor:delta_as}
    Denote $\Delta\hat{F}_{n_0,n_1}(\hat{y})= |\hat{F}_{0,n_0}(\hat{y})-\hat{F}_{1,n_1}(\hat{y})|$, then we have
    \begin{equation}
        \Delta\hat{F}_{n_0,n_1}(\hat{y}) \stackrel{\text{a.s.}}{\longrightarrow} \Delta F(\hat{y}),\ n_0\to\infty,\ n_1\to\infty.
    \end{equation}
\end{corollary}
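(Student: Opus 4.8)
The plan is to deduce the claim directly from the pointwise (indeed uniform) almost-sure convergence of each group's empirical CDF established in Corollary~\ref{lemma:CDF_as}, using the elementary inequality of Lemma~\ref{lemma:abs_ineq1} to dominate the error of $\Delta\hat F_{n_0,n_1}$ by the two individual estimation errors.

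First I would fix $\hat y\in[0,1]$ and apply Lemma~\ref{lemma:abs_ineq1} with $a=\hat F_{0,n_0}(\hat y)$, $b=\hat F_{1,n_1}(\hat y)$, $c=F_0(\hat y)$, $d=F_1(\hat y)$, which yields
\begin{equation*}
    \bigl|\Delta\hat F_{n_0,n_1}(\hat y)-\Delta F(\hat y)\bigr| \le \bigl|\hat F_{0,n_0}(\hat y)-F_0(\hat y)\bigr| + \bigl|\hat F_{1,n_1}(\hat y)-F_1(\hat y)\bigr|.
\end{equation*}
Taking the supremum over $\hat y\in[0,1]$ on both sides even gives the stronger bound $\sup_{\hat y}\bigl|\Delta\hat F_{n_0,n_1}(\hat y)-\Delta F(\hat y)\bigr|\le \sup_{\hat y}\bigl|\hat F_{0,n_0}(\hat y)-F_0(\hat y)\bigr| + \sup_{\hat y}\bigl|\hat F_{1,n_1}(\hat y)-F_1(\hat y)\bigr|$, i.e.\ uniform convergence, which is what is actually needed to transfer almost-sure convergence to the maximal/minimal functionals defining $\widehat{\mathrm{MCDP}}(\epsilon)$.

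Next, by Corollary~\ref{lemma:CDF_as} (the Glivenko--Cantelli theorem applied within each group), each of the two terms on the right tends to $0$ almost surely as $n_0\to\infty$ and $n_1\to\infty$, respectively. Since a finite intersection of probability-one events again has probability one, on that common event the right-hand side — hence the left-hand side — converges to $0$, establishing $\Delta\hat F_{n_0,n_1}(\hat y)\stackrel{\text{a.s.}}{\longrightarrow}\Delta F(\hat y)$. The argument is essentially routine; the only place deserving a word of care is the bookkeeping of the joint limit: the randomness of the two groups lives on a common probability space, so one records that the two group-wise convergences hold simultaneously on a full-measure event, and that both $n_0$ and $n_1$ grow to infinity as $N\to\infty$ (they are nondecreasing in $N$ with $n_0+n_1=N$, under the standing assumption that each group has positive probability). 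No genuine obstacle arises beyond this.
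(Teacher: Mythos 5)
Your proposal is correct and follows essentially the same route as the paper: both bound $\bigl|\Delta\hat F_{n_0,n_1}(\hat y)-\Delta F(\hat y)\bigr|$ via the inequality $\bigl||a-b|-|c-d|\bigr|\le|a-c|+|b-d|$ of Lemma~\ref{lemma:abs_ineq1}, take the supremum over $\hat y$, and invoke the group-wise Glivenko--Cantelli convergence of Corollary~\ref{lemma:CDF_as}. Your phrasing via the intersection of two full-measure events is a slightly cleaner way of packaging the almost-sure bookkeeping than the paper's $\delta/2$-threshold argument, but the substance is identical.
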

\begin{proof}
According to Lemma \ref{lemma:CDF_as}, for any $\delta>0$ and $a\in\{0,1\}$, there exists $N_a(\delta)>0$, such that
\begin{equation}\label{eq:CDF_epsdelta}
    \max\limits_{\hat{y}\in[0,1]}|\hat{F}_{a,n_a}(\hat{y})-F_a(\hat{y})|\leq\delta,\quad\text{for any }n_a>N_a(\epsilon).
\end{equation}
Let $n_0>N_0(\frac{\delta}{2})$ and $n_1>N_1(\frac{\delta}{2})$, we have
\begin{equation}\label{eq:deltaF_epsdelta}
\begin{aligned}
    &\ \max\limits_{\hat{y}\in[0,1]} |\Delta\hat{F}_{n_0,n_1}(\hat{y}) -\Delta F(\hat{y})| \\
    =&\ \max\limits_{\hat{y}\in[0,1]} \left||\hat{F}_{0,n_0}(\hat{y})-\hat{F}_{1,n_1}(\hat{y})| -|F_0(\hat{y})-F_1(\hat{y})|\right| \\
    \stackrel{\text{(a)}}{\leq}& \max\limits_{\hat{y}\in[0,1]} \left(|\hat{F}_{0,n_0}(\hat{y})-\hat{F}_0(\hat{y})| +|\hat{F}_{1,n_1}(\hat{y})-\hat{F}_1(\hat{y})|\right) \\
    \leq&\ \max\limits_{\hat{y}\in[0,1]} |\hat{F}_{0,n_0}(\hat{y})-\hat{F}_0(\hat{y})| +\max\limits_{\hat{y}\in[0,1]} |\hat{F}_{1,n_1}(\hat{y})-\hat{F}_1(\hat{y})| \\
    \stackrel{\text{(b)}}{\leq}&\ \frac{\delta}{2}+\frac{\delta}{2}=\delta,
\end{aligned}
\end{equation}
where inequality (a) holds due to Lemma \ref{lemma:abs_ineq1}, and inequality (b) holds due to Eq.~\eqref{eq:CDF_epsdelta}. By Eq.~\eqref{eq:deltaF_epsdelta}, we have
\begin{equation*}
\lim\limits_{\substack{n_0\to\infty\\n_1\to\infty}}\sup\limits_{\hat{y}\in[0,1]}\left|\Delta\hat{F}_{n_0,n_1}(\hat{y}) -\Delta F(\hat{y})\right|=0,
\end{equation*}
which yields that $\Delta\hat{F}_{n_0,n_1}(\hat{y}) \stackrel{\text{a.s.}}{\longrightarrow} \Delta F(\hat{y})$.
\end{proof}

For notation clarity, let $M(\epsilon)=\mathrm{MCDP}(\epsilon)$ and $\hat{M}_{n_0,n_1}(\epsilon)= \widehat{\mathrm{MCDP}}(\epsilon)= \max\limits_{y_0\in[0,1]} \min\limits_{\left|\hat{y}-y_0\right|\leq\epsilon} \Delta\hat{F}_{n_0,n_1}(\hat{y})$. 
We have the following theorem about the estimation error convergence rate of the empirical and true metrics:
\begin{theorem}[Convergence of $\widehat{\mathrm{MCDP}}(\epsilon)$]\label{theo:mcdpeps_conv}
As the sample size $n_0$ and $n_1$ of two demographic groups increase, the value of $\widehat{\mathrm{MCDP}}(\epsilon)$ metric converges with probability one to the true metric value $\mathrm{MCDP}(\epsilon)$, \ie
\begin{equation}\label{eq:mcdpeps_as}
    \hat{M}_{n_0,n_1}(\epsilon) \stackrel{\text{a.s.}}{\longrightarrow} M(\epsilon),\ n_0\to\infty,\ n_1\to\infty.
\end{equation}
Additionally, the estimation error satisfies
\begin{equation}\label{eq:mcdpeps_err}
    Err_{\text{emp}}= \mathbb{E}_{\mathcal{D}}\left[|\hat{M}_{n_0,n_1}(\epsilon)-M(\epsilon)|^2\right]= \mathcal{O}\left(\frac{\ln n}{n}\right),\ n=\min\{n_0,n_1\}.
\end{equation}
\end{theorem}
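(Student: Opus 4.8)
The plan is to reduce both assertions to a single uniform-approximation estimate for $\Delta\hat F$, using the fact that the functional $G[g]=\max_{y_0\in[0,1]}\min_{|\hat y-y_0|\le\epsilon} g(\hat y)$ from Eq.~\eqref{eq:MCDPeps} is nonexpansive (1-Lipschitz) in the sup norm. First I would prove that for any two bounded functions $g_1,g_2$ on $[0,1]$,
\[
    \left|G[g_1]-G[g_2]\right|\le \sup_{\hat y\in[0,1]}\left|g_1(\hat y)-g_2(\hat y)\right|,
\]
which follows because both $\inf$ and $\sup$ over a fixed index set are nonexpansive: for each fixed $y_0$ the quantities $\min_{|\hat y-y_0|\le\epsilon}g_1(\hat y)$ and $\min_{|\hat y-y_0|\le\epsilon}g_2(\hat y)$ differ by at most $\|g_1-g_2\|_\infty$, and the outer $\max$ over $y_0$ preserves this bound. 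Applying this with $g_1=\Delta\hat F_{n_0,n_1}$ and $g_2=\Delta F$, and then invoking Lemma \ref{lemma:abs_ineq1}, gives
\[
    \left|\hat M_{n_0,n_1}(\epsilon)-M(\epsilon)\right| \le \sup_{\hat y\in[0,1]}\left|\Delta\hat F_{n_0,n_1}(\hat y)-\Delta F(\hat y)\right| \le Z_0+Z_1,\qquad Z_a:=\sup_{\hat y\in[0,1]}\left|\hat F_{a,n_a}(\hat y)-F_a(\hat y)\right|.
\]

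For the almost-sure statement Eq.~\eqref{eq:mcdpeps_as}, this inequality combined with Corollary \ref{cor:delta_as} (equivalently, the Borel--Cantelli conclusion of Theorem \ref{theo:glivenko} applied to each group) immediately yields $\hat M_{n_0,n_1}(\epsilon)\to M(\epsilon)$ a.s. as $n_0,n_1\to\infty$.

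For the $L^2$ rate Eq.~\eqref{eq:mcdpeps_err}, I would use $(Z_0+Z_1)^2\le 2Z_0^2+2Z_1^2$, so it suffices to bound $\mathbb E[Z_a^2]$. Since $0\le Z_a\le1$, write $\mathbb E[Z_a^2]=\int_0^1\mathbb P(Z_a>\sqrt t)\,dt$ and split at $t_0=\tfrac{32\ln n_a}{n_a}$: on $[0,t_0]$ bound the probability by $1$, and on $(t_0,1]$ use the quantitative bound $\mathbb P(Z_a>\delta)\le 8(n_a+1)e^{-n_a\delta^2/32}$ of Theorem \ref{theo:glivenko} with $\delta=\sqrt t$. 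The tail integral is then $\le 8(n_a+1)\cdot\tfrac{32}{n_a}\,e^{-n_a t_0/32}=\tfrac{256(n_a+1)}{n_a^2}=\mathcal O(n_a^{-1})$, so $\mathbb E[Z_a^2]=\mathcal O(\ln n_a/n_a)$. Because $x\mapsto \ln x/x$ is decreasing for $x\ge 3$, we get $\mathbb E[Z_0^2]+\mathbb E[Z_1^2]=\mathcal O(\ln n/n)$ with $n=\min\{n_0,n_1\}$, which is the claimed bound.

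The main obstacle is the nonexpansiveness step; the rest is routine tail-integration. The only delicate point there is the endpoint regime, where for $y_0$ within $\epsilon$ of $0$ or $1$ the neighborhood $\{\hat y:|\hat y-y_0|\le\epsilon\}$ is clipped to a subinterval of $[0,1]$ --- but since the \emph{same} clipped index set is used for both $g_1$ and $g_2$, the argument goes through verbatim. A secondary bookkeeping check is that the $\max$/$\min$ are attained (hence well defined): this holds for $\Delta\hat F$ since it takes finitely many values (Lemma \ref{lemma:delta_ecdf_range}), and for $\Delta F$ one may simply read $\max/\min$ as $\sup/\inf$ throughout without altering any estimate.
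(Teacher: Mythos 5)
Your proposal is correct, and it differs from the paper's proof in two ways worth noting. For the almost-sure part, the paper does not state the nonexpansiveness of $G[g]=\max_{y_0}\min_{|\hat y-y_0|\le\epsilon}g(\hat y)$ as a single clean lemma; instead it compares the two maximizers $y_0^*$ and $\widehat{y_0^*}$ (and the corresponding inner minimizers) directly, invoking its Lemma \ref{lemma:abs_ineq2} to conclude $|M(\epsilon)-\hat M_{n_0,n_1}(\epsilon)|\le\max_{\hat y}|\Delta F(\hat y)-\Delta\hat F_{n_0,n_1}(\hat y)|$ --- which is exactly your nonexpansiveness bound, derived pointwise. Your packaging is cleaner and sidesteps the attainment bookkeeping that the paper's $\arg\max/\arg\min$ notation quietly assumes; your remark about reading $\max/\min$ as $\sup/\inf$ handles the only real subtlety there. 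The more substantive difference is in the $L^2$ rate: the paper obtains $\delta=\mathcal O(\sqrt{\ln n/n})$ by ``setting the Glivenko--Cantelli tail bound equal to $1$'' and then squares this $\delta$ as though $\sup_{\hat y}|\hat F_{a,n_a}-F_a|$ were deterministically of that order, which is a heuristic high-probability argument rather than a bound on $\mathbb E_{\mathcal D}[\,\cdot\,]$. Your tail integration $\mathbb E[Z_a^2]=\int_0^1\mathbb P(Z_a>\sqrt t)\,\mathrm dt$ with the split at $t_0=32\ln n_a/n_a$ is the rigorous version of that step, and your arithmetic checks out (the tail contributes $\mathcal O(n_a^{-1})$, the head contributes $32\ln n_a/n_a$). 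So your route buys a genuinely sound proof of Eq.~\eqref{eq:mcdpeps_err} where the paper's is only a sketch, at the cost of one extra elementary inequality $(Z_0+Z_1)^2\le 2Z_0^2+2Z_1^2$.
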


\begin{proof}
Let $y^*=\mathop{\arg\max}_{y^*\in[0,1]}\Delta F(\hat{y})$ and $\hat{y}^*=\mathop{\arg\max}_{y^*\in[0,1]}\Delta\hat{F}_{n_0,n_1}(\hat{y})$, then both \ding{172} $\Delta F(\hat{y}^*)\leq\Delta F(y^*)$ and \ding{173} $\Delta\hat{F}_{n_0,n_1}(y^*)\leq\Delta\hat{F}_{n_0,n_1}(\hat{y}^*)$ hold. By Lemma \ref{lemma:abs_ineq2} and Eq.~\eqref{eq:deltaF_epsdelta}, given $\delta>0$, for any $n_0>N_0(\delta),n_1>N_1(\delta)$, we have
\begin{equation}\label{eq:conv_eps=0}
\begin{aligned}
    |M(0)-\hat{M}_{n_0,n_1}(0)|&= |\Delta F(y^*)-\Delta\hat{F}_{n_0,n_1}(\hat{y}^*)| \\
    &\leq \max\left\{|\Delta F(y^*)-\Delta\hat{F}_{n_0,n_1}(y^*)|, |\Delta F(\hat{y}^*)-\Delta\hat{F}_{n_0,n_1}(\hat{y}^*)|\right\} \\
    &\leq \max\limits_{\hat{y}\in[0,1]} |\Delta F(\hat{y})-\Delta\hat{F}_{n_0,n_1}(\hat{y})| \leq\delta.
\end{aligned}
\end{equation}
As to the cases when $\epsilon>0$, we denote $G(y_0),\ y_0^*,\ \hat{G}_{n_0,n_1}(y_0),\ \widehat{y_0^*}$ as follows
\begin{equation*}
\begin{aligned}
    G(y_0)=& \min\limits_{|\hat{y}-y_0|\leq\epsilon} \Delta F(\hat{y}), &  y_0^*&= \mathop{\arg\max}\limits_{y_0\in[0,1]}G(y_0), \\
    \hat{G}_{n_0,n_1}(y_0)=& \min\limits_{|\hat{y}-y_0|\leq\epsilon} \Delta\hat{F}_{n_0,n_1}(\hat{y}), & \widehat{y_0^*}&= \mathop{\arg\max}\limits_{y_0\in[0,1]}\hat{G}_{n_0,n_1}(y_0).
\end{aligned}
\end{equation*}
Thus we have $G(\widehat{y_0^*})\leq G(y_0^*)$ and $\hat{G}_{n_0,n_1}(y_0^*)\leq \hat{G}_{n_0,n_1}(\widehat{y_0^*})$. Moreover, for any $y_0^{\prime}\in[0,1]$, there exists $\hat{y}^{\prime},\hat{y}^{\prime\prime}\in[y_0^{\prime}-\epsilon,y_0^{\prime}+\epsilon]\cap[0,1]$, such that $G(y_0^{\prime})= \Delta F(\hat{y}^{\prime}) \leq\Delta F(\hat{y}^{\prime\prime})$ and $\hat{G}(y_0^{\prime})= \Delta\hat{F}_{n_0,n_1}(\hat{y}^{\prime\prime}) \leq\Delta\hat{F}_{n_0,n_1}(\hat{y}^{\prime})$. By Lemma \ref{lemma:abs_ineq2} and Eq.~\eqref{eq:deltaF_epsdelta}, given $\delta>0$ and $\epsilon>0$, for any $n_0>N_0(\delta),n_1>N_1(\delta)$, we have
\begin{equation}\label{eq:conv_eps>0}
\begin{aligned}
    |M(\epsilon)-\hat{M}_{n_0,n_1}(\epsilon)|&= |G(y_0^*)-\hat{G}_{n_0,n_1}(\widehat{y_0^*})| \\
    &\leq \max\left\{|G(y_0^*)-\hat{G}_{n_0,n_1}(y_0^*)|, |G(\widehat{y_0^*})-\hat{G}_{n_0,n_1}(\widehat{y_0^*})|\right\} \\
    &\leq \max\limits_{y_0\in[0,1]} |G(y_0)-\hat{G}_{n_0,n_1}(y_0)| \stackrel{\text{(c)}}{=} |\Delta F(\hat{y}^{\prime})- \Delta\hat{F}_{n_0,n_1}(\hat{y}^{\prime\prime})| \\
    &\leq \max\left\{|\Delta F(\hat{y}^{\prime\prime})- \Delta\hat{F}_{n_0,n_1}(\hat{y}^{\prime\prime})|, |\Delta F(\hat{y}^{\prime})- \Delta\hat{F}_{n_0,n_1}(\hat{y}^{\prime})|\right\} \\
    &\leq \max\limits_{\hat{y}\in[0,1]} |\Delta F(\hat{y})- \Delta\hat{F}_{n_0,n_1}(\hat{y})| \leq\delta,
\end{aligned}
\end{equation}
Equality (c) holds by taking $y_0^{\prime}=\mathop{\arg\max}\limits_{y_0\in[0,1]} |G(y_0)-\hat{G}_{n_0,n_1}(y_0)|$. Combining Eq.~\eqref{eq:conv_eps=0} and Eq.~\eqref{eq:conv_eps>0} yields that $\hat{M}_{n_0,n_1}(\epsilon)$ converges to $M(\epsilon)$ almost surely (\ie Eq.~\eqref{eq:mcdpeps_as} holds). Furthermore, let Eq.~\eqref{eq:glivenko_bound} equal to $1$, we have $\delta=\mathcal{O}\left(\sqrt{\frac{\ln n}{n}}\right)$. Combining Eq.~\eqref{eq:conv_eps=0}, Eq.~\eqref{eq:conv_eps>0} and Eq.~\eqref{eq:deltaF_epsdelta}, for any $\epsilon\geq0$, the estimation error should satisfy that
\begin{equation*}
\begin{aligned}
    Err_{\text{emp}}&= \mathbb{E}_{\mathcal{D}}\left[|\hat{M}_{n_0,n_1}(\epsilon)-M(\epsilon)|^2\right] \\
    &\leq \left(\max\limits_{\hat{y}\in[0,1]} |\Delta F(\hat{y})- \Delta\hat{F}_{n_0,n_1}(\hat{y})|\right)^2 \\
    &\leq \left(\max\limits_{\hat{y}\in[0,1]} |\hat{F}_{0,n_0}(\hat{y})-\hat{F}_0(\hat{y})| +\max\limits_{\hat{y}\in[0,1]} |\hat{F}_{1,n_1}(\hat{y})-\hat{F}_1(\hat{y})|\right)^2 \\
    &= \left(\mathcal{O}\left(\sqrt{\frac{\ln n_0}{n_0}}\right) +\mathcal{O}\left(\sqrt{\frac{\ln n_1}{n_1}}\right)\right)^2 \\
    &= \mathcal{O}\left(\frac{\ln n}{n}\right),\ n=\min\{n_0,n_1\}.
\end{aligned}
\end{equation*}
Thus Eq.~\eqref{eq:mcdpeps_err} gets proved. This completes the proof of Theorem \ref{theo:mcdpeps_conv}.
\end{proof}

\section{Detailed Analysis about the Computational Complexity}\label{sec:complexity}

In Section \ref{sec:estimate}, we conduct a rough computational complexity analysis of the exact and approximate $\widehat{\mathrm{MCDP}}(\epsilon)$ calculation algorithms, where we only focus on the traverse strategies of $y_0$ and $\hat{y}$ in Eq.~\eqref{eq:mcdphat}. Here we provide a more thorough computational complexity analysis of both algorithms.

\textbf{Exact Calculation Analysis. }
In Algorithm \ref{alg:exact}, the calculation process can be divided into the following steps.
\begin{enumerate}[label=(\arabic*)]
    \item \textbf{Calculating $\Delta\hat{F}(\hat{y})$ (line 1).} As $\Delta\hat{F}(\hat{y})= |\hat{F}_0(\hat{y})- \hat{F}_1(\hat{y})|$, this step is equivalent to calculating the empirical distribution function $\hat{F}_a(\hat{y})$ using the predictions $\{\hat{y}_i\}_{i\in\mathcal{S}_a}$ in two groups ($a=0,1$). Due to its step-like pattern’s property, the empirical distribution function is implemented by constructing a sorted array of sample values with the corresponding cumulative ratio in practice (\eg the Statsmodels library \cite{seabold2010statsmodels} in Python), and then perform binary-search on the array given a specific value. In line 1, the complexity of initializing the sorted array structure is $\mathcal{O}(|\mathcal{S}_0|\log|\mathcal{S}_0|)+ \mathcal{O}(|\mathcal{S}_1|\log|\mathcal{S}_1|)= \mathcal{O}(N\log N)$.
    \item \textbf{Initializing the $\widetilde{\mathrm{MCDP}}(\epsilon)$ (lines 2-6).} When $\epsilon=0$, the algorithm returns the maximal $\Delta\hat{F}(\hat{y})$ value on predictions of data samples, and its computational complexity is $\mathcal{O}(N)$. In cases where $\epsilon>0$, the $\widetilde{\mathrm{MCDP}}(\epsilon)$ value is initialized with the minimal $\Delta\hat{F}(\hat{y})$ prediction smaller than or equal to $\epsilon$, whose complexity is also $\mathcal{O}(N)$.
    \item \textbf{Traversing $y_0$ and $\hat{y}$ when $\epsilon>0$ (lines 7-12).} For each prediction $\hat{y}_i\leq1-\epsilon$, the algorithm finds the minimal $\Delta\hat{F}(\hat{y})$ value of predictions in the $2\epsilon$ interval with left endpoint $\hat{y}_i$, and finally keeps the maximum of the selected minimums. As the above process involves a double traversal of predictions in $\mathcal{D}$, its complexity is $\mathcal{O}(N^2)$.
\end{enumerate}

In summary, the overall computational complexity when $\epsilon=0$ is $\mathcal{O}(N\log N) +\mathcal{O}(N) =\mathcal{O}(N\log N)$. When $\epsilon>0$, the total complexity is $\mathcal{O}(N\log N) +\mathcal{O}(N) +\mathcal{O}(N^2) =\mathcal{O}(N^2)$, most of which is made up by the traverse process.

\textbf{Approximate Calculation Analysis. }
Similar to the exact calculation, Algorithm \ref{alg:approximate} firstly calculates $\Delta\hat{F}(\hat{y})$ using the predictions of data samples in two groups (line 1), whose computational complexity is $\mathcal{O}(N\log N)$. Next, the algorithm samples equally-spaced prediction points by step-size $\delta=\frac{\epsilon}{K}$ on $[0,1]$, where the total number of samples is $\mathcal{O}(\frac{1}{\delta})$. Afterwards, it firstly initializes $\widetilde{\mathrm{MCDP}}(\epsilon)$ with the minimal $\Delta\hat{F}(\hat{y})$ value of the first $K+1$ sampled points (line 3), and then updates it with the maximum of the minimal $\Delta\hat{F}(\hat{y})$ value of all consecutive $2K$ sampled points (lines 4-7). The computational complexity of initialization and traverse update are $\mathcal{O}(K)$ and $\mathcal{O}(\frac{K}{\delta})$, respectively. To conclude, the total complexity of Algorithm \ref{alg:approximate} is $\mathcal{O}(N\log N)+ \mathcal{O}(\frac{1}{\delta})+ \mathcal{O}(K)+ \mathcal{O}(\frac{K}{\delta})= \max\{\mathcal{O}(N\log N),\mathcal{O}(\frac{K^2}{\epsilon})\}$. Specifically, when the sampling frequency $K$ is set with small values such that $K\ll N$, the calculation of $\Delta\hat{F}(\hat{y})$ will make up more complexity than traversing sampled predictions, and its total complexity will be much smaller than that of exact algorithm.

\section{Additional Experimental Settings}\label{sec:moresetup}

\textbf{Implementation Details. }
To make our work standardized and extensible, the implementations are based on a latest open-sourced fairness benchmark, FFB \cite{han2023ffb}. We conduct experiments with a 96-core Intel CPU (Intel(R) Xeon(R) Platinum 8268 @ 2.90GHz * 2) and a Nvidia-2080Ti GPU (11 GB memory). For tabular datasets Adult \cite{kohavi1996scaling} and Bank \cite{moro2014data}, we use a two-layer MLP \cite{bishop1995neural} with 256 hidden neurons and ReLU activation function \cite{nair2010rectified} as the classifier; for the image dataset CelebA \cite{liu2015deep} (which derives two meta-datasets CelebA-A and CelebA-W), we adopt Resnet-18 \cite{he2016deep} with as the backbone model, and initialize it with pretrained weights. The batch size for tabular and image datasets are set as $1024$ and $128$, respectively, and the total training step is set as $150$. We use the Adam optimizer with initial learning rate $0.001$, which is decayed by the piecewise strategy (\ie StepLR scheduler in Pytorch \cite{paszke2019pytorch}) during training. We use average precision ($\mathrm{AP}$) to evaluate classification accuracy, and adopt different metrics ($\Delta\mathrm{DP},\ \mathrm{ABCC},\ \mathrm{MCDP}(\epsilon)$) to measure algorithmic fairness. For $\Delta\mathrm{DP}$, we follow \citet{creager2019flexibly,dai2021say,chen2023post} to compute the difference of positive prediction proportion of two groups. The code implementation of this paper is available at \url{https://github.com/mitao-cat/icml24_mcdp}.

\begin{table*}[t]
\vspace{-8pt}
\caption{Hyper-parameter settings for different methods and datasets.}
\label{tab:hyper}
\vspace{1pt}
\centering
\begin{tabular}{@{}lcl@{}}
\toprule
\textbf{Method}            & \textbf{Other HP}    & \textbf{Selected 5 Trade-off HP Values on Each Dataset}                     \\ \midrule
\multirow{2}{*}{AdvDebias} & \multirow{2}{*}{N/A} & Adult: $[0.2,0.4,0.6,0.8,1.0]$;\quad Bank: $[0.3,0.6,0.9,1.2,1.5]$;      \\ 
                           &                      & CelebA-A, CelebA-W: $[0.5,1.0,1.5,2.0,2.5]$ \\ \midrule
DiffDP                     & N/A                  & All datasets: $[0.1,0.2,0.3,0.4,0.5]$                                    \\ \midrule
\multirow{2}{*}{FairMixup} & \multirow{2}{*}{N/A} & Adult: $[0.1,0.2,0.3,0.4,0.5]$;\quad Bank: $[0.3,0.4,0.5,0.6,0.7]$       \\ 
                           &                      & CelebA-A: $[0.5,1,2,3,4]$;\quad CelebA-W: $[0.5,1.0,1.5,2.0,2.5]$        \\ \midrule
DRAlign                    & Alignment strength $\beta$  & All datasets: $[0.1,0.2,0.3,0.4,0.5]$                                    \\ \midrule
DiffABCC                   & Temperature $\tau$        & All datasets: $[0.1,0.2,0.3,0.4,0.5]$                                    \\ \midrule
DiffMCDP                   & Temperature $\tau$        & All datasets: $[0.1,0.15,0.2,0.25,0.3]$                                  \\ \bottomrule
\end{tabular}
\vspace{-10pt}
\end{table*}

\textbf{Hyper-Parameters of Baselines. }
As fairness-accuracy trade-off is a widely-exist phenomenon in various applications \cite{wick2019unlocking,dutta2020there}, the hyper-parameters of different fairness algorithms should be carefully selected to ensure reliable comparisons. Generally, all hyper-parameters can be divided into two categories based on whether they directly control the fairness and accuracy (\emph{trade-off HPs}) or not (\emph{other HPs}). For each compared method, we firstly perform grid search on a wide value range for both trade-off HPs and other HPs, and then select the optimal other HP value that achieves the best trade-off performance on each dataset. Afterwards, we select 5 trade-off HP values that can well reflect the fairness-accuracy trade-off trend for each algorithm on the validation set. We additionally require that (i) both fairness and accuracy metrics of different algorithms are within a similar value range, and (ii) the classification accuracy should not drop too sharply. When using a single value for model evaluation, following previous work \cite{jung2023re}, we select the optimal trade-off HP value which achieves at least 95\% of the vanilla model’s accuracy (\ie $\mathrm{AP}$ of ERM) on validation set. We summarize the detailed hyper-parameters in Table \ref{tab:hyper}.

\section{Supplementary Experimental Results}

\subsection{Fairness-Accuracy Trade-offs}\label{sec:moretradeoff}

In Section \ref{sec:exp_learning}, we show the trade-offs between $\mathrm{AP}$ and $\mathrm{MCDP}(0)$ metrics of baselines and our proposed method. To further investigate their performances, we also plot the fairness-accuracy trade-off curves when adopting $\Delta\mathrm{DP}$ and $\mathrm{ABCC}$ as fairness metrics in Figures \ref{fig:ap_dp} and \ref{fig:ap_abcc}, respectively. We can observe that DiffMCDP can achieve comparable or even better trade-off performances compared with other baselines, and in particular, DiffDP achieves the optimal results in terms of both $\Delta\mathrm{DP}$ and $\mathrm{ABCC}$ fairness metrics on Bank dataset. This indicates that optimizing the maximal disparity is also beneficial to minimize the overall disparity, which is consistent with the results in Table \ref{tab:main_tabular}.

\begin{figure*}[htb]
    \centering
    \vspace{-5pt}
    \centering
    \includegraphics[width=0.99\linewidth]{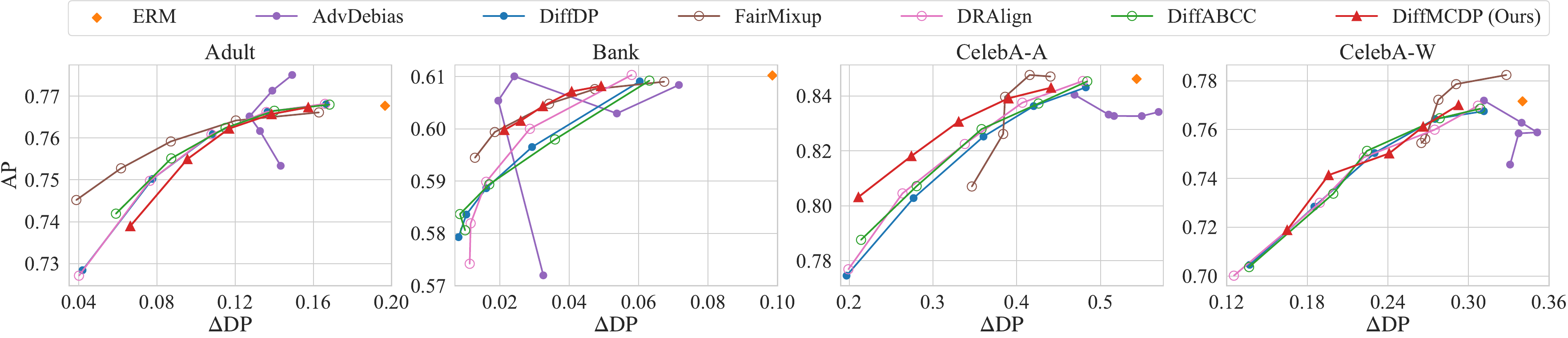}
    \vspace{-12pt}
    \caption{Trade-offs between $\mathrm{AP}$ and $\Delta\mathrm{DP}$ of baselines and the proposed method.}
    \label{fig:ap_dp}
    \vspace{-5pt}
\end{figure*}

\begin{figure*}[htb]
    \centering
    \vspace{-5pt}
    \centering
    \includegraphics[width=0.99\linewidth]{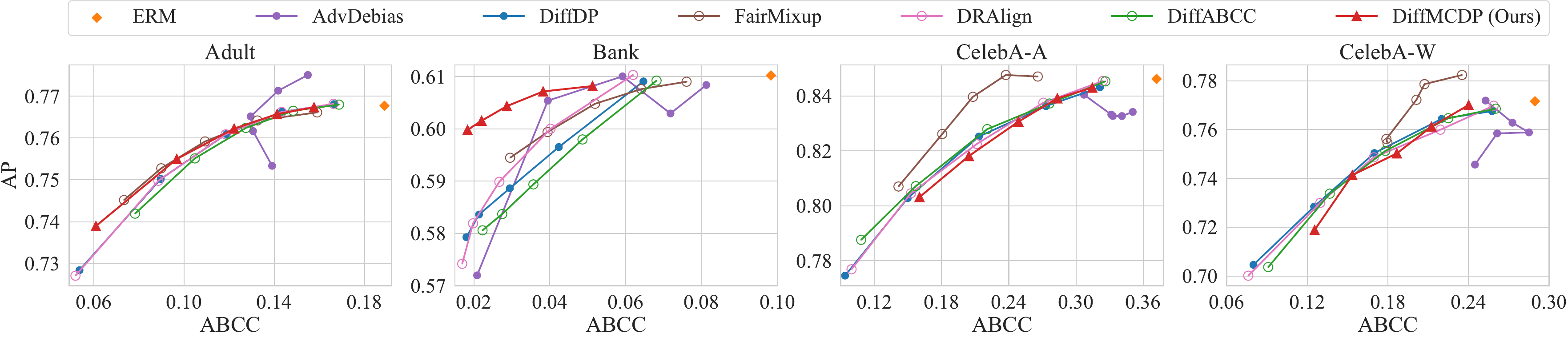}
    \vspace{-12pt}
    \caption{Trade-offs between $\mathrm{AP}$ and $\mathrm{ABCC}$ of baselines and the proposed method.}
    \label{fig:ap_abcc}
    \vspace{-10pt}
\end{figure*}

\subsection{Varying Local Measurements $\epsilon$}\label{sec:varyeps}

In Figure \ref{fig:MCDP_eps} of Section \ref{sec:exp_learning}, we report the different $\mathrm{MCDP}(\epsilon)$ results of each algorithm, in which the relative performance of some baselines changes with increasing $\epsilon$ values. To further explore the effect of varying neighborhood hyper-parameter, we plot the fairness-accuracy trade-off curves with $\epsilon$ in $\{0.01,0.05,0.1\}$. Figures \ref{fig:dp_mcdpeps_adult} and \ref{fig:dp_mcdpeps_bank} show the results of Adult and Bank datasets, respectively, from which we observe that DiffMCDP always outperforms other baselines in terms of different $\mathrm{MCDP}(\epsilon)$ metrics. Moreover, the trade-off patterns of some baselines may change with $\epsilon$ values, which brings about changing relative performances evaluated by $\mathrm{MCDP}(\epsilon)$. This further reminds practitioners to select varying $\epsilon$ values for more comprehensive evaluation of different fair algorithms.

\begin{figure*}[htb]
    \centering
    \includegraphics[width=0.78\linewidth]{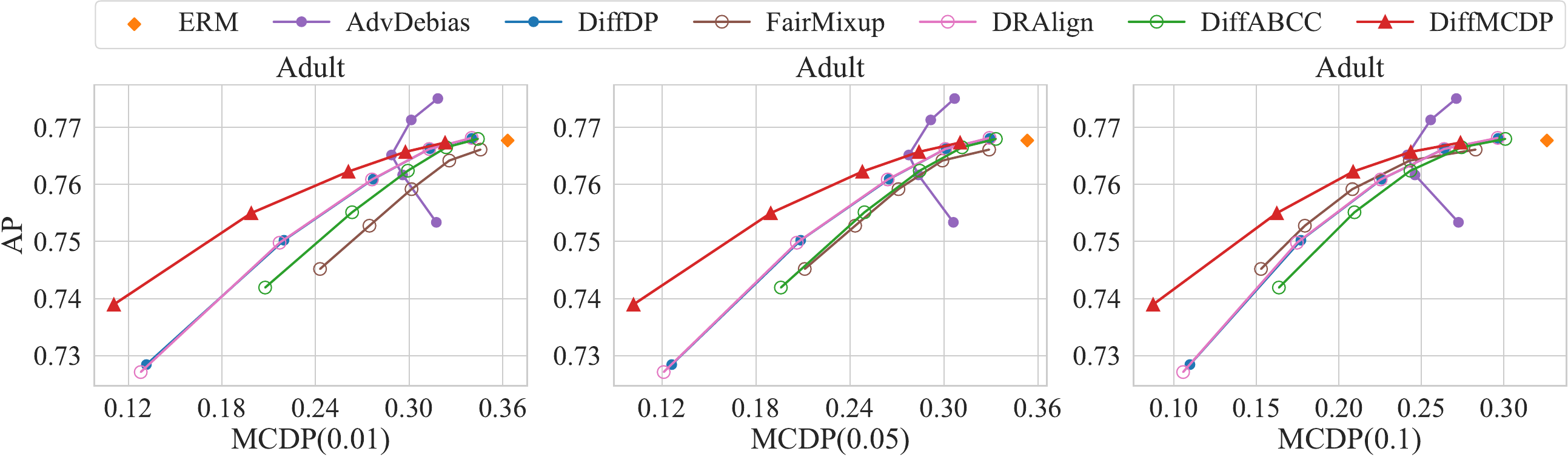}
    \vspace{-10pt}
    \caption{Trade-offs between $\mathrm{AP}$ and $\mathrm{MCDP}(\epsilon)$ with varying neighborhood hyper-parameter $\epsilon$ on Adult dataset.}
    \label{fig:dp_mcdpeps_adult}
\end{figure*}

\begin{figure*}[htb]
    \centering
    \includegraphics[width=0.78\linewidth]{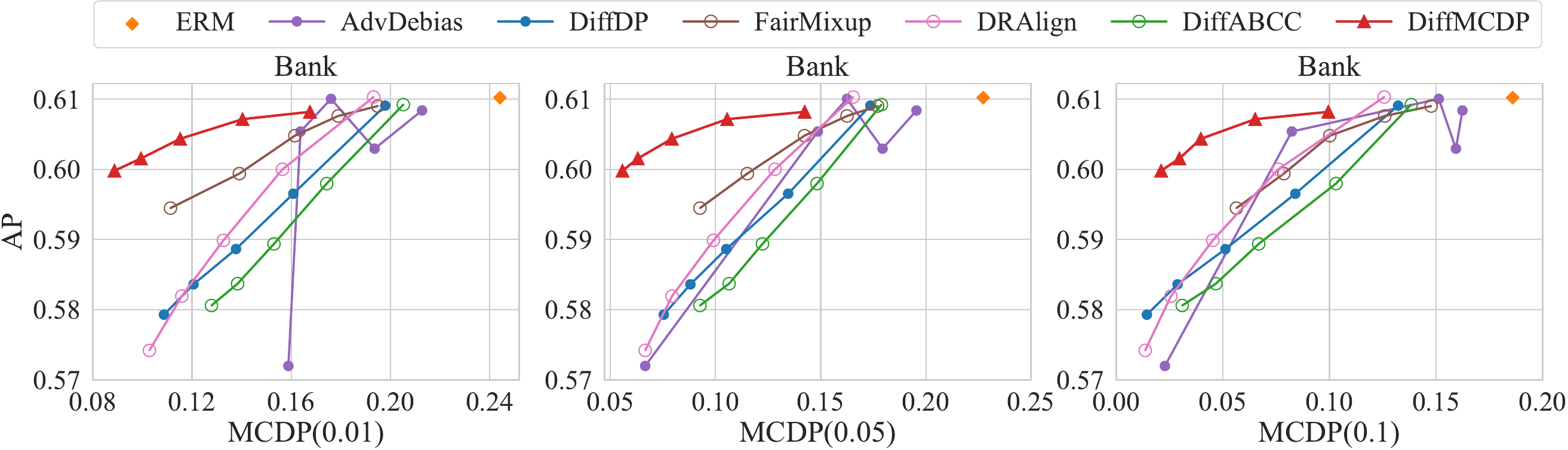}
    \vspace{-10pt}
    \caption{Trade-offs between $\mathrm{AP}$ and $\mathrm{MCDP}(\epsilon)$ with varying $\epsilon$ values on Bank dataset.}
    \label{fig:dp_mcdpeps_bank}
\end{figure*}

Additionally, we plot the $\mathrm{MCDP}(\epsilon)$ results with varying $\epsilon$ values on CelebA dataset in Figure \ref{fig:MCDP_eps_image}, which shows that our proposed DiffMCDP algorithm can consistently achieve lower maximal local disparity than other baselines. 
Additionally, compared to the results of the tabular data in Figure \ref{fig:MCDP_eps}, the differences in the magnitude of $\mathrm{MCDP}(\epsilon)$ values under varying $\epsilon$ are much lower in Figure \ref{fig:MCDP_eps_image}, and the relative performance of all methods does not change. We postulate that the CDFs of predictions of two groups on tabular datasets are sharper than those of image datasets, thus the value of neighborhood hyper-parameter $\epsilon$ is more prone to affect the value of maximal local disparity.

\begin{figure*}[!h]
    \centering
    \includegraphics[width=0.5\linewidth]{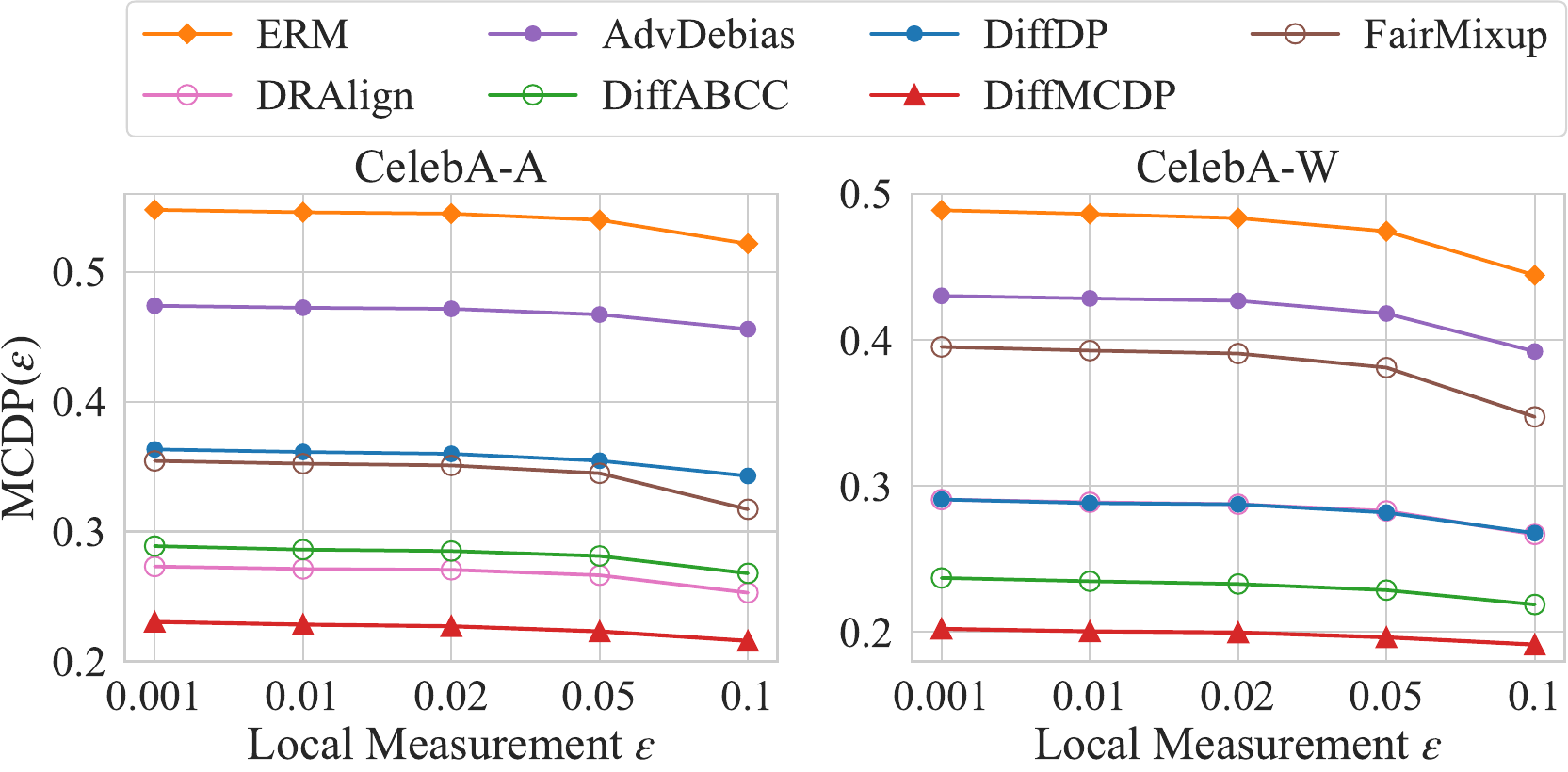}
    \vspace{-10pt}
    \caption{Comparison of $\mathrm{MCDP}(\epsilon)$ results with varying $\epsilon$ on image datasets.}
    \label{fig:MCDP_eps_image}
\end{figure*}

\subsection{$\widehat{\mathrm{MCDP}}(\epsilon)$ Calculation Algorithms}\label{sec:detailestimate}

Continuing from Section \ref{sec:expestimate}, we show the effect of sampling frequency $K$ and local measurement $\epsilon$ on the estimation accuracy and calculation efficiency of $\widehat{\mathrm{MCDP}}(\epsilon)$ calculation algorithms. The performance comparison on Bank and CelebA-W datasets are reported in Figures \ref{fig:bank_epsK} and \ref{fig:celeba-w_epsK}, respectively. As the exact $\widehat{\mathrm{MCDP}}(0.1)$ values of a small fraction of cases in Bank are very close to zero ($<0.001$), which results in extreme relative errors, we report the \emph{absolute error} $V_a-V_e$ in Figure \ref{fig:bank_epsK} instead. Similar to the observations from Figure \ref{fig:time_err}, the approximate algorithm can greatly improve the calculation efficiency with low error.

\begin{figure*}[htb]
    \centering
    \vspace{-5pt}
    \includegraphics[width=0.5\linewidth]{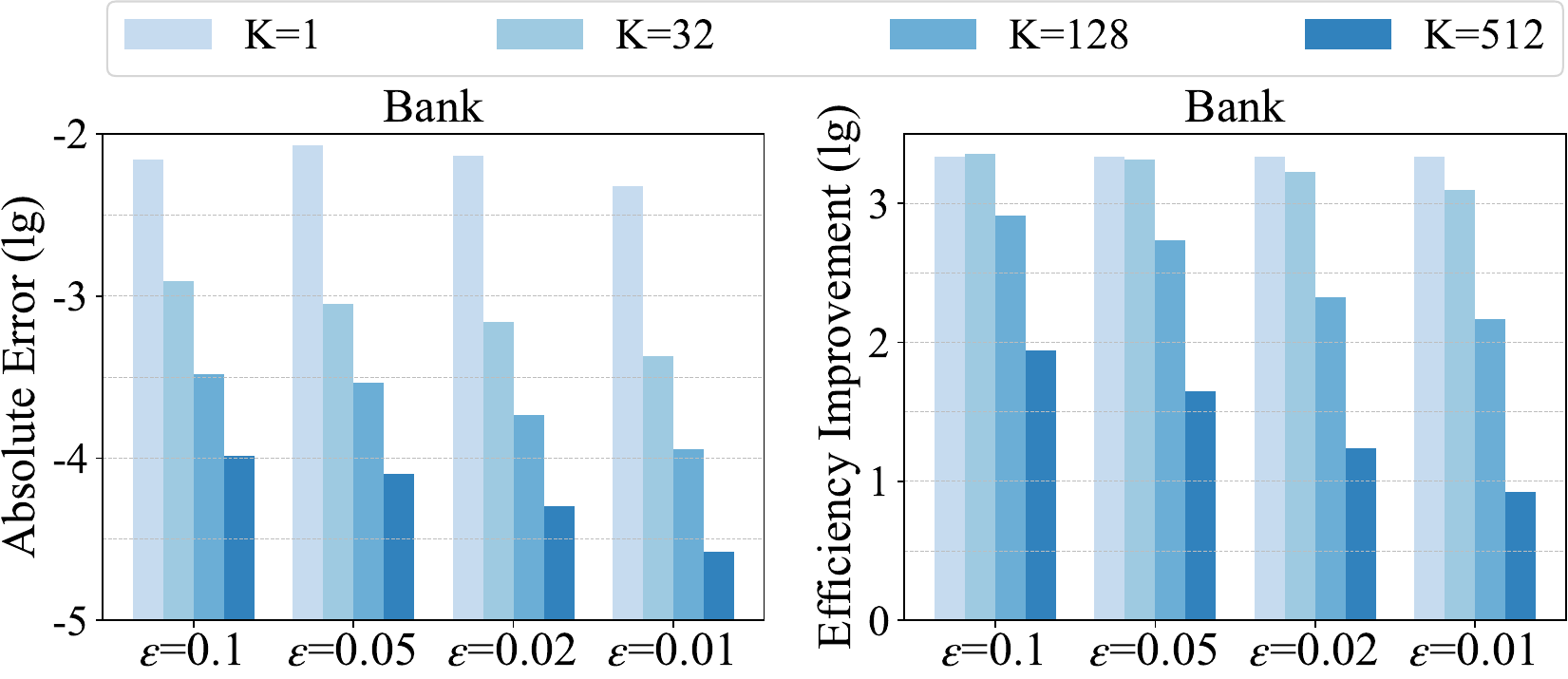}
    \vspace{-12pt}
    \caption{Varying $K$ and $\epsilon$ in $\widehat{\mathrm{MCDP}}(\epsilon)$ calculation algorithms on the Bank dataset.}
    \label{fig:bank_epsK}
    \vspace{-12pt}
\end{figure*}

\begin{figure*}[htb]
    \centering
    \includegraphics[width=0.5\linewidth]{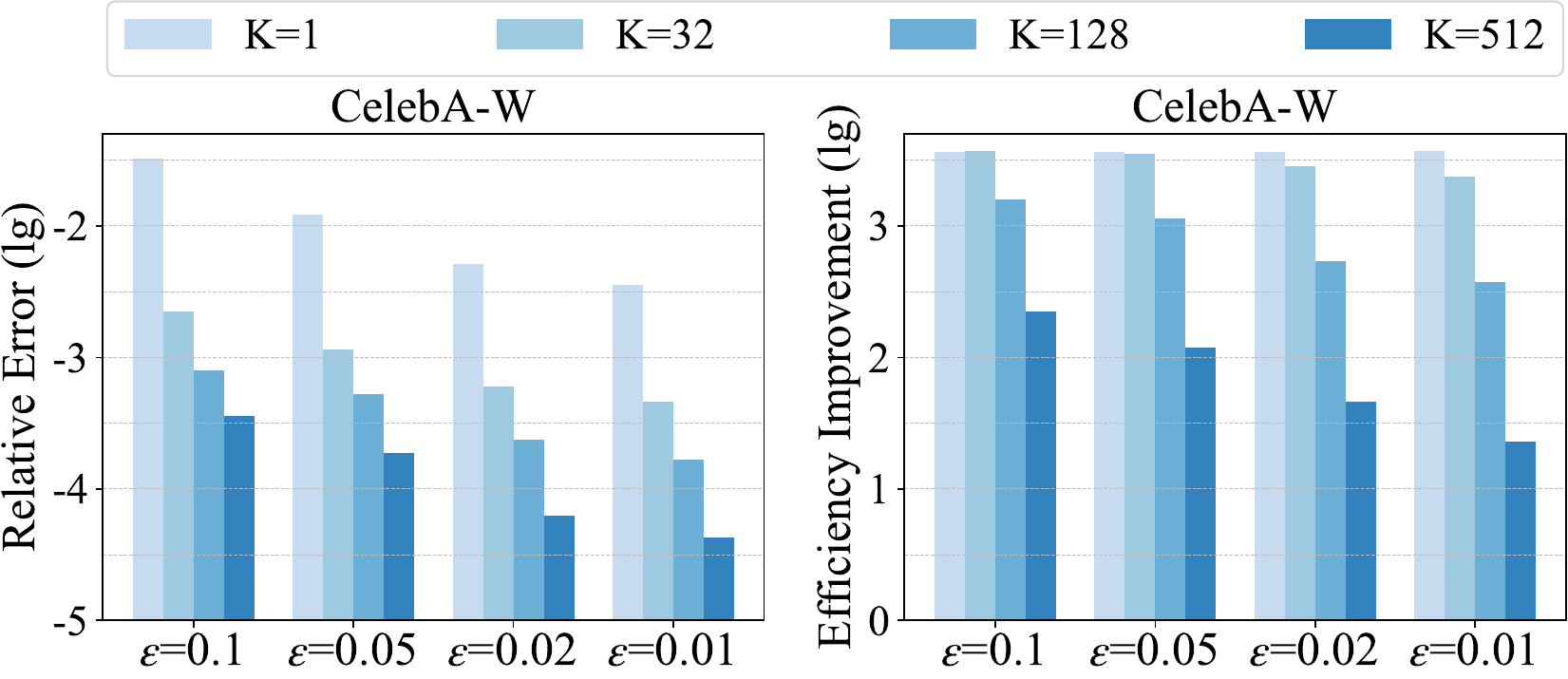}
    \vspace{-12pt}
    \caption{Varying $K$ and $\epsilon$ in $\widehat{\mathrm{MCDP}}(\epsilon)$ calculation algorithms on the CelebA-W dataset.}
    \label{fig:celeba-w_epsK}
    \vspace{-5pt}
\end{figure*}

To further illustrate the importance of improving computational efficiency, we report the running time of Algorithm \ref{alg:exact} $T_e$ in Table \ref{tab:runtime}. We can observe that the exact calculation algorithm exhibits prolonged execution time in a single run. In research and industry scenarios which involves frequent model evaluations, the total evaluation time for multiple experiments would be unaffordable if the exact algorithm is employed. Nevertheless, the runtime can be shortened to hundreds or thousands of times using the approximate calculation algorithm (\ie $T_a\ll T_e$), which successfully resolves the efficiency issue.

\begin{table*}[htb]
\vspace{-8pt}
\caption{The single-run execution time $T_e$ of the exact calculation algorithm.}
\label{tab:runtime}
\vspace{1pt}
\centering
\begin{tabular}{l|cccc}
\toprule
\textbf{Dataset} & \textbf{Adult} & \textbf{Bank} & \textbf{CelebA-A} & \textbf{CelebA-W} \\ \hline
Time (s) & \ms{3.529}{0.122} & \ms{2.900}{0.006} & \ms{7.748}{0.248} & \ms{7.745}{0.169} \\ \bottomrule
\end{tabular}
\vspace{-10pt}
\end{table*}

\subsection{Varying Temperature $\tau$}\label{sec:varytau}
Continuing from Section \ref{sec:indepth}, we show the effect of varying temperature $\tau$ on the performance of DiffMCDP algorithm on CelebA dataset in Figure \ref{fig:temperature_image}. Similarly, very large or small temperatures ($\tau=5,50$) causes sub-optimal results, whereas moderate temperatures ($\tau=10,20$) more effectively trade-off the accuracy and fairness.

\begin{figure*}[!h]
    \centering
    \includegraphics[width=0.5\linewidth]{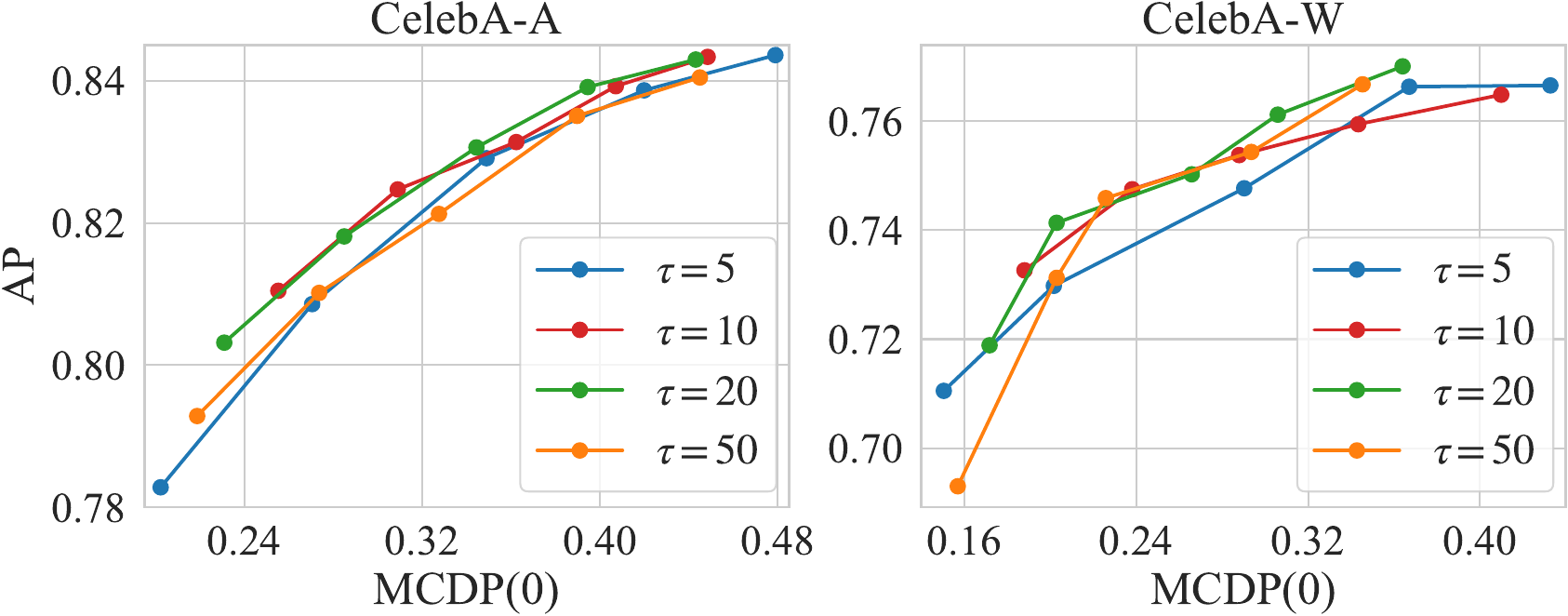}
    \vspace{-10pt}
    \caption{Trade-offs of DiffMCDP with varying temperature $\tau$ on image datasets.}
    \label{fig:temperature_image}
    \vspace{-15pt}
\end{figure*}

\section{Implementation of $\widehat{\mathrm{MCDP}}(\epsilon)$ Calculation Algorithms}

In this section, we provide the implementations of the exact and approximate calculation of $\widehat{\mathrm{MCDP}}(\epsilon)$ metric in Algorithms \ref{code:exact} and \ref{code:approximate}, respectively, which shows that our proposed metric can be employed in fairness evaluation of various scenarios.

\begin{center}
\begin{minipage}{.9\linewidth}
\begin{algorithm}[H]
\caption{Implementation of the Exact Calculation of $\widehat{\mathrm{MCDP}}(\epsilon)$ in Algorithm \ref{alg:exact}}\label{code:exact}
\begin{lstlisting}
from statsmodels.distributions.empirical_distribution import ECDF

def MCDP_exact(y_pred, s, epsilon):
    # y_pred: continuous model predictions in [0,1]
    # s: binary sensitive attribute in {0,1}
    # epsilon: neighborhood hyper-parameter

    y_pred, s = y_pred.ravel(), s.ravel()
    y_pre_1, y_pre_0 = y_pred[s==1], y_pred[s==0]
    # calculate the empirical distribution functions (line 1)
    ecdf0, ecdf1 = ECDF(y_pre_0), ECDF(y_pre_1)

    y_pred = np.r_[0,y_pred,1]  # line 2
    if epsilon == 0:
        # return the maximal deltaF value of predictions (line 4)
        mcdp = np.max(np.abs(ecdf0(y_pred)-ecdf1(y_pred)))
    else:
        y_pred = np.sort(y_pred)    # for indice slicing operations
        # initialize the mcdp value (line 6)
        init_ypred = y_pred[y_pred <= epsilon]
        mcdp = np.min(np.abs(ecdf0(init_ypred) - ecdf1(init_ypred)))

        # traverse all predictions on instances in D (lines 7-9)
        I = y_pred[y_pred <= 1-epsilon].reshape((-1,1))     # line 7
        # select predictions in 2eps intervals with left endpoints in I (lines 8-9)
        J = np.where(I<=y_pred,1,0) * np.where(I+2*epsilon>=y_pred,1,0)
        J = J.reshape(-1) * np.tile(np.arange(len(y_pred))+1,len(I))
        J = J.reshape((len(I),len(y_pred)))
        
        # calculate the minimal deltaF values of each 2eps intervals (line 10)
        delta_ecdf = np.r_[1.1, np.abs(ecdf0(y_pred) - ecdf1(y_pred))]
        delta_ecdf = delta_ecdf[J]  # (len(I),len(y_pred))
        min_delta_ecdf = np.min(delta_ecdf,axis=1)
        # keep the maximum of minimal deltaF values (line 11)
        mcdp = np.max(mcdp, np.max(min_delta_ecdf))*100

    return mcdp
\end{lstlisting}
\end{algorithm}
\end{minipage}
\end{center}

\newpage
\begin{center}
\begin{minipage}{.9\linewidth}
\begin{algorithm}[H]
\caption{Implementation of the Approximate Calculation of $\widehat{\mathrm{MCDP}}(\epsilon)$ in Algorithm \ref{alg:approximate}}\label{code:approximate}
\begin{lstlisting}
from statsmodels.distributions.empirical_distribution import ECDF

def MCDP_approximate(y_pred, s, epsilon, K):
    # y_pred: continuous model predictions in [0,1]
    # s: binary sensitive attribute in {0,1}
    # epsilon: neighborhood hyper-parameter
    # K: sampling frequency

    assert epsilon>0
    y_pred, s = y_pred.ravel(), s.ravel()
    y_pre_1, y_pre_0 = y_pred[s==1], y_pred[s==0]

    # calculate the empirical distribution functions (line 1)
    ecdf0, ecdf1 = ECDF(y_pre_0), ECDF(y_pre_1)
    delta = epsilon / K             # set the step-size (line 2)
    samples = np.arange(0,1,delta)  # sampling prediction points

    # initialize the mcdp value (line 3)
    mcdp = np.min(np.abs(ecdf0(samples[:K+1])-ecdf1(samples[:K+1])))
    # pre-compute the deltaF values for sampled points
    delta_ecdf = np.abs(ecdf0(samples) - ecdf1(samples))

    # traverse all consecutive 2K sampled points (lines 4-5)
    indices = np.arange(1,np.ceil(1/delta)-2*K+1).astype(int)   # line 4
    indices = indices + np.arange(2*K).reshape((-1,1))          # line 5
    # keep the maximum of the minimal deltaF value (line 6)
    mcdp = np.max(mcdp, np.max(np.min(delta_ecdf[indices],axis=0)))

    return mcdp
\end{lstlisting}
\end{algorithm}
\end{minipage}
\end{center}

\end{document}